\documentclass{article}

\usepackage{microtype}
\usepackage{graphicx}
\usepackage{subcaption}
\usepackage{booktabs} 

\usepackage[preprint]{icml2026}

\usepackage{amsmath}
\usepackage{amssymb}
\usepackage{mathtools}
\usepackage{amsthm}

\theoremstyle{plain}
\newtheorem{theorem}{Theorem}[section]

\newtheorem{lemma}[theorem]{Lemma}

\theoremstyle{definition}
\newtheorem{definition}[theorem]{Definition}

\theoremstyle{remark}
\newtheorem{remark}[theorem]{Remark}
\newtheorem{example}[theorem]{Example}

\usepackage[textsize=tiny]{todonotes}

\icmltitlerunning{Noisy Models For Contrastive Learning}

\usepackage[utf8]{inputenc} 
\usepackage[T1]{fontenc}    
\usepackage{url}            
\usepackage{booktabs}       
\usepackage{amsfonts,bbm}       
\usepackage{nicefrac}       
\usepackage{microtype}      
\usepackage{amsmath}
\usepackage{amssymb}
\usepackage{thmtools}
\usepackage[dvipsnames]{xcolor}
\usepackage{lineno}
\usepackage{times}
\usepackage{multirow}
\usepackage{thm-restate}
\usepackage{enumitem}

\usepackage{tikz}
\usetikzlibrary{calc, angles, quotes, backgrounds, patterns, intersections, positioning}

\definecolor{colX}{RGB}{0, 114, 178}       
\definecolor{colProj}{RGB}{0, 158, 115}    
\definecolor{colPrime}{RGB}{213, 94, 0}    
\definecolor{colSample}{RGB}{100, 100, 100} 

\tikzset{
    boundaryStyle/.style={ultra thick, black},
    secondaryBoundary/.style={thick, black},
    projLine/.style={dashed, thick, gray},
    connectLine/.style={dotted, ultra thick, gray},
    pointStyle/.style={circle, inner sep=0pt, minimum size=6pt},
    sampleSpace/.style={thick, blue!40!gray, fill=blue!2}
}

\DeclareMathOperator{\CE}{CE}

\DeclareMathOperator{\argmax}{argmax}
\DeclareMathOperator{\argmin}{argmin}
\DeclareMathOperator{\expc}{exp}
\DeclareMathOperator{\dist}{dist}
\DeclareMathOperator{\passive}{passive}
\DeclareMathOperator{\act}{active}
\DeclareMathOperator{\proj}{proj}
\DeclareMathOperator{\prop}{prop}
\DeclareMathOperator{\app}{app}
\let\vec\mathbf

\newcommand{\CEmin}{\CE^{d}_{\mathrm{min}}}
\newcommand{\CEnoisy}{\CE^{d, f}_{\app}}
\newcommand{\CEprob}{\CE^{d, f}_{\prop}}
\newcommand{\CEnoisyf}[1]{\CE^{d, #1}_{\app}}
\newcommand{\CEprobf}[1]{\CE^{d, #1}_{\prop}}

\newcommand{\VOL}{\mathrm{VOL}}

\newcommand{\cX}{{\mathcal X}}
\newcommand{\cC}{{\mathcal C}}

\newcommand{\cA}{{\mathcal A}}

\newcommand{\err}{\mathrm{err}}

\newcommand{\reals}{\mathbb{R}}

\newcommand{\thresh}{\cC_{\mathrm{thresh}}}
\newcommand{\HS}{\cC_{\mathrm{HS}}}
\newcommand{\HHS}{\cC_{\mathrm{HHS}}}

\newcommand{\STOP}[3]{\tau_{#1}\!\left(#2 \rightarrow #3\right)}

\newcommand{\fexp}{\mathbbm{e}}
\newcommand{\fpol}{\mathbbm{p}}

\usepackage{xspace}

\begin{document}

\twocolumn[
  \icmltitle{Learning Half-Spaces from Perturbed Contrastive Examples}



  \icmlsetsymbol{equal}{*}

  \begin{icmlauthorlist}
    \icmlauthor{Aryan Alavi Razavi Ravari}{equal,MPI}
    \icmlauthor{Farnam Mansouri}{equal,waterloo,vector}
    \icmlauthor{Yuxin Chen}{chicago}
    \icmlauthor{Valentio Iverson}{waterloo}
    \icmlauthor{Adish Singla}{MPI}
    \icmlauthor{Sandra Zilles}{regina,amii}
  \end{icmlauthorlist}


   \icmlaffiliation{waterloo}{David R. Cheriton School of Computer Science, University of Waterloo, Waterloo, ON, Canada}
  \icmlaffiliation{vector}{Vector Institute, Toronto, ON, Canada}
  \icmlaffiliation{regina}{Department of Computer Science, University of Regina, Regina, SK, Canada}
  \icmlaffiliation{amii}{Alberta Machine Intelligence Institute (Amii), Edmonton, AB, Canada}
  \icmlaffiliation{chicago}{Department of Computer Science, University of Chicago, Chicago, IL, USA}
  \icmlaffiliation{MPI}{Max Planck Institute for Software Systems, Saarbruecken, Germany}

  \icmlcorrespondingauthor{Farnam Mansouri}{f5mansou@uwaterloo.ca}
  \icmlcorrespondingauthor{Sandra Zilles}{sandra.zilles@uregina.ca}

  \icmlkeywords{Contrastive Learning}

  \vskip 0.3in
]



\printAffiliationsAndNotice{\icmlEqualContribution}  

\begin{abstract}
  We study learning under a two-step contrastive example oracle, as introduced by \citet{MansouriSSCZ25}, where each queried (or sampled) labeled example is paired with an additional contrastive example of opposite label. While Mansouri et al.\ assume an idealized setting, where the contrastive example is at minimum distance of the originally queried/sampled point, we introduce and analyze a mechanism, parameterized by a non-decreasing noise function $f$, under which this ideal contrastive example is perturbed. The amount of perturbation is controlled by $f(d)$, where $d$ is the distance of the queried/sampled point to the decision boundary. Intuitively, this results in higher-quality contrastive examples for points closer to the decision boundary. We study this model in two settings: (i) when the maximum perturbation magnitude is fixed, and (ii) when it is stochastic.
  
  For one-dimensional thresholds and for half-spaces under the uniform distribution on a bounded domain, we characterize active and passive contrastive sample complexity in dependence on the function~$f$. We show that, under certain conditions on $f$, the presence of contrastive examples speeds up learning in terms of asymptotic query complexity and asymptotic expected query complexity. 
\end{abstract}

\section{Introduction}

Contrastive information---in the form of two \emph{similar}\/ data points of \emph{opposite}\/ class---is valuable in the context of learning, in various ways. Firstly, pairing a labeled example $(x,\ell(x))$ with a highly similar point $x'$ of opposite label can provide the learner with information on the location of the decision boundary and thus greatly speed up learning, as has been demonstrated in a multitude of application settings such as recommender systems \citep{TanXGLCZ21}, reinforcement learning \citep{HuberDMOA23}, NLP \citep{wangQYGZFS24,MargatinaVBA21}, learning formal languages \citep{Tirnauca09}, or program synthesis \cite{AbateDKKP18}. Secondly, contrastive information can explain the predictions of a machine-learned classifier: If such classifier predicts label $\ell(x)$ for datum $x$, then a datum $x'$ similar to $x$, but of opposite label, suggests that the features in which $x$ and $x'$ differ are crucial for the prediction of the label for $x$ \cite{JiangLRT24}. Thirdly, in causal inference and decision-making problems, contrastive examples are used in the form of counterfactuals, by deriving highly valuable information from simulating ``what-if'' situations that contrast with actual observed situations \citep{AguileraLLR25}. 

Hence, recently the learning theory community has begun to develop and analyze formal models of learning from contrastive examples. In this context, researchers typically chose a very specific learning setting or hypothesis class \citep{alon23,WangSC21,PoulisD17,DasguptaS20,WangSC21}, or studied the effect of specific model or environment aspects in learning, such as the choice of loss function or the provision of negative examples \citep{AshGKM22,HaoChenWGM21}. 

Most relevant to our work is a more generic recent model proposed by \citet{MansouriSSCZ25}, in which an active learner queries the label for a data point $x$, and is in addition provided with a contrastive example right on the decision boundary, namely the closest point $x'$ to $x$ that has a label opposite from  $\ell(x)$.\footnote{Technically, in $\reals^k$, it may happen that there is no closest $x'$ of opposite label. In that case, the contrastive data point $x'$ will be the limit of a Cauchy sequence of points whose label differs from $\ell(x)$, even if this results in $\ell(x')=\ell(x)$.} Crucial is that the learner knows the rule by which contrastive examples are chosen. Mansouri et al.\ showed that this setting makes the learner very powerful, with sample complexity drastically shrinking; for example, only a single query is needed in order to learn a linear half-space. This suggests that Mansouri et al.'s model may, in some cases, trivialize learning.

Moreover, in practice it is often unrealistic to assume that an oracle can provide the learner with exactly the closest point $x'$ to $x$ that has a label opposite from  $\ell(x)$. Thus the drastic improvements in query complexity, due to contrastive examples, may not be attainable in practice either. We therefore propose to study variants of Mansouri et al.'s model, where the contrastive example presented to the learner is a perturbed version of the ``ideal'' one from the original model. Specifically, we assume that the amount of possible perturbation of the contrastive example for a point $x$ may increase with the distance $r$ of $x$ from the decision boundary. The relationship between $r$ and the permissible amount of perturbation is governed by a function $f$, by which we parameterize our model. Intuitively, the closer $x$ is to the decision boundary, the higher the ``quality'' of the contrastive example for $x$. We study two mechanisms based on this idea, a probabilistic one and a non-probabilistic one.

While \citet{MansouriSSCZ25} considered only active learners, which always select the data point $x$ for which both a label and a contrastive example will be provided, we study both active and passive settings. 

Given a concept class $\mathcal{C}$, measures of interest in our study are (a) the number of data points required so that the probability of inferring an $\varepsilon$-approximation of the target concept in $\mathcal{C}$ is at least $1-\delta$, in dependence on $(\varepsilon,\delta)\in(0,1]^2$, and (b) the number of data points required so that the expected value of the error between the conjecture and the target concept in $\mathcal{C}$ is at most $\varepsilon$, in dependence on $\varepsilon\in(0,1]$. 

Since linear half-spaces are popular objects of study in computational learning theory, and many real-world machine learning solutions are based on linear classifiers, we focus our analysis exclusively on classes of (i) one-sided threshold functions in $\reals$, and (ii) linear half-spaces (over a unit ball in $\reals^k$, both with and without homogeneity assumption). Our results complement a rich literature on learning half-spaces \emph{without}\/ contrastive examples; in particular we compare our results to those from studies on learning half-spaces with  membership queries \cite{dasgupta2005analysis, balcan2007margin, BalcanL13, hopkins2020power, diakonikolas2024active} We demonstrate that our models of perturbed contrastive examples often (but not always) yield remarkable improvements (in terms of the measures mentioned above) compared to learning without contrastive examples. The improvements here depend on the choice of the function $f$ that determines the permissible amount of perturbation of the ``ideal'' contrastive examples. At the same time, our models do not trivialize learning of half-spaces (by contrast with Mansouri et al.'s idealized model, which allows for learning half-spaces with just a single contrastive example). Thus, our proposed approach makes a significant step towards practically relevant models of learning from contrastive examples.

\section{Preliminaries}

Let $(\cX, d)$ be a metric space with bounded diameter $\sup_{x, x'} d(x, x')$
and let $\mathcal{U}[\cX]$ be the uniform distribution over $\cX$. In this paper, we assume that $d$ is the Euclidean distance.
Further, let $\ell \subseteq \cX$ be a labeling rule.
For any $C \subseteq \cX$ we define the error of $C$ w.r.t.\ $\ell$ 
as 
$\err(C, \ell) := \Pr_{x \sim \mathcal{U}[\cX]}[ \ell(x) \neq C(x)]$. Here, we implicitly identify a subset of $\cX$ (i.e., a concept) with its binary indicator function.

This paper extends the model for learning with contrastive examples that was proposed by \cite{MansouriSSCZ25}. In said model, a learner trying to identify a labeling rule $\ell$ has access to a two-step oracle. The first step corresponds to an oracle providing a labeled training data point $(x,\ell(x))$ (this case is called \emph{passive learning}) or providing the label $\ell(x)$ for a data point $x\in\cX$ of the learner's choice (\emph{active learning}); the resulting data point provided in this step is called the \emph{primary example}. The second step provides the learner with a \emph{contrastive example}\/ $(x',\ell(x'))$ to the previously selected primary example $(x,\ell(x))$. This step is governed by a selection mechanism $\CE$ that describes how the contrastive examples are chosen; we define various such mechanisms below.

The learner thus collects a sequence of pairs of (labeled) examples $[(x_i, \ell(x_i)),(x'_i, \ell(x'_i))]$ for $1\le i\le m$, where for all $i$, $(x_i, \ell(x_i))$ is a primary example, and $(x'_i, \ell(x'_i))$ is the corresponding contrastive example. Using both primary and contrastive examples, the target of the learner is to output a concept $C \subseteq \cX$  that minimizes $\err(C, \ell)$.


\paragraph{Primary Example Collection.} We analyze two well-studied settings for collecting primary examples (the first step of the oracle). (i) Passive learning: The instances $x_i$ of primary examples are sampled i.i.d.\ from $\mathcal{U}[\cX]$. In this setting the sequence of pairs of examples gathered by the learner is simply an unordered set $S$ of pairs of examples. 
(ii) Active learning: For each $i \in [m]$, the learner chooses an $x_i \in \cX$, for which the primary oracle then provides the label $\ell(x_i)$. The choice of $x_i$ may depend on the sequence of previously seen example pairs $[((x_j, \ell(x_j)),(x'_j, \ell(x'_j))]$, $1\le j< i$, i.e., the learner is adaptive.

\paragraph{Contrastive example collection.} We study three different options for the mechanism by which contrastive examples are selected by the oracle. The first one, proposed in the literature, describes an ``idealized'' setting with an often unrealistically strong oracle; we then propose two variations using a weaker oracle. 
\begin{enumerate}
    \item \textbf{Minimum Distance Model (MDM)} The minimum distance mechanism, denoted by $\CEmin$, was introduced by \cite{MansouriSSCZ25}. Given a primary example $(x,\ell(x)) \in \cX\times \{0,1\}$, the oracle returns any contrastive example $(x',\ell(x'))$ for which $x'$ belongs to 
    $$
    \begin{aligned}
        \CEmin(x):=&\arg\min\{d(x,x')\ |\ x' \in \cX,  \\
        & x'=\lim_{k\rightarrow\infty}x'_k 
         \mbox{ for a Cauchy sequence }\\
        &(x'_k)_k\mbox{ with }\ell(x'_k) \neq \ell(x) \mbox{ for all }k \}\,.
    \end{aligned}    
    $$ 
    \item \textbf{Deterministic Approximate Minimum Distance Model (Deterministic AMDM)} We introduce an approximate version $\CEnoisy$ of  $\CEmin$, parameterized by a non-decreasing function $f: \reals^+ \rightarrow \reals^+$.  Here, the oracle returns a perturbed version of the contrastive example selected by the $\CEmin$ mechanism, where the perturbation is dependent on the distance of the primary example $x$ to the decision boundary of $\ell$. In addition, we impose the constraint that the label $\ell(x')$ of the contrastive example be different from $\ell(x)$. Intuitively, the closer $x$ is to the decision boundary, the higher the quality of the contrastive example $(x',\ell(x'))$, in the sense that $x'$ is not very far from an ``ideal'' example in $\CEmin(x)$.

    More formally, given a primary example $(x,\ell(x))$, the oracle chooses as its contrastive example any $(x',\ell(x'))$ where $x'$ is in 
    $$
    \begin{aligned}
        \CEnoisy(x):=\{x' \in \cX\mid d(x', x^d_{\min}) \leq f(d(x, x^d_{\min})) \\
    \mbox{ for some }x^d_{\min}\in\CEmin(x),\\
    \mbox{ and }\ell(x')\ne\ell(x)\}\,
    \end{aligned}$$

   \item \textbf{Probabilistic Approximate Minimum Distance Model (Probabilistic AMDM)} The $\CEnoisy$ mechanism makes no assumption about how the contrastive example is chosen from the set $\CEnoisy(x)$, enforcing an adversarial analysis in our study below. By contrast, we define a probabilistic version $\CEprob$ of $\CEnoisy$, again parameterized by a non-decreasing function $f: \reals^+ \rightarrow \reals^+$. Formally, for any $x \in \cX$, let $\mathcal{D}_x$ be a distribution over points in $\cX$ whose labels differ from $\ell(x)$, such that $\mathbb{E}_{x' \sim \mathcal{D}_x}[d(x', x^d_{\min})] \leq f(d(x, x^d_{\min}))$ for some $x^d_{\min}\in\CEmin$. In the $\CEprob$ mechanism, the oracle independently samples $x'$ from $\mathcal{D}_x$, and provides the contrastive example $(x',\ell(x'))$ for $(x,\ell(x))$. 
\end{enumerate}

For all settings, we assume realizability, i.e., the learner outputs hypotheses from a concept class $\cC$ over $\cX$, such that $\ell\in\cC$. In the following, we formally define various sample complexity measures that we aim to study.

\begin{definition}
Let $\cC$ be a concept class over $\cX$ and let $\CE$ refer to a mechanism for selecting contrastive examples. Let $\mathcal{A}$ be a passive (resp.\ active) learning algorithm.
The \emph{contrastive sample complexity} of $\mathcal{A}$ on $\cC$ with respect to $\CE$, denoted by 
$\mathcal{M}_{\Pr}[\cC, \CE, \mathcal{A}]$, is the function that maps any $(\varepsilon,\delta)\in(0,1]^2$ to the minimal $m^* \in \mathbb{N}$ such that, for every $\ell \in \cC$ and every $m \geq m^*$, we have
\[
\Pr\!\left[ \err\big(C_{\mathcal{A},m}, \ell\big) > \varepsilon \right] < \delta\,,
\]
where $C_{\mathcal{A},m}$ is the final hypothesis output by $\mathcal{A}$ after interacting with the two-step oracle for $m$ steps. The passive (resp.\ active) contrastive sample complexity of $\cC$ with respect to $\CE$, denoted by $\mathcal{M}_{\Pr,\passive}[\cC, \CE]$ (resp.\ $\mathcal{M}_{\Pr,\act}[\cC, \CE]$), 
is defined by 
$$
\mathcal M_{\Pr,\mathrm{passive}}[\mathcal C,\mathrm{CE}](\varepsilon,\delta)
:=\inf_{\mathcal{A}~\mathrm{passive}}\mathcal M_{\Pr}[\mathcal C,\mathrm{CE},\mathcal{A}](\varepsilon,\delta),
$$
$$
\mathcal M_{\Pr,\mathrm{active}}[\mathcal C,\mathrm{CE}](\varepsilon,\delta)
:=\inf_{\mathcal{A}~\mathrm{active}}\mathcal M_{\Pr}[\mathcal C,\mathrm{CE},\mathcal{A}](\varepsilon,\delta).\footnote{For active learning with deterministic learning algorithms, and $\mathrm{CE}\in\{\CEmin,\CEnoisy\}$, the dependency on $\delta$ can be dropped, since there is no stochasticity in the learning process. We choose to ignore this, for consistency of presentation.}
$$
\end{definition}


\begin{definition}
Let $\cC$ be a concept class over $\cX$ and let $\CE$ refer to a mechanism for selecting contrastive examples. Let $\mathcal{A}$ be a passive (resp.\ active) learning algorithm.
The \emph{expected contrastive sample complexity} of $\mathcal{A}$ on $\cC$ with respect to $\CE$, denoted by 
$\mathcal{M}_{\expc }[\cC, \CE, \mathcal{A}]$, is the function that maps any $\varepsilon \in(0,1]$ to the minimal $m^* \in \mathbb{N}$ such that, for every $\ell \in \cC$ and every $m \geq m^*$, we have
\[
\mathbb{E} \!\left[ \err\big(C_{\mathcal{A},m}, \ell\big) \right] \leq \varepsilon.
\]
where $C_{\mathcal{A},m}$ is the final hypothesis output by $\mathcal{A}$ after interacting with the two-step oracle for $m$ steps. The passive (resp.\ active) expected contrastive sample complexity of $\cC$ with respect to $\CE$, denoted by $\mathcal{M}_{\expc,\passive}[\cC, \CE]$ (resp.\ $\mathcal{M}_{\expc,\act}[\cC, \CE]$), 
is defined by 
$$
\mathcal M_{\expc,\mathrm{passive}}[\mathcal C,\mathrm{CE}](\varepsilon)
:=\inf_{\mathcal{A}~\mathrm{passive}}\mathcal M_{\expc}[\mathcal C,\mathrm{CE},\mathcal{A}](\varepsilon),
$$
$$
\mathcal M_{\expc,\mathrm{active}}[\mathcal C,\mathrm{CE}](\varepsilon)
:=\inf_{\mathcal{A}~\mathrm{active}}\mathcal M_{\expc}[\mathcal C,\mathrm{CE},\mathcal{A}](\varepsilon).
$$
\end{definition}

In this work, we study three concept classes. 
Let $\mathcal{B}(\vec x,r) := \{\vec z \in \reals^k : \|\vec z-\vec x\| \le r\}$ denote the (closed) Euclidean ball of radius $r$ centered at $\vec x$.
 
(a) One-dimensional threshold functions over $\cX=[0,1]$:
\[
\thresh := \left\{\, \mathbbm{1}\{x \le \theta\} \;\middle|\; \theta \in [0,1] \,\right\}.
\]
(b) Homogeneous half-spaces over $\cX=\mathcal{B}(\vec 0_k,\tfrac{1}{2})$, where $\vec 0_k$ is the all-zero vector in $\reals^k$:
\[
\HHS := \left\{\, \mathbbm{1}\{\langle \omega, \vec x \rangle \ge 0\} \;\middle|\; \omega \in \reals^k \,\right\}.
\]
(c) Half-spaces over $\cX=\mathcal{B}(\vec 0_k, \tfrac{1}{2})$:
\[
\HS := \left\{\, \mathbbm{1}\{\langle \omega, \vec x \rangle \ge b\} \;\middle|\; \omega \in \reals^k,\; b \in \reals \,\right\}.
\]

All of the results in this paper are stated for a general choice of noise function $f$. 
However, Table~\ref{tab:active} provides an overview of our active learning sample-complexity bounds for two representative choices of $f$ (polynomial $f$ and exponential $f$). 
Detailed calculations for these specific choices are deferred to Appendix~\ref{apdx:different-f}. Also, due to space constraints, details of all the proofs (except for the proof of Theorem~\ref{thm:active-noisy-min-upper-HS}) are deferred to the Appendix.
Further, Table~\ref{tab:passive} also summarizes our passive learning bounds for a general choice of $f$.

\begin{table*}[hbt!]
    \caption{Summary of our sample complexity bounds for contrastive active learning results with respect to $\CEnoisy$ and $\CEprob$. Results are stated for (A) $1$D-thresholds, (B) homogeneous half-spaces, and (C) half-spaces,
    and for two representative classes of noise functions $f$. 
    }
    \begin{subtable}[t]{\linewidth}
        \caption{Active contrastive sample complexity with respect to $\CEnoisy$. We additionally compare to membership query complexity and to the minimum distance model (corresponding to $f(r)=0$). The membership query bounds are due to existing results  \cite{BalcanL13,hopkins2020power}.} 
        \centering
        \begin{tabular}{|l|c|c|c|c|}
        \hline
             & \multicolumn{1}{c|}{\textbf{Non-contrastive}} 
             & \multicolumn{3}{c|}{\textbf{Active Contrastive Queries}} \\
            \cline{2-5}
            & Membership Query & $\frac{r^{1 + c'}}{4} \leq f(r) \leq \frac{r^{1 + c}}{4}$ & $f(r) = \frac{e^{-1/r}}{4}$ & $f(r) = 0$ \\\hline
             (A) 1D-Thresholds  & $\Theta(\log \frac{1}{\varepsilon})$ &  $\Theta(\log \log \frac{1}{\varepsilon})$ & $\Theta(\log^* \frac{1}{\varepsilon})$ & 1 \\ \hline
              (B) Homogeneous Half-spaces & $\Theta(k \log \frac{1}{\varepsilon})$  & 1 & 1 & 1 \\ \hline
              \multirow{2}{*}{(C) Half-spaces} & \multirow{2}{*}{$\Omega \left( \left(\frac{1}{\varepsilon}\right)^{\frac{k - 1}{k + 1}} \right)$ }  &  $\Omega (\log \log \frac{1}{\varepsilon} -  \log k)$ & $\Omega (\log^* \frac{1}{\varepsilon} -  \log^* k)$ & \multirow{2}{*}{1} \\
             &  & $ O (\log \log \frac{1}{\varepsilon} + \log k)$ & $O (\log^* \frac{1}{\varepsilon} +  \log^* k)$&  \\
             \hline
            \end{tabular}

        \label{tab:noisy-active}
    \end{subtable}
    
    \begin{subtable}[t]{\linewidth}
        \caption{Overview of our active learning results with respect to $\CEprob$. We report three quantities:
(i) the active contrastive sample complexity $\mathcal{M}_{\Pr,\act}$,
(ii) the active expected contrastive sample complexity $\mathcal{M}_{\exp,\act}$,
and (iii) the expected sample requirement for accuracy 
$\varepsilon$ of algorithm $\mathcal{N}_{\act}$ (defined formally in Section~\ref{sec:active-prob-acc-req}).
}
        \centering
        \begin{tabular}{|lc|c|c|}
            \hline
            \multicolumn{2}{|c|}{}          
            & $\frac{r^{1 + c'}}{4} \leq f(r) \leq \frac{r^{1 + c}}{4}$                                             & $f(r) = \frac{e^{-1/r}}{4}$                                                              \\ \hline
            \multicolumn{1}{|l|}{\multirow{4}{*}{(A) 1D-Thresholds}} & \multirow{2}{*}{$\mathcal{M}_{\Pr, \act}$} 
            & $\Omega\left(\log \log \frac{1}{\varepsilon} + \sqrt{\log \frac{1}{\delta}} \right)$                              & $\Omega \left(\log^*\frac{1}{\varepsilon} + \log \log \frac{1}{\delta} \right)$                             \\
            \multicolumn{1}{|c|}{}                           &      & $O \left(\log \log \frac{1}{\varepsilon} + \log \frac{1}{\delta} \right)$      & $O \left(\log^*\frac{1}{\varepsilon} + \log \frac{1}{\delta}  \right)$      \\ \cline{2-4} 
            \multicolumn{1}{|c|}{}                            & $\mathcal{M}_{\exp, \act}$  
            & $\Theta \left(\sqrt{\log \frac{1}{\varepsilon}}\right)$                           & $\Theta \left(\log \log \frac{1}{\varepsilon}\right)$                          \\ \cline{2-4} 
            \multicolumn{1}{|c|}{}                            & $\mathcal{N}_{\act}$ 
            & $O \left(\log \log \frac{1}{\varepsilon}\right)$                                  & $O \left(\log^* \frac{1}{\varepsilon}\right)$                                  \\ \hline
             \multicolumn{1}{|l|}{ (B) Homogeneous Half-spaces} & $\mathcal{M}_{\Pr, \act}$ & 1 & 1 \\ \hline 
            \multicolumn{1}{|l|}{\multirow{3}{*}{(C) Half-spaces}} & \multirow{2}{*}{$\mathcal{M}_{\Pr, \act}$} 
            & $\Omega\left(\log \log \frac{1}{\varepsilon} - \log k \right)$                    & $\Omega\left(\log^* \frac{1}{\varepsilon} - \log^* k \right)$                  \\
            \multicolumn{1}{|c|}{}  &  
            & $O \left(\log \log \frac{1}{\varepsilon} + \log k + \log \frac{1}{\delta}\right)$ & $O\left(\log^*\frac{1}{\varepsilon} + \log^* k + \log \frac{1}{\delta}\right)$ \\ \cline{2-4} 
            \multicolumn{1}{|c|}{}                            & 
            $\mathcal{N}_{\act}$
            & 
            $O\left(\log \log \frac{1}{\varepsilon} + \log k \right)$
            & 
            $O\left(\log^* \frac{1}{\varepsilon} + \log^* k \right)$
            \\ \hline
        \end{tabular}
        \label{tab:prob-active}               
    \end{subtable}

    \label{tab:active}
\end{table*}

\begin{table*}[hbt!]
        \caption{Summary of our passive contrastive learning results. We report (i) the passive contrastive sample complexity $\mathcal{M}_{\Pr,\passive}$ with respect to $\CEnoisy$ and (ii) the passive expected contrastive sample complexity $\mathcal{M}_{\exp,\passive}$ with respect to $\CEprob$. Here $g$ is defined by $g(x)=\frac{f(x)}{x}$.
        }
        \centering
        \begin{tabular}{|l|c|c|}
        \hline
        & $\mathcal{M}_{\Pr, \passive}(\cC, \CEnoisy)$  & $\mathcal{M}_{\exp, \passive}(\cC, \CEprob)$ \\ \hline
        (A) 1D-Thresholds              & \begin{tabular}[c]{@{}l@{}}$O \!\left(\frac{\log(1/\delta)}{\min(f^{-1}(\varepsilon), \varepsilon)}\right)$\\ $\Omega\!\left(\frac{\log(1/\delta)}{\min\left(f^{-1}\!\left(2 \varepsilon \right), \varepsilon \right)}\right)$\end{tabular} & \begin{tabular}[c]{@{}l@{}}$O\!\left(\frac{\log(1/\varepsilon)}{\min(f^{-1}(\varepsilon / 2), \varepsilon / 2)}\right)$\\ $\Omega \!\left(\frac{1}{\min(f^{-1}(4\varepsilon), 2\varepsilon)}\right)$\end{tabular} \\ \hline
        (B) Homogeneous Half-spaces & $O \!\left(\frac{\log(1/\delta)}{g^{-1}\left(\pi \varepsilon \right)}\right)$   & $O \!\left(\frac{\log(1/\varepsilon)}{g^{-1}\left(\pi \varepsilon / 2 \right)}\right)$  \\ \hline
        \end{tabular}

        \label{tab:passive}
    \end{table*}

\paragraph{Further Notation.}
Given any measurable $V \subseteq \reals^k$, we denote by $\VOL(V)$ the volume of $V$. For a vector $\vec x \in \reals^k$ and a halfspace $C(\vec z) := \mathbbm{1}\{\langle \omega, \vec z \rangle \leq b\}$, where $\omega \in \reals^k$ and $b \in \reals$, define $\dist(\vec x, C) := \frac{|\langle \omega, \vec x \rangle - b|}{\|\omega\|}$ to be the distance between $\vec x$ and the boundary hyperplane of $C$.
Moreover, for a function \( f : \reals^{+} \to \reals^{+} \) and parameters \( u,b \ge 0 \), let
\[
\STOP{f}{u}{b}
\;:=\;
\min \bigl\{ n \ge 0 \;\big|\; f^{(n)}(u) \le b \bigr\},
\]
where \( f^{(n)} \) denotes the \(n\)-fold composition of \(f\) with itself, with the convention that
\( f^{(0)}(u) := u \).
Intuitively, $\STOP{f}{u}{b}$ is the number of times one must iterate the contraction $f$ to reduce it from $u$ to $b$. If no \( n \) with $f^{(n)}(u) \le b$ exists, we define \( \STOP{f}{u}{b} := \infty \).


\section{Deterministic Approximate Minimum Distance Model} \label{sec:noisy}
\subsection{Active Contrastive Sample Complexity} \label{sec:active-noisy}
In this subsection, we study the active contrastive sample complexity 
with respect to the deterministic AMDM.  We begin by analyzing threshold functions. Under membership queries, if at time $t$ the version space for the target threshold is an interval of length $r$, an active learner can always query the midpoint of this interval to shrink the version space to an interval of length $r/2$. In the following Theorem, we show that under the deterministic AMDM, each query can shrink the version space to an interval of length at most $\tilde f(r)$, where $\tilde f$ is defined in Equation~\eqref{eq:active-noisy-min-thresh}.

\begin{restatable}{theorem}{ActiveNoisyThresh}
    \label{thm:active-noisy-min-thresh}
    Let 
    $f: \reals^{\geq 0} \to \reals^{\geq 0}$ be a non-decreasing and invertible function \footnote{
Invertibility assumptions are made only for ease of presentation. 
Without invertibility, all proofs continue to hold by replacing every $h^{-1}(r)$ for any function $h$ with 
$\inf\{x \ge 0 : h(x) \ge r\}$.}. 
Define $\tilde f: \reals^{\geq 0} \to \reals^{\geq 0}$ by
    \begin{equation}\label{eq:active-noisy-min-thresh}
        \tilde f(r) := f \circ (f + \mathbbm I)^{-1}\!\left(\tfrac{r}{2}\right),
    \end{equation}
    where $\mathbbm I$ denotes the identity function. 
    Then 
    \[
        \mathcal{M}_{\Pr, \act}\left[\thresh, \CEnoisy\right]( \varepsilon, \delta)
        = \STOP{\tilde f}{1}{2\varepsilon}\,.
    \]
   
\end{restatable}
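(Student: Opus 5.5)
The plan is to establish matching upper and lower bounds by tracking a single quantity: the length $r$ of the version space, which is always an interval of candidate thresholds. The first step is a one-query analysis. Suppose the current version space is an interval $[a,b]$ of length $r$ and the learner queries $x\in[a,b]$. By symmetry, consider the case $\ell(x)=1$, so $x\le\theta$ and $\CEmin(x)=\{\theta\}$. The oracle returns some $x'>\theta$ with $x'-\theta\le f(\theta-x)$. Set $D:=x'-x$. Then the constraints on $\theta$ are $\theta<x'$ and $(\theta-x)+f(\theta-x)\ge D$, that is, $\theta-x\ge (f+\mathbbm I)^{-1}(D)$. So the new version space is contained in
\[
\bigl[x+(f+\mathbbm I)^{-1}(D),\,x'\bigr),
\]
and every point of this interval is consistent. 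Its length is $D-(f+\mathbbm I)^{-1}(D)=f\bigl((f+\mathbbm I)^{-1}(D)\bigr)$, which is non-decreasing in $D$. The case $\ell(x)=0$ is the mirror image. I also record the output rule: output the midpoint of the version space. This gives error at most $\varepsilon$ exactly when the length is at most $2\varepsilon$. Conversely, if the length exceeds $2\varepsilon$, then for any output some consistent endpoint target yields error $>\varepsilon$.

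\textbf{Upper bound.} The learner always queries the midpoint of the current version space. Since $\theta$ and $x'$ both lie on the same side of $x$ (or $x'$ is constrained to $\cX$ and the returned information is intersected with the previous version space), we get $D\le r/2$ up to the boundary of the interval. Monotonicity then gives new length $\le f\circ(f+\mathbbm I)^{-1}(r/2)=\tilde f(r)$. Because $\tilde f$ is non-decreasing (a composition of non-decreasing maps), induction gives length $\le \tilde f^{(n)}(1)$ after $n$ queries. Hence $\STOP{\tilde f}{1}{2\varepsilon}$ queries suffice, deterministically and so for every $\delta$.

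\textbf{Lower bound.} I use an adversary that maintains the invariant ``the version space has length $\ge \tilde f^{(n)}(1)$''. Given a query $x$ in a version space of length $r$, one side of $x$ has length $\ge r/2$. The adversary places the label so that $\theta$ lies on that side. It then chooses $s$ with $s+f(s)=r/2$, commits to $\theta$ at distance $s$ from $x$, and returns $x'$ at distance $r/2$ from $x$, which lies inside the old interval. Every $\theta''$ in the resulting interval of length $f(s)=\tilde f(r)$ is consistent with all answers so far. Queries outside the version space are answered consistently and reveal nothing new. Iterating, after $n<\STOP{\tilde f}{1}{2\varepsilon}$ queries the length still exceeds $2\varepsilon$. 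Then some consistent target forces error $>\varepsilon$ for any output, including randomized learners, since the adversary can pick the target after seeing the output.

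The main obstacle I expect is the boundary and measure-zero bookkeeping. This includes open versus closed endpoints ($x'$ must have the opposite label, so the interval is half-open), and the effect of $x'$ being clipped to $[0,1]$. It also includes ensuring that the adversary's committed $\theta$ and $x'$ remain inside the current interval and $\cX$, while the residual length is exactly $\tilde f(r)$ rather than slightly less. I would handle this by letting the adversary keep the full consistent interval rather than a single $\theta$, and by checking that the error threshold ``$\le 2\varepsilon$'' matches the $\le b$ convention in $\STOP{\cdot}{\cdot}{\cdot}$.
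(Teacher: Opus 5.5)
Your one-query computation and overall plan (midpoint queries shrink the version space to at most $\tilde f(r)$; an adversary prevents shrinkage below $\tilde f(r)$) match the paper's proof. Writing the new length as $D-(f+\mathbbm I)^{-1}(D)=f((f+\mathbbm I)^{-1}(D))$ is a clean equivalent of the paper's $(f^{-1}+\mathbbm I)^{-1}$.

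\textbf{Out-of-interval queries.} Your lower bound contains a false step: such queries do \emph{not} reveal nothing new. The label is forced, but the contrastive example still constrains $\theta$. Take $f(u)=u^2$, version space $[0.4,0.6]$ and query $x=0.39$. Keeping $\theta=0.4$ consistent forces $x'\le 0.4001$, while keeping $\theta=0.59$ consistent forces $x'>0.59$, so no answer preserves the interval. Your in-interval rule does not cover this case either. When $a-x>r/2$, the point at distance $r/2$ from $x$ lies left of $a$, which would empty the version space. The repair is the paper's rule of answering with the far endpoint. For $x<a$, return $x'=b$, so $D=b-x\ge r$. The surviving interval $[\max(a,\,x+(f+\mathbbm I)^{-1}(D)),\,b)$ then has length $\min(r,\,f((f+\mathbbm I)^{-1}(D)))\ge\tilde f(r)$, by monotonicity and $\tilde f(r)\le r/2$. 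The same answer also works for every in-interval query with $x\le(a+b)/2$.

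\textbf{Upper bound.} Your claim ``$D\le r/2$'' is not literally true: the oracle may return $x'>b$ when $f(\theta-x)$ is large. State that in that case the new interval is $[x+(f+\mathbbm I)^{-1}(D),\,b]$, of length $r/2-(f+\mathbbm I)^{-1}(D)\le\tilde f(r)$.

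\textbf{Randomized learners.} Your closing remark that the lower bound covers them ``since the adversary can pick the target after seeing the output'' is incorrect. In $\mathcal M_{\Pr}$ the target $\ell$ is fixed before the learner's coins are tossed, so it cannot depend on a randomized output. In fact the stated equality fails for randomized learners when $\delta$ is close to $1$. With zero queries, outputting a uniformly random threshold gives $\Pr[\err>\varepsilon]\le 1-\varepsilon<1=\delta$ for every target, while $\STOP{\tilde f}{1}{2\varepsilon}\ge 1$ whenever $\varepsilon<1/2$. Both your adversary argument and the paper's establish the lower bound only for deterministic learners, as the paper's footnote on dropping $\delta$ implicitly assumes. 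You should restrict the claim accordingly.
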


Before moving to the analysis of (homogeneous) half-spaces we need to introduce a critical lemma.



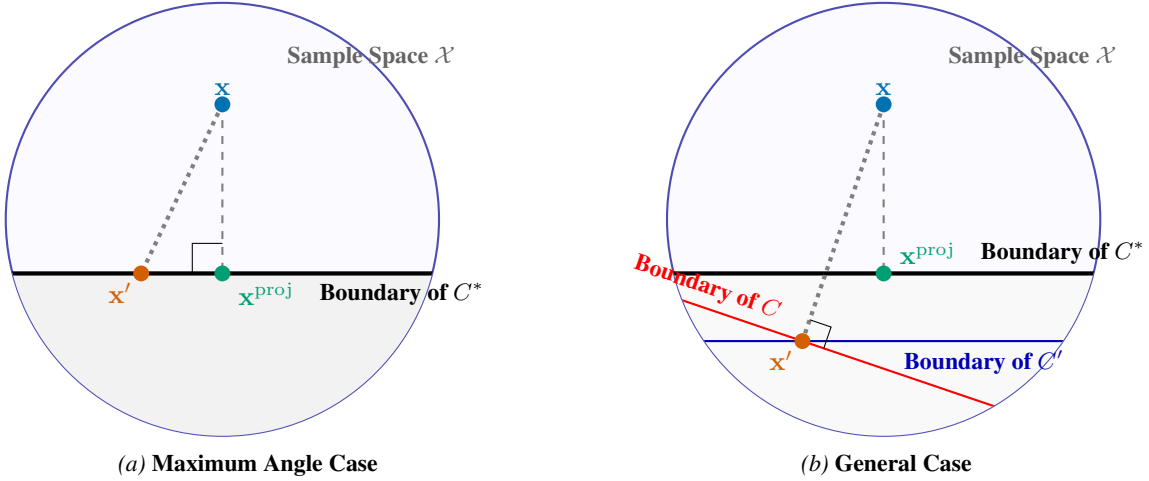
\begin{figure*}[hbt!]
    \centering
    
    \begin{subfigure}[b]{0.49\linewidth}
        \centering
        \begin{tikzpicture}[scale=0.9]
            \draw[sampleSpace] (0, 0.8) circle (3.2cm);
            \node[colSample, font=\bfseries \small] at (2.2, 3.2) {Sample Space $\mathcal{X}$};

            \coordinate (Origin) at (0, 0);       
            \coordinate (X) at (0, 2.5);          
            \coordinate (Xprime) at (-1.2, 0);    

            \begin{scope}
                \clip (0, 0.8) circle (3.2cm);
                \fill[gray!10] (-4, 0) rectangle (4, -3);
                \draw[boundaryStyle] (-4, 0) -- (4, 0);
            \end{scope}
            
            \node[black, below left, font=\bfseries \small] at (4, 0) {Boundary of $C^*$};

            \draw[projLine] (X) -- (Origin);
            \draw[connectLine] (X) -- (Xprime);

            \node[pointStyle, fill=colX] at (X) {};
            \node[above, colX, font=\bfseries] at (X) {$\vec x$};

            \node[pointStyle, fill=colProj] at (Origin) {};
            \node[below right, colProj, font=\bfseries] at (0.1, 0) {$\vec x^{\proj}$};

            \node[pointStyle, fill=colPrime] at (Xprime) {};
            \node[below left, colPrime, font=\bfseries] at (Xprime) {$\vec x'$};

            \pic [draw, angle radius=0.4cm] {right angle = X--Origin--Xprime};
        \end{tikzpicture}
        \caption{\textbf{Maximum Angle Case}}
        \label{fig:circ_max}
    \end{subfigure}
    \begin{subfigure}[b]{0.49\linewidth}
        \centering
        \begin{tikzpicture}[scale=0.9, font=\footnotesize]
            \draw[sampleSpace] (0, 0.8) circle (3.2cm);
            \node[colSample, font=\bfseries \small] at (2.2, 3.2) {Sample Space $\mathcal{X}$};

            \coordinate (Origin) at (0, 0);       
            \coordinate (X) at (0, 2.5);          
            \coordinate (Xprime) at (-1.2, -1.0); 
            \pgfmathsetmacro{\nslope}{-0.34}

            \begin{scope}
                \clip (0, 0.8) circle (3.2cm);
                \fill[gray!5] (-4, 0) rectangle (4, -3);
                
                \draw[boundaryStyle] (-4, 0) -- (4, 0);
                
                \draw[secondaryBoundary, blue!70!black] (-4, -1.0) -- (4, -1.0);
                


                \draw[secondaryBoundary, red] 
                    ($(Xprime) + (-4, {-4*\nslope})$) -- ($(Xprime) + (6, {6*\nslope})$);
            \end{scope}

            \node[black, above left, font=\bfseries \small] at (4, 0) {Boundary of $C^*$};
            \node[blue!70!black, below left, font=\bfseries \small] at (2.8, -1.0) {Boundary of $C'$};
            \node[red, above, font=\bfseries \small, rotate=-19] at ($(Xprime) + (-1.5, {-1.5*\nslope})$) {Boundary of $C$};

            \draw[projLine] (X) -- (Origin);
            \draw[connectLine] (X) -- (Xprime);

            \coordinate (AuxOnC) at ($(Xprime) + (1, \nslope)$);
            \pic [draw, thin, angle radius=0.3cm] {right angle = X--Xprime--AuxOnC};

            \node[pointStyle, fill=colX] at (X) {};
            \node[above, colX, font=\bfseries] at (X) {$\vec x$};

            \node[pointStyle, fill=colProj] at (Origin) {};
            \node[above right, colProj, font=\bfseries] at (0.1, 0) {$\vec x^{\proj}$};

            \node[pointStyle, fill=colPrime] at (Xprime) {};
            \node[below left, colPrime, font=\bfseries] at (Xprime) {$\vec x'$};
        \end{tikzpicture}
        \caption{\textbf{General Case}}
        \label{fig:circ_general}
    \end{subfigure}
    \caption{Geometric interpretation of Lemma~\ref{lem:err-dist}. 
Given a primary point $\vec x \in \mathcal{X}$, its contrastive example $\vec x'$, 
and a target half-space $C^*$, Panel~\eqref{fig:circ_max} illustrates the configuration maximizing the angle $\angle(\vec x',\vec x,\vec x^{\proj})$. 
Panel~\eqref{fig:circ_general} depicts the hyperplane $C$ induced by the pair $(\vec x,\vec x')$ in Lemma~\ref{lem:err-dist}(ii) in the general case, along with the auxiliary half-space $C'$ that is parallel to $C^*$ and passes through $\vec x'$. 
The half-space $C'$ is used to decompose $\err(C,C^*)$; further details are deferred to Appendix~\ref{apdx:active-noisy}.
}
\label{fig:lem-err-dist}
\end{figure*}


    
       




\begin{restatable}{lemma}{ErrDist} 
     \label{lem:err-dist}
    Let $\cX = \mathcal{B}\!\left(\vec 0_k,\tfrac{1}{2}\right)$, $\vec x \in \cX$, $\omega \in \reals^k$, and $b \in \reals$.
Consider the half-space
\(
C^*(\vec z) := \mathbbm{1}\{\langle \omega,\vec z\rangle \ge b\}.
\)
Let
\(
\vec x^{\proj}
:= \vec x - \frac{\langle \omega,\vec x\rangle - b}{\|\omega\|^{2}}\,\omega
\)
denote the orthogonal projection of $\vec x$ onto the boundary hyperplane of $C^*$.
Let $\vec x' \in \cX$ be any point such that $C^*(\vec x') \neq C^*(\vec x)$. Then

\begin{enumerate}[label=(\roman*)]
    \item The angle $\angle(\vec x',\vec x,\vec x^{\proj})$ is at most
    \hspace*{-\leftmargin}\begin{minipage}{\linewidth}
    \[
    \min\!\left\{
    \tan^{-1}\!\left(\frac{\|\vec x' - \vec x^{\proj}\|}{\|\vec x - \vec x^{\proj}\|}\right),
    \ \cos^{-1}\!\left(\frac{\|\vec x - \vec x^{\proj}\|}{\|\vec x - \vec x'\|}\right)
    \right\}.
    \]
    \end{minipage}

    \item Let $C$ be the half-space whose boundary hyperplane is perpendicular to $\vec x' - \vec x$ and passes through $\vec x'$, and such that $C(\vec x)=C^*(\vec x)$.
    Then
    \[
    \err(C, C^*) \le \frac{2^{k}\,\|\vec x' - \vec x^{\proj}\|}{\|\vec x - \vec x^{\proj}\|}.
    \]

    \item Suppose $b=0$. Let $C^0$ be a homogeneous half-space whose boundary hyperplane is perpendicular to $\vec x'-\vec x$ such that $C^0(\vec x)=C^*(\vec x)$. Then
    \[
    \err(C^0,C^*) \le \frac{\|\vec x' - \vec x^{\proj}\|}{\pi\,\|\vec x - \vec x^{\proj}\|}.
    \]
\end{enumerate}
\end{restatable}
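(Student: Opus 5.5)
The plan is to work in the two-dimensional picture spanned by the unit normal $u := \omega/\|\omega\|$, oriented so that $\vec x$ lies on the positive side of the boundary, and the tangential direction of $\vec x' - \vec x^{\proj}$. Write $h := \|\vec x - \vec x^{\proj}\|$, $a := \|\vec x' - \vec x^{\proj}\|$ and $\theta := \angle(\vec x',\vec x,\vec x^{\proj})$.

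For (i), decompose $\vec x' - \vec x = (\vec x' - \vec x^{\proj}) + (\vec x^{\proj} - \vec x)$ into a normal and a tangential part.
\begin{itemize}
\item Since $C^*(\vec x') \neq C^*(\vec x)$, the point $\vec x'$ lies on the closed opposite side of the hyperplane. Hence the component of $\vec x' - \vec x$ along $-u$ is at least $h$.
\item The tangential component of $\vec x' - \vec x$ coincides with the tangential component of $\vec x' - \vec x^{\proj}$, so its norm is at most $a$.
\end{itemize}
Together these give $\tan\theta \le a/h$. Also $\cos\theta = \langle \vec x' - \vec x, -u\rangle/\|\vec x' - \vec x\| \ge h/\|\vec x - \vec x'\|$. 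Both inequalities are stated for $\theta \in [0,\pi/2)$, which the first observation guarantees, so taking the minimum of the two bounds proves (i).

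For (iii), both $C^0$ and $C^*$ are homogeneous, so their boundaries pass through the centre of the ball. By rotational symmetry of the uniform distribution on the ball, $\err(C^0,C^*)$ equals the angle between their normals divided by $\pi$. The normal of $C^0$ is parallel to $\vec x' - \vec x$ and the normal of $C^*$ is parallel to $\vec x^{\proj} - \vec x$; the orientation condition $C^0(\vec x)=C^*(\vec x)$ makes the relevant angle exactly $\theta$. By (i), $\theta \le \tan^{-1}(a/h) \le a/h$, which gives the bound $a/(\pi h)$.

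For (ii), I will follow the decomposition of Figure~\ref{fig:lem-err-dist}. Let $C'$ be the half-space parallel to $C^*$ whose boundary passes through $\vec x'$, oriented like $C^*$. The triangle inequality for the symmetric difference gives $\err(C,C^*) \le \err(C,C') + \err(C',C^*)$, and I bound the two terms separately.
\begin{itemize}
\item \emph{Slab term.} The disagreement region of $C'$ and $C^*$ is a slab whose width is the distance of $\vec x'$ to the boundary of $C^*$, which is at most $a$. Its volume inside $\cX$ is at most $a\cdot \VOL_{k-1}(\text{ball of radius }\tfrac12)$. Dividing by $\VOL(\cX)$ gives a ratio of order $a\sqrt{k}$, which is far below $2^{k-1} a$. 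Since $h \le 1$ (the diameter of $\cX$ is $1$), this is at most $2^{k-1} a/h$.
\item \emph{Wedge term.} Both $C$ and $C'$ have boundaries through $\vec x'$, and the angle between their normals is again $\theta$. Their disagreement region is therefore a double wedge with vertex set a $(k-2)$-flat through $\vec x'$ and dihedral angle $\theta$. I will bound its volume in $\cX$ by enlarging $\cX$ to the ball $\mathcal B(\vec x',1)$, which contains $\cX$ because $\cX$ has diameter $1$. For that ball, polar coordinates centred at $\vec x'$ show the wedge occupies exactly a $\theta/\pi$ fraction of the volume, since it takes up $2\theta$ out of $2\pi$ in the normal $2$-plane.
\end{itemize}
The wedge term is therefore at most $\frac{\theta}{\pi}\cdot\frac{\VOL(\mathcal B(\vec x',1))}{\VOL(\cX)} = \frac{2^k\theta}{\pi} \le \frac{2^k a}{\pi h}$. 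Adding the slab term gives a total of at most $2^k a/h$.

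The main obstacle I anticipate is this wedge estimate. The vertex $\vec x'$ is not the centre of $\cX$, so the exact symmetry argument used in (iii) is unavailable. The crude enlargement to $\mathcal B(\vec x',1)$ is precisely what produces the $2^k$ factor. The remaining care is to keep the absolute constants (the $1/\pi$ from the wedge and the $\sqrt{k}$ from the slab) small enough to fit under $2^k$; this holds for every $k \ge 1$ with room to spare.
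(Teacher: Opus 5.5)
\paragraph{Parts (i) and (ii).} Your arguments here are sound and follow the paper's route. Part (i) is a cleaner, fully rigorous version of the paper's extremal argument (``the angle is maximised when $\vec x'-\vec x^{\proj}\perp\vec x-\vec x^{\proj}$''). Part (ii) uses the same auxiliary half-space $C'$, the same slab/wedge split, and the same enlargement to $\mathcal B(\vec x',1)$. Your slab estimate is in fact more careful than the paper's. The paper's chain $a/\VOL(\cX)\le 2^k a\le 2^{k-1}a/h$ tacitly uses $\VOL(\cX)\ge 2^{-k}$, which is false for $k\ge 13$, and $h\le\tfrac12$, which is not guaranteed. Your cross-sectional bound $2a\,V_{k-1}/V_k$ (with $V_j$ the volume of the unit $j$-ball), combined with $h\le\|\vec x-\vec x'\|\le 1$, needs neither.

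\paragraph{The gap in (iii).} The failing step is ``the orientation condition $C^0(\vec x)=C^*(\vec x)$ makes the relevant angle exactly $\theta$.'' The boundary of $C^0$ passes through the origin, not through $\vec x'$. So nothing forces $\vec x$ to lie on the side of that boundary into which $\vec x-\vec x'$ points. Whenever $\langle \vec x-\vec x',\vec x\rangle<0$, the condition $C^0(\vec x)=C^*(\vec x)$ selects the \emph{other} orientation. The oriented normals then make angle $\pi-\theta$, and $\err(C^0,C^*)=1-\theta/\pi$.

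\paragraph{Counterexample.} Take $k=2$, $\omega=(0,1)$, $b=0$, $\vec x=(0.45,0.1)$ and $\vec x'=(0.48,-0.001)$; both lie in $\cX$ and $C^*(\vec x')\neq C^*(\vec x)$. Then $h=0.1$ and $a\approx 0.030$, so the claimed bound is $a/(\pi h)\approx 0.096$. However, $\langle \vec x'-\vec x,\vec x\rangle=0.0034>0$. This forces $C^0(\vec z)=\mathbbm 1\{0.03z_1-0.101z_2\ge 0\}$, whose normal makes angle $\pi-\theta\approx 163.5^\circ$ with $\omega$. Hence $\err(C^0,C^*)\approx 0.91$. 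So (iii) is false as stated.

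The paper's proof contains the identical unjustified assertion (``the angles between $C^0$ and $C^*$ and between $C$ and $C^*$ both equal $\angle(\vec x',\vec x,\vec x^{\proj})$''). Following it would not have saved you.

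\paragraph{Repair.} Fix the orientation of $C^0$ through its normal rather than through its value at $\vec x$. Take $C^0(\vec z)=\mathbbm 1\{\langle \vec x-\vec x',\vec z\rangle\ge 0\}$ if $C^*(\vec x)=1$, and $C^0(\vec z)=\mathbbm 1\{\langle \vec x'-\vec x,\vec z\rangle\ge 0\}$ otherwise; a learner who knows $\ell(\vec x)$ can do this. Alternatively, add the hypothesis $\langle \vec x-\vec x',\vec x\rangle>0$. With either change, your symmetry argument gives $\theta/\pi\le a/(\pi h)$ verbatim.

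The same orientation-by-value device is harmless in (ii) precisely because the boundaries of $C$ and $C'$ both pass through $\vec x'$. There, $\vec x$ automatically lies on the side in direction $\vec x-\vec x'$ (for $C$), respectively in direction $u$ (for $C'$).
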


Figure~\ref{fig:lem-err-dist} illustrates Lemma~\ref{lem:err-dist}(ii). Statements (i) and (iii) can be illustrated analogously.
Let $\vec x$ be a point at distance $r$ from a target half-space. 
Lemma~\ref{lem:err-dist} shows that, given $\vec x$ and its contrastive example from the deterministic AMDM oracle, a learner can find a concept with error at most (i) $\frac{f(r)}{\pi r}$ if the target classifier is homogeneous, and (ii) $\frac{2^k f(r)}{r}$ otherwise. 
Thus, to obtain a concept with small error, it suffices to find a point that is close to the target concept. 
In particular, for homogeneous half-spaces, any point sufficiently close to the center has this property. 
In the following corollary, we show that this makes active contrastive learning of homogeneous half-spaces trivial even with respect to the more general probabilistic AMDM.

\begin{restatable}{corollary}{ActiveProbHHS}
    \label{cor:active-prob-HHS}
    Let $f:\reals^{\geq 0} \rightarrow \reals^{\geq 0}$ be a non-decreasing function such that $f(x) \leq x^{1 + c}$ where $c > 0$. Then 
    \small
    $$
    \begin{aligned}
        & \mathcal{M}_{\Pr, \act}\left[\HHS, \CEnoisy\right]( \varepsilon, \delta) =  
        \\& \quad 
        \mathcal{M}_{\Pr, \act}\left[\HHS, \CEprob\right]( \varepsilon, \delta) = 1\,.
    \end{aligned}
    $$
\end{restatable}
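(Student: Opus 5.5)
Since the sample complexity is at least $1$ (with zero examples the learner cannot do better than guessing, e.g.\ for $\varepsilon<\tfrac12$), it suffices to give a single-query active learner that succeeds for every target in $\HHS$ and every $(\varepsilon,\delta)$. The natural choice exploits Lemma~\ref{lem:err-dist}(iii): query a point $\vec x$ very close to the origin. Every homogeneous boundary passes through $\vec 0_k$, so $r:=\|\vec x-\vec x^{\proj}\|\le\|\vec x\|$. Moreover, $\|\vec x'-\vec x^{\proj}\|$ should be at most about $f(r)$. Hence the resulting error $\frac{f(r)}{\pi r}\le \frac{r^{c}}{\pi}$ becomes arbitrarily small as $\|\vec x\|\to 0$.

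\textbf{Deterministic AMDM.} Fix $\varepsilon$ and query $\vec x$ with $0<\|\vec x\|=\eta$, where $\eta^{c}/\pi\le\varepsilon$ and $\eta<\tfrac12$. Since $\cX$ is convex and the boundary passes through the origin, which lies in $\cX$, the ideal contrastive point $\CEmin(\vec x)$ is exactly $\vec x^{\proj}$. The oracle therefore returns $\vec x'$ with $\|\vec x'-\vec x^{\proj}\|\le f(r)\le r^{1+c}$ and opposite label. The degenerate case $r=0$ (the query lies on the boundary) is handled by the Cauchy-limit convention: $\vec x'=\vec x^{\proj}=\vec x$ then carries no direction, so I would avoid it. The learner knows its query, so if $\vec x'=\vec x$ it can pick a second-order fix. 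More cleanly, if $\vec x'\neq\vec x$ it outputs $C^0$, the homogeneous half-space normal to $\vec x'-\vec x$ with the orientation that labels $\vec x$ by $\ell(\vec x)$. Lemma~\ref{lem:err-dist}(iii) then gives $\err\le \frac{r^{1+c}}{\pi r}\le \eta^{c}/\pi\le\varepsilon$ deterministically, so the bound holds for every $\delta$. The hypothesis $f(x)\le x^{1+c}$ is used exactly here, to make $f(r)/r\to0$. The case $r=0$ needs care. I would argue that the labeling convention ($\ell(\vec x)=1$ on the boundary) forces the contrastive point to have label $0$ and hence lie strictly on the other side. The guarantee must then come from the direction of $\vec x'-\vec x$ with $\vec x'$ within $f(0)=0$ of $\vec x$, which is impossible. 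So I expect to handle $r=0$ by choosing $\vec x$ randomly on the sphere of radius $\eta$, where the probability of landing on the boundary is zero; alternatively I would note that the paper's footnote convention resolves it.

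\textbf{Probabilistic AMDM.} Here only $\mathbb E\|\vec x'-\vec x^{\proj}\|\le f(r)$ holds, so I apply Markov: with probability at least $1-\delta$, $\|\vec x'-\vec x^{\proj}\|\le f(r)/\delta\le r^{1+c}/\delta$. On this event Lemma~\ref{lem:err-dist}(iii) gives error at most $r^{c}/(\pi\delta)\le\eta^{c}/(\pi\delta)$. Choosing $\eta\le(\pi\varepsilon\delta)^{1/c}$ makes this at most $\varepsilon$, so a single query suffices. Strictness in the definition ($<\delta$) is obtained by shrinking $\eta$ slightly, e.g.\ using $\delta/2$.

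The main obstacle I anticipate is not the error bound but the degenerate geometry: ensuring that $\vec x'\neq\vec x$, so that the direction $\vec x'-\vec x$ is defined, and that $\CEmin(\vec x)=\vec x^{\proj}$ lies inside the ball. The first is handled by $r>0$ together with the opposite label; the second by convexity and the fact that $\|\vec x^{\proj}\|\le\|\vec x\|$.
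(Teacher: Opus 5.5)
Your plan follows the paper's proof: one query at norm $\eta$ with $\eta^{c}\le\pi\varepsilon\delta$, Markov's inequality for $\CEprob$, and Lemma~\ref{lem:err-dist}(iii). Two of your additions genuinely improve on the paper's write-up. The paper silently identifies $\|\vec x\|$ with $\dist(\vec x,\ell)$, whereas you correctly use $r\le\eta$ and $f(r)/r\le r^{c}\le\eta^{c}$. You also notice that for a query on the target boundary $\CEnoisy(\vec x)=\emptyset$, since $f(0)=0$.

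However, the step ``orient $C^0$ so that it labels $\vec x$ by $\ell(\vec x)$'' fails. This defect comes from the statement of Lemma~\ref{lem:err-dist}(iii), so the paper's proof has the same gap. Take $k=2$ and coordinates where $\ell=\mathbbm{1}\{z_2\ge 0\}$. Let the query be $\vec x=(a,r)$ with $a^2+r^2=\eta^2$, and let the oracle return $\vec x'=(a+\Delta,-\epsilon_0)$ with $\epsilon_0>0$ and $\Delta^2+\epsilon_0^2\le f(r)^2$; this answer is legal for both oracles. The homogeneous line perpendicular to $\vec x'-\vec x$ is $z_2=\frac{\Delta}{r+\epsilon_0}z_1$. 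If $a\Delta>r(r+\epsilon_0)$, then $\vec x$ lies strictly below this line. That is, $\vec x$ sits inside the thin double wedge where the correctly tilted half-space $\{z_2\ge\frac{\Delta}{r+\epsilon_0}z_1\}$ disagrees with $\ell$. Your rule then picks the opposite half-space, whose error is $1-\frac{1}{\pi}\tan^{-1}\frac{\Delta}{r+\epsilon_0}\approx 1$, while the lemma promises at most $\frac{f(r)}{\pi r}$. For $f(x)=x^{1+c}$ and $\Delta$ just below $r^{1+c}$, this happens as soon as $r^{1-c}<a$. Hence for every $c\in(0,1)$, a target hyperplane passing close enough to your fixed query defeats the algorithm with certainty. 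Your randomized query only removes the event $r=0$; it still lands in this regime with positive probability, which you do not control.

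The repair is to orient by direction rather than by the label of the query. Let $C^0$ assign label $\ell(\vec x)$ to the side of its hyperplane into which $\vec x-\vec x'$ points. Concretely, $C^0=\mathbbm{1}\{\langle\vec x-\vec x',\vec z\rangle\ge 0\}$ if $\ell(\vec x)=1$, and $C^0=\mathbbm{1}\{\langle\vec x'-\vec x,\vec z\rangle\ge 0\}$ if $\ell(\vec x)=0$. The target $\ell$ assigns label $\ell(\vec x)$ to the side into which $\vec x-\vec x^{\proj}$ points. So the two half-spaces have normals at angle exactly $\angle(\vec x',\vec x,\vec x^{\proj})$, the error is that angle divided by $\pi$, and part (i) yields $\err(C^0,\ell)\le\frac{\|\vec x'-\vec x^{\proj}\|}{\pi\|\vec x-\vec x^{\proj}\|}$. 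With this orientation the rest of your argument goes through unchanged in both models. The two orientation rules coincide whenever $\|\vec x\|\,\|\vec x'-\vec x^{\proj}\|<r^2$. In the deterministic model this holds automatically for $c\ge 1$, which is why the problem only arises for $c<1$.
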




The proof of the following theorem shows that for non-homogeneous half-spaces, one can find a point close to the decision boundary by first querying $\vec 0$, and then repeatedly querying the contrastive example of the previous query. 

 

\begin{theorem} \label{thm:active-noisy-min-upper-HS}
    Let $f:\reals^{\geq 0} \rightarrow \reals^{\geq 0}$ be a non-decreasing function such that $f(x) \leq x^{1 + c}$ where $c > 0$. Then 
    \[
        \mathcal{M}_{\Pr, \act}\left[\HS, \CEnoisy\right]( \varepsilon, \delta)
        \leq \STOP{f}{\frac{1}{2}}{\varepsilon'}\,,
    \]
    where $\varepsilon' = \left(\frac{\varepsilon}{2^{k}}\right)^{\frac{1}{c}} $.
\end{theorem}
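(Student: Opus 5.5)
The learner queries $\vec x_1 := \vec 0_k$ and then, for $t \ge 1$, queries $\vec x_{t+1} := \vec x'_t$, the contrastive example returned for $\vec x_t$. It stops as soon as it has used $T := \STOP{f}{\frac12}{\varepsilon'}$ queries. If at some step the primary point lies exactly on the boundary, it can simply output a suitably oriented half-space, and that case needs separate but trivial handling. For its output, the learner takes the last pair $(\vec x_T,\vec x'_T)$ and returns the half-space $C$ of Lemma~\ref{lem:err-dist}(ii). This is the hyperplane through $\vec x'_T$ perpendicular to $\vec x'_T-\vec x_T$, oriented so that $C(\vec x_T)=\ell(\vec x_T)$. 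All of these quantities are observable, so the procedure is a legitimate deterministic active learner.

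Let $r_t := \dist(\vec x_t, C^*)$. Since $\CEmin(\vec x_t)$ is the orthogonal projection $\vec x_t^{\proj}$ onto the boundary (it lies in $\cX$ because the ball is convex), the deterministic AMDM gives
\[
\|\vec x'_t - \vec x_t^{\proj}\| \le f(r_t).
\]
The point $\vec x'_t$ is a distance at most $\|\vec x'_t-\vec x_t^{\proj}\|$ from a boundary point, hence
\[
r_{t+1} = \dist(\vec x'_t,C^*) \le f(r_t).
\]
The starting distance satisfies $r_1 \le \frac12$, because the boundary meets $\cX$ in the realizable setting. We may assume it meets $\cX$ at all; otherwise the target is constant on $\cX$ and one query suffices. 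Since $f$ is non-decreasing, induction gives $r_t \le f^{(t-1)}(\frac12)$.

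At the final step, Lemma~\ref{lem:err-dist}(ii) gives
\[
\err(C,C^*) \le \frac{2^k f(r_T)}{r_T} \le 2^k r_T^{c},
\]
using $f(x)\le x^{1+c}$. This is at most $\varepsilon$ once $r_T \le \varepsilon'=(\varepsilon/2^k)^{1/c}$.

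The main obstacle is the indexing: we need $r_T \le \varepsilon'$ after only $T = \STOP{f}{\frac12}{\varepsilon'}$ queries, yet the bound above gives only $r_T \le f^{(T-1)}(\frac12)$. I would fix this by bounding the error through $f(r_T)$ rather than $r_T$. Note that $\|\vec x'_T - \vec x_T^{\proj}\| \le f(r_T) \le f^{(T)}(\frac12) \le \varepsilon'$. The error bound in Lemma~\ref{lem:err-dist}(ii) can then be rewritten as $2^k f(r)/r \le 2^k (f(r))^{c/(1+c)}$, since $r \ge f(r)^{1/(1+c)}$. This gives
\[
\err \le 2^k (\varepsilon')^{c/(1+c)},
\]
which is weaker than the claim. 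A second option exploits the fact that the last contrastive example $\vec x'_T$ is itself within $f^{(T)}(\frac12)\le\varepsilon'$ of the boundary, so one could use Lemma~\ref{lem:err-dist}(i)/(ii) with a virtual point at that distance. I do not see how to make that work, so I would fall back on the following counting convention. Count the initial query at $\vec 0$ as the $n=0$ iterate, $r_1\le f^{(0)}(\frac12)$, and note that each subsequent query advances one iterate. The final hypothesis then uses $r_{T+1}\le f^{(T)}(\frac12)\le\varepsilon'$. This requires checking carefully that the number of queries matches $\STOP{f}{\frac12}{\varepsilon'}$, possibly up to an additive one that the paper's convention absorbs. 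Once this bookkeeping is settled, the error is at most $\varepsilon$ deterministically, so the failure probability is $0<\delta$ and the stated bound follows.
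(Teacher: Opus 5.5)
Your algorithm and analysis are the paper's: query $\vec 0_k$, re-query each returned contrastive example, use $\dist(\vec x_{t+1},\ell)\le f(\dist(\vec x_t,\ell))$, and finish with Lemma~\ref{lem:err-dist}(ii) and $f(r)/r\le r^{c}$.

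The indexing problem you ran into is real, and the paper's proof has the same slip. It asserts that $\dist(\vec x_m,\ell)\le\varepsilon'$ after $m=\STOP{f}{\frac12}{\varepsilon'}$ queries. But the recursion, started from $\dist(\vec x_1,\ell)\le\frac12=f^{(0)}(\frac12)$, only gives $\dist(\vec x_m,\ell)\le f^{(m-1)}(\frac12)$. By minimality of $m$, this exceeds $\varepsilon'$ whenever $m\ge 1$. Your fallback is therefore not a counting convention but one additional query, and no bookkeeping removes it.

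The bound as literally stated fails when $\varepsilon'\ge\frac12$. For instance, $c=k+2$ and $\varepsilon=\frac14$ give $\varepsilon'=\frac12$, hence $\STOP{f}{\frac12}{\varepsilon'}=0$. With zero queries the output $C$ is independent of the target. Take the complementary targets $H=\mathbbm{1}\{\langle\omega,\vec x\rangle\ge b\}$ and $H'=\mathbbm{1}\{\langle-\omega,\vec x\rangle\ge -b\}$. Then $\err(C,H)+\err(C,H')=1$, so for one of them the expected error is at least $\frac12$. Hence the error exceeds $\frac14$ with probability at least $\frac13$, and the claimed bound fails for every $\delta\le\frac13$.

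What your argument (and the paper's) actually proves is
\[
\mathcal{M}_{\Pr,\act}\left[\HS,\CEnoisy\right](\varepsilon,\delta)\le\STOP{f}{\tfrac12}{\varepsilon'}+1,
\]
and you should state it that way. The extra query does not affect any asymptotic bound derived from the theorem in the appendix. Neither of your attempted repairs can avoid it. The first loses the exponent, as you noticed. The second needs a contrastive example for $\vec x'_T$, which is exactly the extra query.

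Separately, your justification that $\CEmin(\vec x_t)=\{\vec x_t^{\proj}\}$ ``because the ball is convex'' is wrong. In the disk of radius $\frac12$, the projection of $(0,0.45)$ onto the line $x_1=0.49$ is $(0.49,0.45)$, which lies outside the disk. The identification does hold for $\vec x_1=\vec 0_k$. It also holds for every point on the side of the hyperplane not containing $\vec 0_k$: write $\vec x=\vec x^{\proj}+s\vec n$ with $s\ge 0$ and $\vec n$ the unit normal pointing away from $\vec 0_k$, which gives $\|\vec x\|^2\ge\|\vec x^{\proj}\|^2$. For iterates on the center's side, however, it needs an argument. The paper's proof makes the same identification without comment, so here you do not fall short of it, but the reason you give is not a proof.
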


\begin{proof}
Denote $m = \STOP{f}{\frac{1}{2}}{\varepsilon'}.$ Suppose at time $t = 1$, the learner queries $x_1 = \vec 0_k$ and for $t \geq 2$ the learner queries $x_t = x'_{t - 1}$. For each time $t$, let $x^{\proj}_t$ be the projection of $x_t$ on $\ell$. By definition 
$$
\begin{aligned}
    \dist(x_t, \ell) & \leq \|x_t - x^{\proj}_{t - 1} \| \leq f(\|x_{t - 1} - x^{\proj}_{t - 1}\|) \\ & = f(\dist(x_{t - 1, \ell}))\,.
\end{aligned}
$$
Therefore, after $m$ queries, we have $\dist(x_m, \ell) \leq \varepsilon'$. Suppose the learner returns the half-space $C$ corresponding to the hyperplane perpendicular to $x'_m - x_m$ at $x'_m$. Due to Lemma~\ref{lem:err-dist} (ii) we have
$$
\err(C, \ell) \leq \frac{2^{k} f(\varepsilon')}{\varepsilon' } \leq 2^{k} {\varepsilon'}^{c} = \varepsilon.
$$
This completes the proof.
\end{proof}

Observe that the active contrastive sample complexity of half-spaces can be easily bounded from below by the active contrastive sample complexity of thresholds.

\begin{restatable}{theorem}{ActiveNoisyLowerHS}
    \label{thm:active-noisy-min-lower-HS}
    For any non-decreasing function $f:\reals^{\geq 0} \rightarrow \reals^{\geq 0}$, we have 
    \[
    \begin{aligned}
        & \mathcal{M}_{\Pr, \act}[\HS, \CEnoisy]\left(\varepsilon, \delta\right)
        \\
        & \;\; 
        \geq \mathcal{M}_{\Pr, \act}[\thresh, \CEnoisy] \left(2^{k}\varepsilon, \delta\right).
    \end{aligned}
    \]
\end{restatable}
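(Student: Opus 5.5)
The plan is a simulation (reduction) argument: any active learner $\mathcal{A}$ for $\HS$ under $\CEnoisy$ with $m$ queries is turned into an active learner $\mathcal{B}$ for $\thresh$ under $\CEnoisy$ with at most $m$ queries and error at most $2^{k}$ times larger. Taking the infimum over learners then gives the stated inequality.

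I would embed thresholds along one axis. For $\theta\in[0,1]$, map the threshold $\mathbbm{1}\{x\le\theta\}$ to the half-space $C_\theta(\vec z)=\mathbbm{1}\{\langle -e_1,\vec z\rangle \ge \tfrac12-\theta\}$ on $\mathcal{B}(\vec 0_k,\tfrac12)$. Its boundary is the hyperplane $z_1=\theta-\tfrac12$. The coordinate map is $\phi(\vec z)=z_1+\tfrac12\in[0,1]$, so that $C_\theta(\vec z)=\mathbbm{1}\{\phi(\vec z)\le\theta\}$.

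The simulation runs as follows. When $\mathcal{A}$ queries $\vec z$, the learner $\mathcal{B}$ queries $x=\phi(\vec z)$ in the threshold problem. It receives $\ell(x)$ and a contrastive point $x'$ with $|x'-\theta|\le f(|x-\theta|)$ and opposite label. It then answers $\mathcal{A}$ with label $\ell(x)$ and contrastive point
\[
\vec z'=\vec z+(x'-x)e_1 .
\]
I must check that $\vec z'$ is a legal answer of the $\HS$ oracle, in three parts.
\begin{itemize}
\item The distance from $\vec z$ to the boundary is $|x-\theta|$, and the projection is $\vec z^{\proj}=\vec z+(\theta-x)e_1$. Then $\|\vec z'-\vec z^{\proj}\|=|x'-\theta|\le f(|x-\theta|)$, as required.
\item The label of $\vec z'$ is opposite, since $\phi(\vec z')=x'$.
\item The point $\vec z'$ lies in $\cX$. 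This is the delicate step. Moving along $e_1$ from $\vec z$ toward the chord endpoint may leave the ball. I would handle it by choosing $x'$'s counterpart differently when needed: pick $\vec z'$ on the segment between $\vec z^{\proj}$ and the point of the line $\{z_1=x'-\tfrac12\}$ closest to $\vec z^{\proj}$ inside the ball. Since the ball is convex and symmetric, the point $\vec z^{\proj}+(x'-\theta)e_1$, adjusted to the ball by shrinking the orthogonal components, remains within distance $|x'-\theta|$ of $\vec z^{\proj}$ as long as $\theta$ and $x'$ keep the slice nonempty. If necessary, I would restrict to queries with $\vec z$ on the $e_1$ axis. That restriction is harmless because the adversary may always answer along the axis, and $\mathcal{B}$ only needs one legal oracle behavior.
\end{itemize}
At the end, $\mathcal{A}$ outputs a half-space $C$ with $\err(C,C_\theta)\le\varepsilon$. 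Then $\mathcal{B}$ outputs the threshold $\hat\theta$ defined by the best axis-aligned boundary, for instance the threshold minimizing disagreement with $C$'s induced one-dimensional labeling.

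The error comparison is the second key step, and the one I expect to be the main obstacle. For any two axis-aligned half-spaces with boundaries $\theta,\hat\theta$, the ball-uniform disagreement mass is the volume of a slab. The slab's width is $|\theta-\hat\theta|$ and its cross-section is at most that of the maximal $(k-1)$-ball, divided by $\VOL(\cX)$. I would lower-bound the slab's mass by $|\theta-\hat\theta|/2^{k}$ using a crude estimate that compares slab volume to a cylinder inscribed in the ball of half radius. A non-axis-aligned $C$ complicates this. I would project: define $\hat\theta$ via the $e_1$-marginal, so that the one-dimensional disagreement between $\hat\theta$ and $\theta$ can be charged, up to a factor $2^{k}$, to $\err(C,C_\theta)$ through a triangle inequality between $C$, the nearest axis-aligned half-space, and $C_\theta$.

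Combining the two steps, if $\mathcal{A}$ achieves error $\varepsilon$ with probability at least $1-\delta$ under every target, then $\mathcal{B}$ achieves threshold error at most $2^{k}\varepsilon$ with the same probability and the same number of queries. Therefore
\[
\mathcal{M}_{\Pr,\act}[\HS,\CEnoisy](\varepsilon,\delta)\ge\mathcal{M}_{\Pr,\act}[\thresh,\CEnoisy](2^{k}\varepsilon,\delta).
\]
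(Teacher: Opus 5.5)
\textbf{There is a genuine gap in the error-comparison step.} The problem is that you embed all of $[0,1]$ onto the full diameter of the ball. For $k\ge 2$ and a target whose boundary $z_1=\theta-\tfrac12$ is near the sphere, the disagreement region between $C_\theta$ and $C_{\hat\theta}$ is cap-shaped, and its relative volume is far below $|\theta-\hat\theta|/2^{k}$. For instance, with $\theta=1-h$ and $\hat\theta=1-2h$, this region lies inside a cap of height $2h$, whose relative volume is of order $h^{(k+1)/2}$. Your inscribed-cylinder estimate only covers slabs inside a central band such as $|z_1|\le\tfrac14$. So for such targets, $\err(C,C_\theta)\le\varepsilon$ only guarantees $|\hat\theta-\theta|$ of order $\varepsilon^{2/(k+1)}$. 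Since $\mathcal{M}_{\Pr,\act}[\thresh,\CEnoisy]$ is a worst case over all $\theta\in[0,1]$, $\mathcal{B}$ is not a valid threshold learner.

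\textbf{The obvious repairs do not close the gap.} Compressing $[0,1]$ affinely into the central band breaks the simulation instead. Ball distances become half the threshold distances, so a forwarded answer is legal for the half-space oracle only if $|x'-\theta|\le 2f(|x-\theta|/2)$. This does not follow from $|x'-\theta|\le f(|x-\theta|)$; take $f(r)=r^2$. Keeping unit scale and restricting to $\theta\in[\tfrac14,\tfrac34]$ avoids the cap problem without rescaling. However, it only compares you to a restricted threshold class, which you would still have to relate back to $\thresh$ by reopening the threshold lower-bound argument.

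\textbf{The paper argues by adversary rather than by black-box simulation.} It restricts the target to the subclass $\{C_p : p\in[-\tfrac14,\tfrac14]\}$ of axis-aligned half-spaces. It then runs the adversarial oracle from the lower-bound part of Theorem~\ref{thm:active-noisy-min-thresh} directly on the first coordinate. The recursion $|I_{t+1}|\ge\tilde f(|I_t|)$ is applied in true ball distances with the same $f$; the restriction only shortens the initial interval. The surviving version space $\{C_p : p\in[a,b]\}$ lies in the central band, where the slab $\{a\le z_1\le b\}$ has mass at least $(b-a)/2^{k-1}$. Hence any output errs by more than $\varepsilon$ on $C_a$ or $C_b$.

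\textbf{Your handling of the constraint $\vec z'\in\cX$ also does not work as written.} In a simulation, $\mathcal{B}$ must answer whatever $\mathcal{A}$ asks, so you cannot restrict $\mathcal{A}$ to on-axis queries. Shrinking the orthogonal component increases the distance to $\CEmin(\vec z)$. When the orthogonal projection leaves $\cX$, $\CEmin(\vec z)$ lies on the sphere rather than at that projection. So legality can fail for a general non-decreasing $f$. Your remark that the adversary may answer along the axis belongs to the adversary formulation, where the oracle chooses its answers knowing the whole version space. The paper's proof does not dwell on this point either, but that formulation is where it can be addressed.
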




\subsection{Passive Contrastive Sample Complexity} \label{sec:passive-noisy}
This subsection focuses on the passive contrastive sample complexity of one-dimensional threshold functions and homogeneous half-spaces with respect to the deterministic AMDM. 

Again we begin with threshold functions. The key intuition is that it suffices to identify a point very close to the target threshold (here the learner uses the largest positive example). If this point has distance $r$ to the target threshold, then by the deterministic AMDM its contrastive example lies within distance at most $f(r)$. Thus, once $r$ is small, a single contrastive example brings the learner even closer, yielding improved sample complexity.

\begin{restatable}{theorem}{PassiveNoisyThresh}  \label{thm:passive-noisy-min-thresh}
    For any non-decreasing and invertible function $f: \reals^{\geq 0} \to \reals^{\geq 0}$, 
    and for all $\delta \in (0, 1]$ and $\varepsilon \leq \min\!\left( \frac{f(\tfrac{1}{4})}{2}, \tfrac{1}{4}\right)$, 
    we have
    \small{
    \[
    \begin{aligned}
        \mathcal{M}_{\Pr, \passive}\left[\thresh, \CEnoisy\right](\varepsilon, \delta) &\in O \!\left(\frac{\log(1/\delta)}{\min(f^{-1}(\varepsilon), \varepsilon)}\right), \\
         \mathcal{M}_{\Pr, \passive}\left[\thresh, \CEnoisy\right](\varepsilon, \delta) 
          &\in \Omega\!\left(\frac{\log(1/\delta)}{\min\left(f^{-1}\!\left(2 \varepsilon \right), \varepsilon \right)}\right).
    \end{aligned}  
    \]}

\end{restatable}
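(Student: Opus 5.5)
For the upper bound I will use the learner that takes the largest sample labeled $1$, call it $x_+$ (points with $x\le\theta$ are labeled $1$), and then uses its contrastive example. Symmetrically it can take the smallest point labeled $0$, $x_-$. If $x_+$ lies at distance $r=\theta-x_+$ from the threshold, the deterministic AMDM returns some $x'_+>\theta$ with $x'_+-\theta\le f(r)$, since $\CEmin(x_+)=\{\theta\}$.

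The learner outputs the threshold $\hat\theta$ defined as follows. Let $a=\max$ of all points labeled $1$ (primary or contrastive) and $b=\min$ of all points labeled $0$; output $\hat\theta=a$. Then $\err=\theta-a$ is at most $\min(\theta-x_+,\,x'_+-\theta)$-type quantities; more precisely it is at most $b-a$. The event $\err>\varepsilon$ therefore requires that no sample lands in $[\theta-\varepsilon,\theta]$, and also that $f(r)>\varepsilon$ for the realized $r$. Since $b-\theta\le f(\theta-x_+)$ bounds the version space, I will in fact output the midpoint of $[a,b]$, or simply $a$ when bounding by the version-space width. It suffices that some sample lands in $[\theta-\max(\varepsilon,f^{-1}(\varepsilon)),\theta]$. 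I will use the width $\min(\dots)$ correctly: if a sample lands within $f^{-1}(\varepsilon)$ of $\theta$, then $b-\theta\le\varepsilon$. So the $1$-side or the $0$-side gives error $\le\varepsilon$ after combining with $a\le\theta\le b$ and choosing $\hat\theta=b$. Handling both sides by symmetry (samples on either side, with $\theta$ possibly near the boundary of $[0,1]$ where $\mathbbm 1\{x\le\theta\}$ makes one side empty), the failure probability is at most $(1-p)^m$ with $p=\min(f^{-1}(\varepsilon),\varepsilon)$. This is because a good interval of length at least $p$ always exists inside $[0,1]$ on at least one side, and that side's rule (output $b$ or output $a$) yields error $\le\varepsilon$. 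One subtlety is that the learner does not know which side is good. I will resolve this by outputting $b$ whenever the gap is informative, comparing the two candidate widths $b-a$. Once either side is good, $b-a\le\varepsilon+\ldots$, so I will instead guarantee $b-a\le 2\varepsilon$ and rescale $\varepsilon$, absorbed in the $O$. Then $m=\log(1/\delta)/p$ suffices.

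For the lower bound I construct an adversarial oracle: for $x$ at distance $r$, return the contrastive point exactly at distance $f(r)$ on the other side (or the farthest allowed point inside $[0,1]$). I take $\theta$ in the middle, say $\theta\in[1/4,3/4]$, which is where the hypothesis $\varepsilon\le\min(f(1/4)/2,1/4)$ is used. Let $q=\min(f^{-1}(2\varepsilon),\varepsilon)$ up to constants. If no primary sample lands within distance $q'=\min(f^{-1}(2\varepsilon),2\varepsilon)$ of $\theta$, then every contrastive point is at distance $>2\varepsilon$, or at least the data are consistent with two thresholds $\theta$ and $\theta+2\varepsilon$ (or $\theta-2\varepsilon$). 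These are $2\varepsilon$ apart, so any output errs by $>\varepsilon$ on one of them. The main obstacle is making this two-point argument rigorous: I need the adversarial oracle's responses to be simultaneously valid for both targets, producing identical sample distributions on the bad event. I will pick the two thresholds $\theta_1<\theta_2=\theta_1+2\varepsilon$ and define the oracle to give contrastive points for positives at $\theta_2+f(\theta_2-x)$-type locations. These must be valid for both, which holds since $f$ is non-decreasing and the distance to $\theta_1$ is smaller. I will adjust constants so that both feasibility sets intersect. The bad event then has probability at least $(1-2q')^m\ge e^{-4q'm}$. Requiring this to be $<2\delta$ forces $m=\Omega(\log(1/\delta)/q')$.
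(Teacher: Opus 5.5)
\textbf{Upper bound.} This part is the paper's argument. A primary point within $p=\min(f^{-1}(\varepsilon),\varepsilon)$ of $\theta$, on a side lying in $[0,1]$, together with its contrastive example, confines $\theta$ to an interval of length at most $p+f(p)\le 2\varepsilon$. The failure probability is then $(1-p)^m\le e^{-pm}$. Once you commit to outputting the midpoint of $[a,b]$, the argument is correct and needs no rescaling. That is fortunate, because replacing $\varepsilon$ by $\varepsilon/2$ inside $f^{-1}$ is not absorbed by the $O$ for general $f$. Note, however, that with the $\min$ this bound is never better than the classical $O(\log(1/\delta)/\varepsilon)$, which you get by ignoring contrastive examples altogether.

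\textbf{Lower bound: the gap.} The gap sits exactly at the step you flagged. Take a positive point $x<\theta_1$. For the adversary's answer to be valid under both $\theta_1$ and $\theta_2=\theta_1+2\varepsilon$, it must lie beyond $\theta_2$ yet within $f(\theta_1-x)$ of $\theta_1$. This forces $\theta_1-x>f^{-1}(2\varepsilon)$, and symmetrically for negative points. In addition, no primary point may fall between $\theta_1$ and $\theta_2$. Monotonicity of $f$ does not give you either condition. When $f^{-1}(2\varepsilon)>2\varepsilon$, a point at distance $r$ slightly above $2\varepsilon$ receives a contrastive example at distance $f(r)<2\varepsilon$, so your claim that every contrastive point is farther than $2\varepsilon$ fails. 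The sample-free window you actually need has length of order $\max(f^{-1}(2\varepsilon),\varepsilon)$. Your argument therefore yields only $\Omega(\log(1/\delta)/\max(f^{-1}(2\varepsilon),\varepsilon))$.

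\textbf{The stated bound is false.} This cannot be repaired, because the lower bound with $\min$ is false. Two examples show this.
\begin{itemize}
\item For $f(r)=\sqrt r$, which is admissible for all $\varepsilon\le 1/4$, the statement asserts $\Omega(\log(1/\delta)/\varepsilon^2)$. Yet outputting the largest positive primary point needs only $O(\log(1/\delta)/\varepsilon)$ samples.
\item For $f(r)=r^2$, the statement asserts $\Omega(\log(1/\delta)/\varepsilon)$. Yet consider outputting any $\theta'$ consistent with all labels and with the constraints $|x_i'-\theta'|\le f(|x_i-\theta'|)$. This has error at most $\varepsilon$ as soon as one primary point lies within $\sqrt\varepsilon$ of $\theta$, so $O(\log(1/\delta)/\sqrt\varepsilon)$ samples suffice.
\end{itemize}

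\textbf{The paper's proof and the fix.} The paper's own proof makes the same slip. It asserts that $r\ge\min(f^{-1}(2\varepsilon),\varepsilon)$ implies $r'=\min(f(r),2r)\ge 2\varepsilon$, which needs $\max$. Put $\max$ in place of $\min$ in both bounds, as your aside about the window $[\theta-\max(\varepsilon,f^{-1}(\varepsilon)),\theta]$ already suggests. Then your two-point lower bound goes through, modulo constants. The upper bound essentially follows from the paper's case split: output the contrastive example of the largest positive point if $f(\varepsilon)\le\varepsilon$, and that point itself otherwise.
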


The core idea behind the threshold analysis also applies to linear separators in general: if the learner can find a point in the sample that is close to the target decision boundary (here, it chooses the example in the data set with the smallest distance to its contrastive example), then its contrastive example provides enough geometric information to construct a hypothesis with bounded error via Lemma~\ref{lem:err-dist}, both for homogeneous and non-homogeneous half-spaces.

However, due to the $2^k$ factor in the error bound of Lemma~\ref{lem:err-dist}(ii) for non-homogeneous half-spaces, this approach yields passive learning guarantees (in contrast to active learning) that scale exponentially with the dimension $k$. This is undesirable, since standard passive learning bounds for half-spaces typically depend only linearly on $k$. Fortunately, Lemma~\ref{lem:err-dist}(iii) allows us to circumvent this issue for homogeneous half-spaces. In fact, our analysis for homogeneous half-spaces is completely dimension-free.





\begin{restatable}{theorem}{PassiveNoisyHS} \label{thm:passive-noisy-min-HS}
    Let $f: \reals^{\geq 0} \to \reals^{\geq 0}$ be such that $f(x) \leq x^2$ and $g(x) := \frac{f(x)}{x}$ is non-decreasing and invertible. Then 
    \[
        \mathcal{M}_{\Pr, \passive}\left[\HHS, \CEnoisy\right](\varepsilon, \delta)
        \;\in\; O \!\left(\frac{\log(1/\delta)}{g^{-1}\left(\pi \varepsilon \right)}\right).
    \]
    
\end{restatable}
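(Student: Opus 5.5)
The learner uses the rule suggested before the theorem. Given the sample $S=\{[(\vec x_i,\ell(\vec x_i)),(\vec x'_i,\ell(\vec x'_i))]\}_{i=1}^m$, it picks the index $i^*$ that minimizes $\|\vec x_i-\vec x'_i\|$. It then outputs the homogeneous half-space $C^0$ whose boundary is perpendicular to $\vec x'_{i^*}-\vec x_{i^*}$, oriented so that $C^0(\vec x_{i^*})=\ell(\vec x_{i^*})$.

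To analyze it, write $r_i:=\dist(\vec x_i,\ell)=\|\vec x_i-\vec x_i^{\proj}\|$. For the Euclidean ball and a homogeneous target, the minimum-distance contrastive point is exactly $\vec x_i^{\proj}$, since the projection of a point of the ball onto a hyperplane through the origin stays in the ball. So the deterministic AMDM guarantees $\|\vec x'_i-\vec x_i^{\proj}\|\le f(r_i)$. Lemma~\ref{lem:err-dist}(iii) then gives
\[
\err(C^0,\ell)\le \frac{f(r_{i^*})}{\pi r_{i^*}}=\frac{g(r_{i^*})}{\pi}.
\]
Because $g$ is non-decreasing, it suffices to show that $r_{i^*}\le c\,g^{-1}(\pi\varepsilon)$ for a suitable constant $c$ (or exactly $g^{-1}(\pi\varepsilon)$) with probability at least $1-\delta$.

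The selection rule only sees $\|\vec x_i-\vec x'_i\|$, not $r_i$, so we sandwich it. On one side, $r_i\le \|\vec x_i-\vec x'_i\|$, because $\vec x'_i$ has the opposite label and so lies across the boundary. On the other side, $\|\vec x_i-\vec x'_i\|\le r_i+f(r_i)\le r_i+r_i^2\le \tfrac{3}{2}r_i$, using $f(x)\le x^2$ and $r_i\le\tfrac12$. Hence
\[
r_{i^*}\le \|\vec x_{i^*}-\vec x'_{i^*}\|\le \min_i \|\vec x_i-\vec x'_i\|\le \tfrac32 \min_i r_i .
\]
Thus it is enough that some sample satisfies $r_i\le \rho:=\tfrac23 g^{-1}(\pi\varepsilon)$. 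This loses only a constant factor, which is absorbed in the $O(\cdot)$. If one prefers to avoid the factor entirely, one can instead bound $g(\tfrac32 r)$ using monotonicity of $g$ together with $g(x)\le x$; the constant slack is the only delicate point.

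The main step, and the one I expect to need the most care, is a dimension-free lower bound on $p(\rho):=\Pr_{\vec x\sim\mathcal U[\cX]}[\dist(\vec x,\ell)\le\rho]$ for a homogeneous hyperplane through the center of $\mathcal B(\vec 0_k,\tfrac12)$. The naive bound is a slab-volume ratio of order $\rho\sqrt{k}$, which is actually larger than needed. A bound of $p(\rho)\ge c\rho$ suffices, and I would prove it as follows. The one-dimensional marginal density of the uniform ball along $\omega$ is proportional to $(1/4-t^2)^{(k-1)/2}$, which is maximized at $t=0$. So the slab $\{|t|\le\rho\}$ captures at least a $\min(1,2\rho)$ fraction of the mass, because the density on the slab dominates its average over $[-\tfrac12,\tfrac12]$. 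This gives $p(\rho)\ge\min(1,2\rho)\ge \rho$ for all $k$.

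Then $\Pr[\min_i r_i>\rho]\le(1-\rho)^m\le e^{-\rho m}<\delta$ once $m\ge \ln(1/\delta)/\rho=O\bigl(\log(1/\delta)/g^{-1}(\pi\varepsilon)\bigr)$. Combined with the previous two paragraphs, this gives $\err\le\varepsilon$ with probability at least $1-\delta$ for every homogeneous target, which proves the theorem.
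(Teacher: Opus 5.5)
Your proof is correct and follows the paper's argument essentially step for step. It uses the same selection rule $\argmin_i \|\vec x_i-\vec x'_i\|$, the same appeal to Lemma~\ref{lem:err-dist}(iii), the same sandwich $r_{i^*}\le\|\vec x_{i^*}-\vec x'_{i^*}\|\le \min_j\bigl(r_j+f(r_j)\bigr)$, and the same monotone cross-section argument giving $\Pr[\dist(\vec x,\ell)\le\rho]\ge 2\rho$; only the constant differs ($\tfrac32$ where the paper uses $\varepsilon'+f(\varepsilon')\le 2\varepsilon'$), and you state the slab-probability bound correctly as a lower bound, whereas the paper's text says ``at most $2\varepsilon'$''.
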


\section{Probabilistic Approximate Minimum Distance Model} \label{sec:prob}


\subsection{Active Expected Sample Complexity} \label{sec:active-prob-expected}
Next, we study the active expected contrastive sample complexity 
of one-dimensional threshold functions with respect to the probabilistic AMDM. As can be seen from Table~\ref{tab:prob-active}, in this case for both representative choices of the noise function $f$, the probabilistic AMDM exhibits a noticeable decrease in sample complexity compared to the deterministic AMDM, even for the class of threshold functions. 
Nevertheless, both attain a non-trivial sample complexity improvement over membership queries alone. 

Crucially, our analysis requires  $g(x) := \frac{f(x)}{x}$ to be non-decreasing and convex. Therefore, in the following lemma we first state a basic property of convex functions.

\begin{restatable}{lemma}{convrv}\label{lem:conv-rv}
Let $Z$ be a random variable supported on $[a, b]$. Then for any convex function $f:\reals \rightarrow \reals$ we have 
    $$
    \mathbb{E}[f(Z)] \leq \frac{\mathbb{E}[Z] - a}{b - a} f(b) + \frac{b - \mathbb{E}[Z]}{b - a} f(a).
    $$
\end{restatable}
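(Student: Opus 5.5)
The plan is to use the standard chord argument for convex functions: on $[a,b]$, a convex function lies below the secant line through $(a,f(a))$ and $(b,f(b))$. We apply this pointwise to $Z$ and then take expectations. If $a=b$ the right-hand side is undefined, so we assume $a<b$. We also assume $f(Z)$ is integrable; this holds, for instance, because a convex function on a compact interval is bounded above by $\max\{f(a),f(b)\}$, so $\mathbb{E}[f(Z)]$ is well defined in $[-\infty,\infty)$.

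First, for any $z\in[a,b]$, write $z=\lambda a+(1-\lambda)b$ with
\[
\lambda=\frac{b-z}{b-a}\in[0,1], \qquad 1-\lambda=\frac{z-a}{b-a}.
\]
Convexity gives
\[
f(z)\le \frac{b-z}{b-a}\,f(a)+\frac{z-a}{b-a}\,f(b).
\]
Since $Z$ is supported on $[a,b]$, this inequality holds almost surely with $z=Z$.

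Next, take expectations on both sides. The right-hand side is affine in $Z$, so by linearity of expectation it becomes
\[
\frac{b-\mathbb{E}[Z]}{b-a}\,f(a)+\frac{\mathbb{E}[Z]-a}{b-a}\,f(b).
\]
Monotonicity of expectation then yields exactly the claimed bound.

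No step here is a real obstacle. The only care needed is the measure-theoretic bookkeeping: the pointwise inequality need only hold almost surely, and $\mathbb{E}[Z]$ exists because $Z$ is bounded.
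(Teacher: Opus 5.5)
Your proposal is correct and follows essentially the same argument as the paper: write each $z\in[a,b]$ as a convex combination of $a$ and $b$, apply convexity pointwise, and take expectations using linearity. Your remarks on $a<b$ and on integrability of $f(Z)$ are extra care that the paper omits.
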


The key idea of our analysis is as follows. Suppose that at some time $t$ the active learner has a version space of size $r_t$. We show that by querying the midpoint of the version space and one of its endpoints, and using their contrastive examples, the learner obtains a new version space of size $r_{t+1} \in [0, r_t/2]$, where $\mathbb{E}[r_{t+1}] \leq f(r_t/2)$. Lemma~\ref{lem:conv-rv} then allows us to bound the expected size of the final version space.

\begin{restatable}{theorem}{ActiveProbThreshUpper} \label{thm:active-prob-thresh-upper}
Let $f: \reals^{\geq 0} \to \reals^{\geq 0}$ be such that $f(x) \leq x$ and  $g(x) := \frac{f(x)}{x}$ 
is non-decreasing and convex. Then 
\[
\begin{aligned}
    & \mathcal{M}_{\expc, \act}\left[\thresh, \CEprob\right](\varepsilon) \leq \\
    & \quad 
    \min \left\{ n \ge 0 \,\middle|\, 2^{-\lfloor n/2 \rfloor - 1} \prod_{i = 1}^{\lfloor n/2 \rfloor} g(2^{-i}) \le \varepsilon \right\}.
\end{aligned}
\]
    
\end{restatable}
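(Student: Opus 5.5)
The plan is to analyze a two-query-per-round algorithm and track the expected version-space length through the rounds. Let $r_t$ denote the length of the version space $[a_t,b_t]$ after $t$ rounds, with $r_0 = 1$ and $[a_0,b_0]=[0,1]$.

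\textbf{One round.} In each round the learner spends two queries.

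First, it queries the midpoint $m_t=(a_t+b_t)/2$. The label tells us which half contains $\theta$, so the version space shrinks to an interval of length $r_t/2$. Its endpoints are $m_t$ and the old endpoint on the correct side. The contrastive example of $m_t$ lies on the opposite side of $\theta$, so it also shrinks the interval; in the worst case it is uninformative.

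Second, it queries the endpoint $e$ of the new interval that is not $m_t$. Since $|e-\theta|\le r_t/2$, the oracle returns a point $x'$ with opposite label and $\mathbb{E}|x'-\theta|\le f(|e-\theta|)\le f(r_t/2)$, using that $f$ is non-decreasing. The point $x'$ lies between $\theta$ and $m_t$, or at least not beyond it. Because $\theta$ lies between $e$ and $x'$, the new version space is contained in the segment between $e$'s side and $x'$.

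To make this actually shrink things, I will use the contrastive examples of both queries so that $\theta$ is sandwiched between two contrastive points. Let $x'_1$ come from $m_t$ and $x'_2$ from $e$; they lie on opposite sides of $\theta$. The resulting interval then has length at most $|x'_1-\theta|+|x'_2-\theta|$, or else it is capped by $r_t/2$.

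I would clean this up to show that $r_{t+1}\in[0,r_t/2]$ and $\mathbb{E}[r_{t+1}\mid r_t]\le f(r_t/2)$, matching the informal claim in the text. If the sum costs a constant factor, I would instead keep only the contrastive example of $e$ together with the label boundary. In that case the interval runs from $x'_2$ back to the labeled point on $\theta$'s other side, and bounding its length by $f(r_t/2)$ requires $x'_2$'s distance from that side. Pinning down this bookkeeping so that exactly $f(r_t/2)$ appears is the first step to settle.

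\textbf{Iteration via convexity.} Write $\mathbb{E}[r_{t+1}\mid r_t]\le f(r_t/2)=\frac{r_t}{2}\,g(r_t/2)$. Since $g$ is non-decreasing and $r_t\le 2^{-t}$ almost surely (each round at least halves), we get $g(r_t/2)\le g(2^{-t-1})$. That crude bound gives $\mathbb{E}[r_{t+1}]\le \frac12 g(2^{-t-1})\,\mathbb{E}[r_t]$. Unrolling it yields a product of $g(2^{-i})$ times $2^{-n'}$, which is close to the target expression $2^{-\lfloor n/2\rfloor-1}\prod_{i=1}^{\lfloor n/2\rfloor} g(2^{-i})$ with $n'=\lfloor n/2\rfloor$ rounds.

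I expect the paper instead applies Lemma~\ref{lem:conv-rv}, because $x\mapsto f(x)=x\,g(x)$ is convex on $[0,2^{-t}]$. This holds because $g$ is convex, non-decreasing and nonnegative. The lemma then gives $\mathbb{E}[f(r_t/2)]\le \frac{\mathbb{E}[r_t]}{2^{-t}}\,f(2^{-t-1})$, using $f(0)=0$, and that also unrolls to a telescoping product. In either version, the $2^{-\lfloor n/2\rfloor}$ and the product of $g$ values arise from the induction $\mathbb{E}[r_t]\le 2^{-t}\prod_{i=1}^{t} g(2^{-i})$. I would verify this induction carefully, aligning the indices so the product runs over $i=1,\dots,t$.

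\textbf{Output and error.} The learner outputs the midpoint of the final version space. Its error is at most $r_{n'}/2$, so $\mathbb{E}[\err]\le \mathbb{E}[r_{n'}]/2\le 2^{-n'-1}\prod_{i=1}^{n'} g(2^{-i})$. This is at most $\varepsilon$ by the choice of $n$, where $n$ queries give $\lfloor n/2\rfloor$ rounds.

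The main obstacle is the single-round claim $\mathbb{E}[r_{t+1}]\le f(r_t/2)$ with the exact constant. This requires choosing which endpoint to query and which contrastive points bound the new interval, so that only one expected-distance term of size $f(\cdot/2)$ appears. Here $f(x)\le x$ ensures that the contrastive point stays within the halved interval's scale.
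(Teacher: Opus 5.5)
Your one-round step is left open, and your fallback does not close it. The construction you describe is the paper's: query the midpoint $m_t$, then the endpoint $e$ on $\theta$'s side, and sandwich $\theta$ between the two contrastive examples, intersected with the half-interval. But bounding $\mathbb{E}|x'_1-\theta|$ and $\mathbb{E}|x'_2-\theta|$ separately by $f(r_t/2)$ gives only $\mathbb{E}[r_{t+1}\mid r_t]\le 2f(r_t/2)=r_t\,g(r_t/2)$. That factor $2$ per round compounds to $2^{\lfloor n/2\rfloor}$ and cancels the entire $2^{-\lfloor n/2\rfloor}$ in the claimed bound. The fallback fails as well. With only $x'_2$ and the label of $m_t$, the interval from $m_t$ to $x'_2$ has length $|m_t-\theta|+|x'_2-\theta|$, and the first term is deterministic and can be close to $r_t/2$. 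The missing idea is that $g$ non-decreasing makes $f$ superadditive: for $a,b>0$, $f(a)+f(b)=a\,g(a)+b\,g(b)\le (a+b)\,g(a+b)=f(a+b)$. Since $\theta$ lies between $m_t$ and $e$, we have $|m_t-\theta|+|e-\theta|=|e-m_t|=r_t/2$. Hence $\mathbb{E}[r_{t+1}\mid r_t]\le f(|m_t-\theta|)+f(|e-\theta|)\le f(r_t/2)$, while $r_{t+1}\le r_t/2$ follows from the label of $m_t$.

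With that step supplied, the rest of your argument is correct. Your ``crude'' recursion is not just close to the target, it is exactly the target. Because $r_t\le 2^{-t}$ deterministically and $g$ is non-decreasing, conditioning on the history gives $\mathbb{E}[r_{t+1}\mid r_t]\le \frac{r_t}{2}\,g(r_t/2)\le \frac{r_t}{2}\,g(2^{-t-1})$. Taking expectations and unrolling from $r_0=1$ gives $\mathbb{E}[r_{n'}]\le 2^{-n'}\prod_{i=1}^{n'}g(2^{-i})$, and the midpoint output has expected error at most half of this. The paper instead runs a backward induction on $h(x)=\frac{x}{2^t}\prod_{i=1}^{t}g(x/2^{i})$, which needs Lemma~\ref{lem:mult-conv-increas} and Lemma~\ref{lem:conv-rv}. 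Both routes end by substituting the worst case $r_0=1$, so they give the same bound. Your forward version is simpler and shows that this theorem needs only monotonicity of $g$, not convexity.
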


  A key observation is that in Lemma~\ref{lem:conv-rv}, the upper bound is achieved in the extreme case where the random variable $Z$ has only two atoms, at $a$ and $b$. This suggests an adversarial strategy for lower bounds: at each time step, choose the distribution of contrastive examples to be supported on two atoms, one at the target concept and the other at an endpoint of the current version space. With this key idea, we obtain a lower bound on the expected sample complexity of active learning for thresholds with respect to $\CEprob$.


\begin{restatable}{theorem}{ActiveProbThreshLower}
     \label{thm:active-prob-thresh-lower}
Let $f: \reals^{\geq 0} \to \reals^{\geq 0}$ be a non-decreasing function. Denote $g(x) := \min \!\left(\frac{f(x)}{x}, 1\right)$. Then 
\[
\begin{aligned}
    & \mathcal{M}_{\expc, \act} \left[\thresh, \CEprob\right](\varepsilon)  \geq\\
& \quad  
 \min \left\{ n \ge 0 \,\middle|\, 2^{-(2n+1)} \prod_{i = 1}^{n} g(4^{-i}) \le \varepsilon \right\}.
\end{aligned}
\]
\end{restatable}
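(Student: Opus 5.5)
We prove the lower bound with an adversarial oracle and a random target, in the spirit of Yao's principle. The adversary uses two-atom contrastive distributions, which by Lemma~\ref{lem:conv-rv} are the extremal case. We maintain the invariant that after $n$ queries the target $\theta$ is uniformly distributed over an interval $I_n$ whose length $L_n$ is random. We show by induction that
\[
\mathbb{E}[L_n] \ge 2^{-2n}\prod_{i=1}^n g(4^{-i}).
\]
Since any output threshold has expected error at least $\mathbb{E}[L_n]/4$ when $\theta$ is uniform on $I_n$, and each step contributes a factor of order $\tfrac14 g(4^{-i})$ with the correct constant bookkeeping, the claimed bound $2^{-(2n+1)}\prod_i g(4^{-i})$ follows, up to choosing the constants in the induction.

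\textbf{One step of the construction.} Let the current version space be $I=[u,u+L]$, with $\theta$ uniform on $I$, and suppose the learner queries $x$. Split $I$ into four quarters. With probability at least $1/2$ the target lies in a pair of quarters not adjacent to $x$, so that $d(x,\theta)\ge L/4$; call this the good event. On the good event, the adversary's oracle $\mathcal D_x$ places mass $p$ on a point $z$ and mass $1-p$ on $\theta$ itself, where $z$ is the endpoint of the current version space on the side of $\theta$ away from $x$ (so $\ell(z)\neq\ell(x)$). It chooses $p=\min(1,f(r)/|z-\theta|)$ with $r=d(x,\theta)$, which keeps $\mathbb{E}[d(x',\theta)]\le f(r)$.

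If the atom at $z$ is returned, the learner learns nothing beyond the label of $x$, which we also reveal. If the atom at $\theta$ is returned, the learner learns $\theta$ exactly, and we pessimistically set $L=0$. On the good event, then, with probability $p$ the new version space is the part of $I$ on the far side of $x$, of length at least $L/4$. A $p$ of order $\min(f(L/4)/L,1)$ requires $r\ge L/4$, and monotonicity of $f$ gives $f(r)\ge f(L/4)$, so $p \gtrsim g(L/4)$.

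\textbf{Turning the step into a product.} The main obstacle is that $L$ is random and $g$ is not assumed convex or monotone, so we cannot push expectations inside $g$. We avoid this by conditioning on the event $E_n$ that at every step up to $n$ the good event occurred and the $z$-atom was returned. On $E_n$ we have $L_n\ge 4^{-n}$ deterministically, since each step shrinks the interval by a factor of at most $4$ even in the worst branch (here I will use the quarters argument carefully). Hence $L_i\in[4^{-i},\cdot]$ and $d(x,\theta)\ge 4^{-i}$ on $E_i$.

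We therefore use $f(d)\ge f(4^{-i})$ and $|z-\theta|\le 1$. This gives a survival probability at step $i$ of at least $\tfrac12\min(f(4^{-i}),1)\ge \tfrac12 g(4^{-i})\cdot 4^{-i}$. To avoid this extra $4^{-i}$, we should instead set the distance scale as $|z-\theta|\le L_{i-1}\le 4\cdot 4^{-i}$, possibly by shrinking to intervals of length exactly $4^{-(i-1)}$ revealed by the adversary. That yields a per-step factor of about $\tfrac12 g(4^{-i})$.

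Multiplying the per-step factors gives
\[
\Pr[E_n]\ge 2^{-n}\prod_i g(4^{-i}),\qquad \mathbb{E}[\err]\ge \Pr[E_n]\cdot c\,4^{-n}.
\]
Matching the constant to $2^{-(2n+1)}$ requires absorbing the factors $\tfrac12$ through the exact choice of quarters. Any $m$ below the stated minimum therefore leaves expected error above $\varepsilon$ for some target.
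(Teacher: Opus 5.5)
Your core mechanism is the paper's: a two-atom oracle that puts some mass on the far endpoint $z$ of the current version space and the rest on $\theta$ itself, analyzed on the branch where $z$ is always returned. The gap is the uniform-prior framing. It leaks a constant factor every round, and these factors cannot be ``absorbed through the exact choice of quarters,'' so the plan does not reach the stated bound $2^{-(2n+1)}\prod_{i=1}^n g(4^{-i})$.

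Concretely, to afford survival probability $g(4^{-i})$ in round $i$ (when $L_{i-1}=4^{-(i-1)}$) you need both $|x-\theta|\ge 4^{-i}$ and $|z-\theta|\le 4^{-i}$. Your fix $|z-\theta|\le L_{i-1}=4\cdot 4^{-i}$ only guarantees $p\ge \tfrac14 g(4^{-i})$, not ``about $\tfrac12 g(4^{-i})$''. The $\theta$'s meeting both requirements form the quarter of $I$ adjacent to $z$. A learner that queries, say, a quarter point of $I$ leaves only one such quarter, of prior mass $1/4$; your ``good event'' of mass $1/2$ allows $|z-\theta|$ up to $3L/4$. Also, with $p=\min(1,f(r)/|z-\theta|)$ depending on $\theta$, the posterior given the $z$-atom is not uniform, so your invariant fails unless $p$ is constant on the surviving set.

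Repairing these points, the guarantee your route yields is $\Pr[E_n]\ge 4^{-n}\prod_i g(4^{-i})$. Hence the expected error is at least $\tfrac14\cdot 4^{-n}\Pr[E_n]\ge 2^{-(4n+2)}\prod_i g(4^{-i})$, a factor $2^{-(2n+1)}$ short of the theorem. This is harmless for the asymptotic entries of Table~\ref{tab:prob-active}, but it is not the stated inequality. Even your own estimate $\Pr[E_n]\ge 2^{-n}\prod_i g(4^{-i})$ gives only $\mathbb{E}[L_n]\ge 2^{-3n}\prod_i g(4^{-i})$, not the announced $2^{-2n}\prod_i g(4^{-i})$.

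The missing idea is that no prior on $\theta$ is needed. Along the surviving branch, shrink to the quarter of $I_t=[a_t,b_t]$ farthest from $x_t$ (e.g.\ $[\tfrac{a_t+3b_t}{4},b_t]$ if $x_t\le\tfrac{a_t+b_t}{2}$). Let the oracle return $b_t$ with the \emph{same} probability $g(r_t/4)$ for every $\theta$ in that quarter, and return $\theta$ otherwise. Validity is $g(r_t/4)\,|b_t-\theta|\le g(r_t/4)\,r_t/4\le f(r_t/4)\le f(|x_t-\theta|)$. The transcript on this branch is then identical for every $\theta$ in the final interval $I_m$ of length $4^{-m}$. So the target can be fixed afterwards as whichever endpoint of $I_m$ the learner's output handles worse, giving error at least $|I_m|/2$ on a branch of probability exactly $\prod_{i=1}^m g(4^{-i})$. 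There is no cost for $\theta$ ``landing'' in the good quarter, and the worst-endpoint factor $\tfrac12$ replaces your average-case $\tfrac14$; that is exactly where $2^{-(2m+1)}$ comes from.

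Your Yao-style prior does have one merit: it would also cover randomized learners, whose surviving branch depends on their coins. The after-the-fact choice of $\theta$ implicitly treats the learner as deterministic. As planned, though, your approach pays for that coverage in the constant.
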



The above theorem also implies a lower bound for the active contrastive sample complexity.

\begin{restatable}{corollary}{ActiveProbThreshPrLower}
    \label{cor:active-prob-thresh-pr-lower}
    Let $f: \reals^{\geq 0} \to \reals^{\geq 0}$ be a non-decreasing function. Denote $g(x) := \min \!\left(\frac{f(x)}{x}, 1\right)$. Then 
    \[
    \begin{aligned}
        & \mathcal{M}_{\Pr, \act} \left[\thresh, \CEprob\right](\varepsilon, \delta) \geq \\
    & \quad  
    \min \left(\min \left\{ n \ge 0 \,\middle|\, \prod_{i = 1}^{n} g(4^{-i}) \le \delta \right\},  \frac{\log_2 (1 / \varepsilon) - 1}{2} \right)\,.
    \end{aligned}
    \]
\end{restatable}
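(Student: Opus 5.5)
We derive the corollary from Theorem~\ref{thm:active-prob-thresh-lower}, or more precisely from the adversarial construction behind it, using a Markov-type argument. Fix an active learner $\mathcal A$ and a budget of $m$ queries. We run the same adversarial oracle used in the proof of Theorem~\ref{thm:active-prob-thresh-lower}. At each query, the contrastive distribution $\mathcal D_x$ is supported on two atoms: one at the current target (the ideal contrastive point) and one at the far endpoint of the current version space. The weights are chosen so that the expectation constraint $\mathbb E[d(x',x^d_{\min})]\le f(d(x,x^d_{\min}))$ holds with equality. Taking the target uniformly from a suitable sub-interval, the probability that the \(i\)-th query reveals the exact threshold is at most about $g(4^{-i})$. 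With the complementary probability, the version space shrinks by at most a factor $4$ per step, which is where the $2^{-(2n+1)}$ comes from.

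The key event is $E_m$: no contrastive example among the first $m$ queries is ever the informative atom, so the learner learns essentially nothing beyond a membership query each round. I would re-read the proof of Theorem~\ref{thm:active-prob-thresh-lower} to extract the quantitative statement behind it. I expect it to say that $\Pr[E_m]\ge\prod_{i=1}^m g(4^{-i})$, perhaps after reindexing, and that on $E_m$ the version space still has length at least $2^{-(2m+1)}$. Combining these with the uniform-prior argument gives expected error at least $\tfrac12$ times the version-space length, conditioned on $E_m$.

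Now suppose $m$ is smaller than both quantities on the right-hand side. Since $m<\min\{n\mid\prod_{i\le n} g(4^{-i})\le\delta\}$, we have $\prod_{i=1}^m g(4^{-i})>\delta$, and hence $\Pr[E_m]>\delta$. Since $m<(\log_2(1/\varepsilon)-1)/2$, we have $2^{-(2m+1)}>\varepsilon$. On $E_m$ the version space therefore has length more than $2\varepsilon$ (after adjusting constants, or using the bound exactly as derived). Under the uniform prior on the target, any hypothesis errs by more than $\varepsilon$ with conditional probability at least a constant. Averaging over the prior and picking a worst-case target then yields a fixed $\ell$ with $\Pr[\err>\varepsilon]\ge\delta$. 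So $m$ queries do not suffice, which gives the claimed lower bound.

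The main obstacle is the constant-factor bookkeeping. We have to ensure that, on $E_m$, the residual uncertainty is large enough that error exceeds $\varepsilon$ with probability close to $1$ rather than merely $1/2$. Otherwise only $\Pr[\err>\varepsilon]\ge\delta/2$ follows. My first attempt is to make the adversary place the threshold at a point that the learner's final hypothesis misses by at least half the version-space length, which it can do deterministically given $\mathcal A$'s output on $E_m$. That turns the probability bound into exactly $\Pr[E_m]>\delta$.
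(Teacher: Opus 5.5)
Your proposal is correct and follows the paper's route. The paper's proof is the one-line observation that the adversary of Theorem~\ref{thm:active-prob-thresh-lower} forces, with probability at least $\prod_{i=1}^m g(4^{-i})$, an error of at least $2^{-(2m+1)}$, with the target fixed at the end as whichever endpoint of the surviving interval the learner's hypothesis misses more (exactly your fix), so the uniform-prior detour is unnecessary. One slip to correct: in that construction the uninformative far-endpoint atom carries probability $g(r_t/4)$ and the revealing atom $1-g(r_t/4)$, not the reverse as your first paragraph says, and your later claim $\Pr[E_m]\ge\prod_{i=1}^m g(4^{-i})$ is the correct one.
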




\subsection{Expected Sample Requirement for Accuracy $\varepsilon$} \label{sec:active-prob-acc-req}

In the previous subsection, we observed a significant drop-off in the active expected contrastive sample complexity from the deterministic to the probabilistic AMDM, even for threshold functions. In this subsection, instead of bounding the expected error, we focus on the expected number of samples a learner requires in order to guarantee error at most $\varepsilon$---a quantity, which we call the \emph{expected sample requirement for accuracy $\varepsilon$}. Interestingly, for this quantity in contrast with the previous subsection, we are able to derive bounds for both thresholds and half-spaces, and these bounds are similar to those we obtained in Section~\ref{sec:active-noisy} under the deterministic model, even in the probabilistic setting.



\begin{definition}
    Let $\cC$ be a concept class over $\cX$, let $\CE$ refer to a mechanism for selecting contrastive examples and let $\varepsilon \in (0, 1]$. Let $\mathcal{A}$ be an active learning algorithm. For every $\ell \in \cC$, let $N_{\cA, \ell,\varepsilon}$ be the random variable representing the minimal $m \in \mathbb{N}$, such that $\err(C_{\mathcal{A}, \ell, m}, \ell) \leq \varepsilon$. Here $C_{\mathcal{A}, \ell, m}$ is the final hypothesis output by $\mathcal{A}$ after interacting with the two-step oracle for $m$ steps. Then define $\mathcal{N}(\cC, \CE, \cA,\varepsilon) := \sup_{\ell \in \cC} \mathbb{E}[ N_{\cA, \ell, \varepsilon}]$ to be the \emph{expected sample requirement} of $\cA$ for achieving accuracy $\varepsilon$ on $\cC$ with respect to $\CE$. Also, define
    $$
    \mathcal{N}_{\act}(\cC, \CE, \varepsilon) = \inf_{\text{active algorithm }\cA} \mathcal{N}(\cC, \CE, \cA,\varepsilon)\,.
    $$
\end{definition}

We begin by analyzing threshold functions. Learning proceeds in several sub-phases. In each sub-phase, suppose the current version space is an interval of size $r$. The learner repeatedly queries both endpoints of the interval until their contrastive examples are less than $4f(r)$ apart. We then show that each sub-phase takes constant time in expectation. The key idea is to repeat a procedure until a benchmark is met; we refer to this procedure as \emph{verification}.



\begin{restatable}{theorem}{ActiveProbThreshAccReq}\label{thm:active-prob-thresh-acc-req}
    Let $f: \reals^{\geq 0} \to \reals^{\geq 0}$ be a non-decreasing
function. Then, 
    for every $\varepsilon \in (0, 1]$, 
    $$\mathcal{N}_{\act} \left(\thresh, \CEprob, \varepsilon\right) \leq  4 \STOP{4f}{1}{2\varepsilon}.$$
    
    
\end{restatable}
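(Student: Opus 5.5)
We will analyze an explicit phase-based active algorithm and bound its expected query count. The learner maintains a version-space interval $[a,b]$ of length $r$, initially $[0,1]$ with $r=1$. It stops and outputs any threshold in the interval once $r\le 2\varepsilon$; this is legitimate because the target $\theta$ lies in $[a,b]$, so the error is at most $r$. Each phase shrinks $r$ to at most $4f(r)$. Hence the number of phases is at most $\STOP{4f}{1}{2\varepsilon}$, and it suffices to show that each phase uses at most $4$ queries in expectation.

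Within a phase on $[a,b]$, the learner queries both endpoints $a$ and $b$, obtaining contrastive examples $a'$ and $b'$. Both endpoints lie at distance at most $r$ from $\theta$ (they are on opposite sides, or on the boundary). Hence the ideal contrastive point is $\theta$ itself, and $\mathbb{E}[|a'-\theta|]\le f(|a-\theta|)\le f(r)$ by monotonicity; the same holds for $b'$. The labels are informative. Since $\ell(a')\ne\ell(a)$, the point $a'$ lies on $b$'s side, so we may shrink to $[a,a']$ or the analogous interval; combining both contrastive points, $\theta$ lies between $a'$ and $b'$ with respect to the labels, and the new version space is the interval spanned by $\{a',b'\}$ intersected with $[a,b]$. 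The verification rule is: if $|a'-b'|\le 4f(r)$, accept and move to the new interval; otherwise repeat the pair of queries. An accepted step gives new length at most $4f(r)$, as required. (If $a$ is exactly at $\theta$, the label conventions need a small care; we handle this with the closure/Cauchy-limit convention, or by noting that the version space is closed.)

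The key step is bounding the success probability of one round. We have $|a'-b'|\le |a'-\theta|+|b'-\theta|$, and by Markov's inequality $\Pr[|a'-\theta|>2f(r)]\le 1/2$ fails to give a useful bound with a union bound. We therefore use a sum instead: $\mathbb{E}[|a'-\theta|+|b'-\theta|]\le 2f(r)$, so Markov gives $\Pr[|a'-b'|>4f(r)]\le 1/2$. Each round costs $2$ queries and succeeds with probability at least $1/2$, independently across rounds, since the oracle samples independently. The number of rounds per phase is therefore dominated by a geometric random variable with mean at most $2$, so each phase costs at most $4$ queries in expectation.

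The main obstacle is to make the accounting rigorous for a random number of phases. Because the new length is at most $4f(r)$ deterministically on acceptance, and $4f$ is non-decreasing, induction gives that after $j$ phases $r\le (4f)^{(j)}(1)$. The number of phases is then deterministically at most $\STOP{4f}{1}{2\varepsilon}$, possibly fewer if the interval shrinks more. Linearity of expectation (or Wald's identity over phases, each phase having conditional expected cost at most $4$ given the past) yields $\mathbb{E}[N]\le 4\STOP{4f}{1}{2\varepsilon}$. If $\STOP{4f}{1}{2\varepsilon}=\infty$ the bound is trivial. We also need to check that $f(0)=0$ is not required: the argument uses only monotonicity and the distance bound at scale $r$.
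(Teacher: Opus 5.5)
Your proof follows the paper's argument. Both query the two endpoints of the current interval. Both apply Markov's inequality to $|a'-\theta|+|b'-\theta|$, whose mean is at most $2f(r)$, so the check $|a'-b'|\le 4f(r)$ passes with probability at least $1/2$. Both dominate the number of repetitions by a geometric variable, so each phase costs $4$ queries in expectation. Both count at most $\STOP{4f}{1}{2\varepsilon}$ phases. Your induction via monotonicity of $4f$ and your Wald-style accounting make explicit what the paper leaves implicit.

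There is one slip to fix. With the stopping rule $r\le 2\varepsilon$, outputting \emph{any} threshold in $[a,b]$ only guarantees error at most $r\le 2\varepsilon$, not $\varepsilon$. You must output the midpoint of the final interval, as the paper does, so that the error is at most $r/2\le\varepsilon$.
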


The above theorem also immediately implies an upper bound for the active contrastive sample complexity.

\begin{restatable}{corollary}{activeProbThreshPrUpper} \label{cor:active-prob-thresh-pr-upper}
    For any non-decreasing function $f:\reals^{\geq 0} \rightarrow \reals^{\geq 0}$, we have
    \[
    \begin{aligned}
         \mathcal{M}_{\Pr, \act}\left[\thresh, \CEprob\right](\varepsilon, \delta) \\ \leq 8 \STOP{4f}{1}{2\varepsilon} + 16 \ln \left(\frac{1}{\delta} \right).
    \end{aligned}  
    \]
\end{restatable}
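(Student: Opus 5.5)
We obtain the bound from Theorem~\ref{thm:active-prob-thresh-acc-req} by a standard Markov-plus-independent-repetition argument. Write $T := \STOP{4f}{1}{2\varepsilon}$. The sub-phase algorithm in the proof of Theorem~\ref{thm:active-prob-thresh-acc-req} has two useful features. First, the learner can detect when accuracy $\varepsilon$ has been reached: after the verification step, the version space is a known interval whose length upper-bounds the error. Second, its random stopping time $N$ satisfies $\mathbb{E}[N]\le 4T$ for every target $\ell$. The goal is to turn this bound on the expectation into a high-probability bound with budget $8T+16\ln(1/\delta)$.

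The cleanest route is to use the finer structure of that proof rather than Markov's inequality alone. There are at most $T$ sub-phases. In each sub-phase the learner repeats a round of $2$ queries (both endpoints of the current interval) until verification succeeds. The proof of Theorem~\ref{thm:active-prob-thresh-acc-req} shows that each round succeeds with probability at least $1/2$, conditionally on the past. This follows from Markov's inequality applied to the contrastive displacement, whose expectation is at most $f(r)$, so that each contrastive example lies within $2f(r)$ of the target with probability at least $1/2$. So the total number of rounds is dominated by the number of trials needed to collect $T$ successes in a sequence of independent Bernoulli$(1/2)$ trials.

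We therefore allow $m = 8T + 16\ln(1/\delta)$ queries, that is, $M = 4T + 8\ln(1/\delta)$ rounds. The algorithm fails only if fewer than $T$ of the $M$ rounds succeed. Let $S \sim \mathrm{Bin}(M,1/2)$, so $\mathbb{E}[S] = M/2 = 2T + 4\ln(1/\delta)$. The failure event is $S < T \le \mathbb{E}[S]/2$. The multiplicative Chernoff bound $\Pr[S \le (1-\eta)\mu] \le e^{-\eta^2\mu/2}$ with $\eta = 1/2$ gives
\[
\Pr[S<T] \le e^{-\mu/8} \le e^{-\ln(1/\delta)/2}.
\]
This is only $\sqrt{\delta}$, so the constants need a little care. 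The fix is to use the sharper form $\Pr[S\le T]\le \exp(-(\mu-T)^2/(2\mu))$ with $\mu - T \ge \mu/2$, or simply to enlarge the constants. The claimed constants $8$ and $16$ should follow by choosing $\eta$ appropriately, or via a direct geometric-sum bound on the negative binomial tail. To handle the conditional dependence between rounds, we use a martingale (Azuma/Freedman-type) version of Chernoff, or a coupling with i.i.d.\ coins: each round's success probability is at least $1/2$ given the history.

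If the algorithm has not finished within the budget, it still outputs its current hypothesis. On the success event the error is at most $\varepsilon$, and this holds for every $m'\ge m$, as the definition requires, because extra rounds only help. The main obstacle is making the domination by Bernoulli$(1/2)$ trials rigorous, namely verifying from the proof of Theorem~\ref{thm:active-prob-thresh-acc-req} that the per-round success probability is bounded below uniformly given the history, together with tuning the Chernoff constants to land exactly on $8T+16\ln(1/\delta)$.
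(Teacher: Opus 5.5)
Your plan matches the paper's proof. You pair the two endpoint queries into a round and let $A_j$ indicate that round $j$ shrinks the interval to length at most $4f(r)$. You note that $T=\STOP{4f}{1}{2\varepsilon}$ successes suffice, since $4f$ is non-decreasing and interval lengths never increase. You then bound the number of successes among $M\approx m/2$ rounds.

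\textbf{The gap.} You never prove the tail bound at the stated constants, and that is the only nontrivial step. Your computation is right: the multiplicative bound with $\eta=1/2$ yields only $e^{-\mu/8}\le\sqrt{\delta}$. The paper's written proof has the same slip. It asserts $e^{-m/16}$ from Lemma~\ref{lem:mult-chern-bound} with $\gamma=1/2$, but that lemma applied to $m/2$ indicators of mean $1/2$ gives only $e^{-m/32}$.

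None of your repairs closes this. Inserting $\mu-T\ge\mu/2$ into $\exp(-(\mu-T)^2/(2\mu))$ just returns $e^{-\mu/8}$. Enlarging the constants proves a weaker statement. The remaining suggestions are not carried out.

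What is needed is to keep the exact slack. With $L=\ln(1/\delta)$ and $\mu=M/2=2T+4L$, we have $\mu-T=T+4L$, so $(\mu-T)^2/(2\mu)=(T+4L)^2/(2(2T+4L))\ge (T+4L)/4\ge L$.

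A cleaner route also handles the dependence between rounds, which the paper glosses over by treating the $A_t$ as independent. Apply Azuma--Hoeffding to the martingale differences $A_j-p_j$, where $p_j=\Pr[A_j=1\mid\text{past}]\ge 1/2$ and each difference lies in an interval of length $1$. Since $M=4T+8L\ge 4T$, the event $S<T$ forces $\sum_j(A_j-p_j)<T-M/2\le -M/4$. Hence
\[
\Pr[S<T]\;\le\;\Pr\Bigl[\sum_{j\le M}(A_j-p_j)\le -\tfrac{M}{4}\Bigr]\;\le\;e^{-2(M/4)^2/M}\;=\;e^{-T/2-L}\;\le\;\delta .
\]
With $m=2M$, this is exactly the $e^{-m/16}$ the paper writes down.

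\textbf{A smaller misstatement.} Your justification of the per-round success probability does not work as written. Markov applied to each contrastive example separately shows each lies within $2f(r)$ of $\theta$ with probability at least $1/2$. That does not give probability at least $1/2$ that both do.

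The argument in Theorem~\ref{thm:active-prob-thresh-acc-req} instead applies Markov to the new interval length $x'_{a_t}-x'_{b_t}$. Its conditional expectation is at most $f(\theta-a_t)+f(b_t-\theta)\le 2f(r)$, so it exceeds $4f(r)$ with probability at most $1/2$. Because the oracle draws fresh contrastive examples at each query, this bound holds conditionally on the history. That is precisely the $p_j\ge 1/2$ the martingale bound uses.
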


    We now proceed to analyzing half-spaces. The core idea is similar to the threshold case: learning is again done in several sub-phases, and in each sub-phase the learner performs a verification process. The main difference is that verification is more intricate for half-spaces. Lemma~\ref{lem:HS-prob-helper} explicitly describes this verification procedure.

\begin{restatable}{lemma}{HSProbHelper}
    \label{lem:HS-prob-helper}
    Suppose $\cX = \mathcal{B}(\vec 0_k, \tfrac{1}{2})$ and let $\ell \in \HS$, $x \in \cX$. Let $f:\reals^{\geq 0} \rightarrow \reals^{\geq 0}$ be a non-decreasing function such that $f(a) \leq \tfrac{a}{4}$. Denote $r = \dist(x, \ell)$. Then there is a learner that, in expectation, with 4 queries from $\CE^{d, f}_{\mathrm{prob}}$ finds a $z^*$ such that (i) $\dist(z^*, \ell) \leq r$, and (ii) with probability at least $1/2$ we have $\dist(z^*, \ell) \leq 3 f(2r)$.
\end{restatable}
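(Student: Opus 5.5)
The plan is to build the verification procedure around one query pair, repeated until an observable acceptance test passes. In one \emph{round} the learner queries $x$ and receives $x'$ with $\ell(x')\neq\ell(x)$. It then queries $x'$ and receives $x''$ with $\ell(x'')=\ell(x)$. Write $\vec x^{\proj}$ for the projection of $x$ on $\ell$ and $d_1:=\|x'-\vec x^{\proj}\|$. By the definition of $\CEprob$ we have $\mathbb{E}[d_1]\le f(r)$, so by Markov's inequality $\Pr[d_1\le 2f(r)]\ge 1/2$. Conditioned on $x'$, write $d_2$ for the distance from $x''$ to the projection of $x'$. Then $\mathbb{E}[d_2\mid x']\le f(\dist(x',\ell))$, and Markov again gives constant probability that $d_2\le 2f(\dist(x',\ell))$.

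The learner can observe two quantities, $D:=\|x-x'\|$ and $E:=\|x'-x''\|$. They satisfy the deterministic sandwiches $r\le D\le r+d_1$ and $\dist(x',\ell)\le E$; the second holds because $x'$ and $x''$ have opposite labels. On the good event $d_1\le 2f(r)\le r/2$ we get $D\le 3r/2\le 2r$, hence $f(D)\le f(2r)$ by monotonicity. The acceptance test I would use is $E\le 3f(D)$. Whenever it holds we obtain $\dist(x',\ell)\le 3f(D)$, and on the good event this is at most $3f(2r)$, which gives (ii). The constants, and possibly the exact form of the test (for instance $E\le 2f(D)+$ slack), will be tuned so that a round is accepted with probability at least $1/2$. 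Using $f(a)\le a/4$, on the good first-step event we have $\dist(x',\ell)\le d_1\le 2f(r)$ and $f(\dist(x',\ell))\le f(r)$, so the second Markov step yields $E\le \dist(x',\ell)+d_2\le 2f(r)+2f(r)$. That should fit under $3f(D)$ after the constants are adjusted.

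Property (i) needs a deterministic guarantee even when the test is passed on a bad event. Here I would not output $x'$ directly. Instead I would output a point $z^*$ on the segment $[x,x']$. Along this segment the signed distance to $\ell$ is affine: it equals $r$ at $x$ and $-\dist(x',\ell)$ at $x'$. Hence for $z=x'+s(x-x')$ we get $\dist(z,\ell)\le\max\{sr,(1-s)E\}\le\max\{r,(1-s)E\}$. Choosing $s$ so that $(1-s)E\le\min(E,\,D/4)$ keeps the bound at most $r$, since $D/4\le r$ is false in general but $D\ge r$ lets us cap against $D$. So I must choose the interpolation carefully: take the largest $(1-s)$ such that $(1-s)E\le 3f(D)$ with $3f(D)\le 3D/4$. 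On the good event this still keeps $\dist(z^*,\ell)\le 3f(2r)$, because $sr\le$ the crossing offset is controlled by $d_1$. If no round has been accepted, the learner falls back to $z^*=x$, which trivially satisfies (i).

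Finally, each round uses 2 queries and succeeds with probability at least $1/2$, so the number of rounds until acceptance is geometric with mean at most $2$. This gives 4 queries in expectation. The main obstacle is the step just described: certifying (i) deterministically when $r$ is unknown and $D$ only upper-bounds $r$ noisily. I expect the segment-interpolation trick, combined with the cap $f(D)\le D/4$, to be what makes the two requirements compatible. Getting the constants so that acceptance probability is at least $1/2$ and the final bound is exactly $3f(2r)$ is where I would spend the effort.
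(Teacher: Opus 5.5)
Your round (query $x$, then query $x'$) with the test $E\le 3f(D)$ can be made to give (ii). Property (i), however, has a genuine gap, and no tuning of constants closes it. Property (i) must hold surely, because Theorem~\ref{thm:active-prob-upper-HS} relies on $\dist(z_t,\ell)$ being non-increasing. On an accepted round your rule returns $z^*=x'$: since $E\le 3f(D)$ already holds, the largest admissible $1-s$ is $1$. Concretely, take $f(a)=a/4$. Let $\mathcal D_x$ put a small positive mass on the point at depth $2r$ beyond the boundary along the normal through $x$, and the rest just across the boundary; this is compatible with $\mathbb E\|x'-\vec x^{\proj}\|\le f(r)$. Suppose that point is drawn and $x''$ lands next to its projection. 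Then $D=3r$ and $E\approx 2r\le \tfrac94 r=3f(D)$, so the round is accepted, yet $\dist(z^*,\ell)=2r>r$.

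The underlying problem is that $D$, $E$ and the labels you already know never tell you where the segment $[x,x']$ crosses the boundary. The signed distance runs affinely from $r$ at $x$ to $-\dist(x',\ell)$ at $x'$. Hence a point at distance $u$ from $x$ satisfies (i) only if $u\le 2r\|x-x'\|/(r+\dist(x',\ell))$. A rare, laterally displaced draw makes $\|x-x'\|$ large, so your test passes, while $r$ is arbitrarily small compared with $\dist(x',\ell)$. In that situation only $z^*=x$ is certifiably safe, and $z^*=x$ defeats (ii).

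The missing idea is to spend the second query of each round on a \emph{label} check at the candidate point, instead of on $x'$. The paper takes $z\in[x,x']$ with $\|z-x\|=(\mathbbm I+2f)^{-1}(\|x-x'\|)$, i.e.\ $\|z-x'\|=2f(\|z-x\|)$, and accepts iff $\ell(z)=\ell(x)$.

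\begin{itemize}
\item Acceptance itself certifies (i): a point of the segment with the label of $x$ has signed distance in $[0,r]$.
\item Whenever $\|x'-\vec x^{\proj}\|\le 2f(r)$, which has probability at least $1/2$ by Markov, this $z$ is guaranteed to have label $\ell(x)$. So the number of rounds is dominated by a geometric variable with mean $2$, giving $4$ queries in expectation.
\item For (ii) you must also condition on acceptance, which your sketch does not do. Since $\Pr[\|x-x'\|>2r]\le f(r)/r\le 1/4$ and $\Pr[\text{accept}]\ge 1/2$, we get $\|x-x'\|\le 2r$ with conditional probability at least $1/2$.
\item In that case $\dist(z^*,\ell)\le \frac{\|z^*-x'\|}{\|z^*-x\|}\,r=\frac{2f(\|z^*-x\|)}{\|z^*-x\|}\,r\le 3f(2r)$. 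This uses $f(\|z^*-x\|)\le f(2r)$ and $\|z^*-x\|\ge \tfrac23\|x-x'\|\ge\tfrac23 r$.
\end{itemize}

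A smaller point: your two separate Markov events only give acceptance probability $1/4$, not $1/2$. If you want to keep your test, apply Markov to $E$ directly, using $\mathbb E[E]\le \tfrac54 f(r)$.
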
 

\begin{restatable}{theorem}{ActiveProbUpperHS} \label{thm:active-prob-upper-HS}
    Let $f:\reals^{\geq 0} \rightarrow \reals^{\geq 0}$ be a non-decreasing function such that $f(x) \leq \tfrac{x^{1 + c}}{4}$. Then, for every $\varepsilon \in (0, 1]$, 
    $$\mathcal{N}_{\act} \left(\HS, \CEprob, \varepsilon \right) \leq 8 \STOP{\tilde f}{\frac{1}{2}}{\varepsilon'} + 8,$$
    where $\tilde f(x) = 3 f(2x)$ and $\varepsilon' = \left(\frac{\varepsilon}{2^{k - 1}}\right)^{\frac{1}{c}}$. 
\end{restatable}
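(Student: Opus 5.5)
We run the verification procedure of Lemma~\ref{lem:HS-prob-helper} repeatedly, in the spirit of the proof of Theorem~\ref{thm:active-noisy-min-upper-HS}, and then output a hypothesis via Lemma~\ref{lem:err-dist}(ii). Since $f(x)\le x^{1+c}/4\le x/4$ on $[0,1]$, the hypothesis $f(a)\le a/4$ of Lemma~\ref{lem:HS-prob-helper} holds for all relevant $a$, because distances in $\cX$ are at most $1$.

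The algorithm starts at $z_0=\vec 0_k$, which satisfies $\dist(z_0,\ell)\le \frac12$. In each sub-phase $j$ it calls the learner of Lemma~\ref{lem:HS-prob-helper} on the current point $z_j$ and obtains $z_{j+1}$. Let $r_j:=\dist(z_j,\ell)$. Part (i) of the lemma gives monotonicity: $r_{j+1}\le r_j$. Part (ii) says that, with probability at least $1/2$ independently of the past, $r_{j+1}\le \tilde f(r_j)$, where $\tilde f(x)=3f(2x)$ is non-decreasing.

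Call a sub-phase a \emph{success} in the latter event. By monotonicity of $\tilde f$ and of the $r_j$, after $s$ successes we have $r_j\le \tilde f^{(s)}(1/2)$. So once $T:=\STOP{\tilde f}{\frac12}{\varepsilon'}$ successes have occurred, $r_j\le\varepsilon'$.

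The number of sub-phases needed is a sum of $T$ geometric random variables with success probability at least $1/2$, so its expectation is at most $2T$. The sub-phases are adaptive and have random length, so I will combine the counts by Wald's identity, or equivalently by linearity applied to a stopping time. Each sub-phase costs $4$ queries in expectation. Wald's identity again gives an expected total of at most $8T$ queries to reach a point $z$ with $\dist(z,\ell)\le\varepsilon'$.

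For the output step, the algorithm spends extra queries on $z$ itself: querying $z$ yields a contrastive point $z'$. I will output the half-space $C$ with boundary perpendicular to $z'-z$ through $z'$. By Lemma~\ref{lem:err-dist}(ii),
\[
\err(C,\ell)\le \frac{2^k\|z'-z^{\proj}\|}{\dist(z,\ell)}.
\]
Under $\CEprob$, the guarantee is only in expectation, $\mathbb{E}\|z'-z^{\proj}\|\le f(r)$. By Markov's inequality, with probability at least $1/2$ we get $\|z'-z^{\proj}\|\le 2f(r)$, and then $\err\le 2^{k+1}r^{c}/4=2^{k-1}r^c\le\varepsilon$ for $r\le\varepsilon'$. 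This explains the $2^{k-1}$ in $\varepsilon'$.

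To know which output is good, I will use a verification step analogous to the one inside Lemma~\ref{lem:HS-prob-helper}. The plan is to query $z$ repeatedly until two contrastive examples certify proximity, or simply to reuse the lemma's final pair. The expected number of repetitions is at most $2$ at about $4$ queries each, which yields the additive $+8$.

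\textbf{Main obstacle.} The delicate point is the quantity $N_{\cA,\ell,\varepsilon}$. It is defined as the first $m$ at which the current output has error at most $\varepsilon$, so the learner need not certify success. It suffices that at the first time when $r\le\varepsilon'$ and the contrastive sample falls within the Markov event, the output hypothesis is good. Two further cases need checking. First, if $r=0$ is reached, I will handle it by using the lemma's $z^*$ together with its own contrastive point. Second, the case $r$ small but $f(r)$ relatively large cannot occur, since $f(r)\le r^{1+c}/4$.
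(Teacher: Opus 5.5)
Your proposal is correct and is essentially the paper's argument: iterate the procedure of Lemma~\ref{lem:HS-prob-helper} from $z_0=\vec 0_k$, bound the expected number of sub-phases until $\dist(z,\ell)\le\varepsilon'$ by $2\STOP{\tilde f}{\frac{1}{2}}{\varepsilon'}$ via geometric domination (at $4$ expected queries each), and then use Lemma~\ref{lem:err-dist}(ii) with Markov's factor-$2$ slack to get error at most $2^{k-1}r^{c}\le\varepsilon$ within about two further sub-phases, which accounts for the $+8$. Drop the tentative ``certification'' step, since the paper's remark notes that the learner cannot verify its error for half-spaces. As you observe yourself, it is also unnecessary: the paper always outputs the hypothesis built from the most recent point and its contrastive example, so $N_{\cA,\ell,\varepsilon}$ is reached at the first fresh contrastive draw that lands in the Markov event.
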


\begin{remark}
    For half-spaces in contrast with threshold functions, the learner would not be able to verify when the error of its hypothesis is less than $\varepsilon$.
\end{remark}

The above theorem also implies an upper bound for the active contrastive sample complexity.

\begin{restatable}{corollary}{activeProbHSPrUpper} \label{cor:active-prob-HS-pr-upper}
    Let $f:\reals^{\geq 0} \rightarrow \reals^{\geq 0}$ be a non-decreasing function such that $f(x) \leq \tfrac{x^{1 + c}}{4}$. Then 
    \[
    \begin{aligned}
        & \mathcal{M}_{\Pr, \act}\left[\HS, \CEprob\right](\varepsilon, \delta) \\
        & \quad \leq
           32 \STOP{\tilde f}{\frac{1}{2}}{\varepsilon'} + 136 \log_2 \frac{3}{\delta}\,,
    \end{aligned}
    \]
   where $\tilde f(x) = 3 f(2x)$ and $\varepsilon' = \left(\frac{\varepsilon}{2^{k - 1}}\right)^{\frac{1}{c}}$.
\end{restatable}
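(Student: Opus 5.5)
We derive Corollary~\ref{cor:active-prob-HS-pr-upper} from Theorem~\ref{thm:active-prob-upper-HS} by converting a bound on the expected stopping time into a high-probability bound. The conversion uses Markov's inequality together with independent restarts and concentration. The difficulty is that the learner cannot detect when it has reached accuracy $\varepsilon$ (see the Remark after Theorem~\ref{thm:active-prob-upper-HS}). So we cannot simply repeat until success is verified; we must work with the internal structure of the algorithm.

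Write $T := \STOP{\tilde f}{\frac{1}{2}}{\varepsilon'}$. The algorithm behind Theorem~\ref{thm:active-prob-upper-HS} runs in sub-phases. Starting from $z_0=\vec 0_k$, each sub-phase calls the procedure of Lemma~\ref{lem:HS-prob-helper}, and the distances $\dist(z_j,\ell)$ never increase. A sub-phase is \emph{successful} if it achieves $\dist(z_{j+1},\ell)\le \tilde f(\dist(z_j,\ell))$, which happens with probability at least $1/2$ by Lemma~\ref{lem:HS-prob-helper}(ii). After $T$ successful sub-phases we have $\dist(z,\ell)\le\varepsilon'$. One more contrastive query then gives error at most $\varepsilon$ by Lemma~\ref{lem:err-dist}(ii), using $f(x)\le x^{1+c}/4$ and the choice of $\varepsilon'$.

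The active learner for the corollary runs this procedure with a fixed budget of queries. Because $\dist$ is monotone, the final point is at least as good as any point reached earlier, so it suffices to show that, within the budget, at least $T$ successes occur with probability at least $1-\delta$. This needs two concentration facts.

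\begin{enumerate}
\item \textbf{Enough successes.} Let $J := 2T + c_1\log_2(3/\delta)$ be the number of sub-phases. The outcomes of the sub-phases are not independent, but conditionally on the past each succeeds with probability at least $1/2$. By a Chernoff bound for such martingale-dominated Bernoulli sums, the number of successes is less than $T$ with probability at most $\delta/3$.
\item \textbf{Enough queries.} The number of queries used by a single sub-phase has conditional expectation at most $4$. It is in fact a geometric-type variable, since the verification repeats until a benchmark is met with constant probability. Its tail is therefore geometric, and the total query count over $J$ sub-phases exceeds $8J + c_2\log(3/\delta)$ with probability at most $\delta/3$, by a Chernoff bound for sums of geometric variables.
\end{enumerate}

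The remaining $\delta/3$ of the failure budget is reserved for the final query step, or is simply slack.

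Combining the two events, with probability at least $1-2\delta/3$ the budget $32T + 136\log_2(3/\delta)$ covers the run. Reading off the constants — $8$ queries per sub-phase times $2T$ sub-phases, with additional slack — yields the stated coefficients. If the learner runs out of budget, it outputs its current hypothesis.

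The main obstacle is the query-count tail in the second fact. Lemma~\ref{lem:HS-prob-helper} states only an expected bound of $4$ queries, so I would have to open its proof to confirm that the number of repetitions is dominated by a geometric variable with a constant success rate. If this fails, the fallback is to apply Markov's inequality to each sub-phase: a sub-phase is aborted after $8$ queries, at which point it still fails with probability at most $1/2$. The aborted sub-phase is then treated as an unsuccessful one, and the first fact is reapplied with a per-sub-phase success probability of at least $1/4$. This adjustment accounts for the larger constants $32$ and $136$.
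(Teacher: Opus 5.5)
Your architecture matches the paper's: Chernoff over sub-phase successes, concentration of the query count, and a three-way split of $\delta$. You are also right that sub-phase outcomes are only conditionally Bernoulli. But two steps fail as written.

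\textbf{The final step is not deterministic.} Once $\dist(z,\ell)\le\varepsilon'$, a fresh contrastive example $z'$ only satisfies $\mathbb{E}\|z'-z^{\proj}\|\le f(\dist(z,\ell))$. Lemma~\ref{lem:err-dist}(ii) gives $\err\le\varepsilon$ only on the event $\|z'-z^{\proj}\|\le 2f(\dist(z,\ell))$, and Markov guarantees that event with probability $1/2$, not $1-\delta/3$. The factor $2^{k-1}$ in $\varepsilon'$ is calibrated exactly to this factor $2$. Monotonicity of $\dist$ says nothing about this fresh randomness, so ``reserving $\delta/3$'' for one final query does not work.

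The paper spends roughly $\log_2\frac{3}{\delta}$ extra sub-phases on this step, so that with probability $1-\delta/3$ some late sub-phase has a good contrastive example. This is why its phase count contains $17\log_2\frac{3}{\delta}$ rather than $16\ln\frac{3}{\delta}$, giving $136=8\cdot 17$. Even this needs care: the learner cannot recognize which contrastive example was good, while the bound concerns its final output. A median-type selection among $O(\log\frac{1}{\delta})$ fresh contrastive examples of the final point closes this, at the cost of constants in $\varepsilon'$.

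\textbf{The number of sub-phases is too small.} Each phase is only guaranteed to succeed with probability $1/2$. With $J=2T+c_1\log_2\frac{3}{\delta}$ phases, the expected number of successes is $T+O(\log\frac{1}{\delta})$, with standard deviation $\Theta(\sqrt{T})$. So for $T\gg\log^2\frac{1}{\delta}$, the probability of fewer than $T$ successes approaches $1/2$ when each phase succeeds with probability exactly $1/2$. No Chernoff bound yields $\delta/3$ here. You need a multiplicative margin. The paper uses $4\STOP{\tilde f}{\frac12}{\varepsilon'}+16\ln\frac{3}{\delta}$ phases and Chernoff with $\gamma=1/2$. It is this factor $4$, times $8$ queries per phase, that produces the coefficient $32$, not ``$8\cdot 2T$ plus slack''.

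\textbf{The obstacle you flag is not one.} The proof of Lemma~\ref{lem:HS-prob-helper} shows that each two-query repetition independently returns label $1$ with probability at least $1/2$. The paper simply applies Chernoff to the number of such events within a budget of $8$ queries per planned sub-phase. Your Markov-and-abort fallback would not give success probability $1/4$ from the lemma's statement anyway. Intersecting two events that each have probability at least $1/2$ gives no lower bound.
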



An interesting phenomenon can be observed in Table~\ref{tab:active}. 
For the noise functions $f$ introduced, we observe a drop-off in the active \emph{expected} sample complexity when comparing $\CEprob$ to $\CEnoisy$, even for threshold functions. 
Nevertheless, for both $\cC \in \{\thresh,\HS\}$, the expected sample requirement for accuracy $\varepsilon$,
$\mathcal{N}_{\act}(\cC,\CEprob,\varepsilon)$, remains of the same order as
\[
\mathcal{M}_{\Pr,\act}\!\left[\cC,\CEnoisy\right](\varepsilon,1)
=
\mathcal{M}_{\exp,\act}\!\left[\cC,\CEnoisy\right](\varepsilon).
\]
Intuitively, this suggests that while with a small number of examples the learner is likely to achieve small error, there remains a non-negligible probability that the error is very large; equivalently, for a fixed sample size, the error distribution is highly skewed. 


\subsection{Passive Expected Contrastive Sample Complexity} \label{sec:passive-prob}
Finally, we analyze the passive expected contrastive sample complexity of one-dimensional threshold functions and homogeneous half-spaces with respect to the probabilistic AMDM.

First, consider learning of  threshold functions. We use Lemma~\ref{lem:highprob-to-expect} (stated below) to convert the high-probability guarantees from Theorem~\ref{thm:passive-noisy-min-thresh} for passive contrastive learning under $\CEnoisy$ into bounds in expectation. Next, observe that the algorithm in Theorem~\ref{thm:passive-noisy-min-thresh} relies on only a single contrastive example. Therefore, it achieves the same expected error under the probabilistic AMDM as under the deterministic one.

\begin{lemma} \label{lem:highprob-to-expect}
    Let $Z$ be a random variable such that $Z \in [0, \gamma]$ and $\Pr[ Z  > \varepsilon] \leq \delta$. Then
    $
     \mathbb{E}[Z] < \gamma \delta + \varepsilon (1 - \delta).
    $
\end{lemma}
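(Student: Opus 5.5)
We split the expectation according to whether the event $\{Z>\varepsilon\}$ occurs, and bound $Z$ by its worst value on each part. Let $p := \Pr[Z>\varepsilon]$, so $p\le\delta$. Since $Z\le\gamma$ always, $\mathbb{E}[Z\,\mathbbm{1}\{Z>\varepsilon\}]\le \gamma p$. On the complement, $Z\le\varepsilon$, so $\mathbb{E}[Z\,\mathbbm{1}\{Z\le\varepsilon\}]\le \varepsilon(1-p)$. Adding the two pieces gives
\[
\mathbb{E}[Z]\le \gamma p+\varepsilon(1-p)=\varepsilon+p(\gamma-\varepsilon).
\]

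The next step replaces $p$ by $\delta$. The map $p\mapsto \varepsilon+p(\gamma-\varepsilon)$ is non-decreasing whenever $\gamma\ge\varepsilon$, so $p\le\delta$ yields $\mathbb{E}[Z]\le\gamma\delta+\varepsilon(1-\delta)$. If instead $\varepsilon>\gamma$, then $\Pr[Z>\varepsilon]=0$ and $\mathbb{E}[Z]\le\gamma$. In that regime, $\gamma\delta+\varepsilon(1-\delta)\ge\gamma$ because it is a convex combination of $\gamma$ and $\varepsilon>\gamma$, so the bound still holds.

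The main obstacle is the \emph{strict} inequality in the statement. In degenerate cases it can fail: take $Z\equiv\varepsilon=\gamma$, or $\delta=1$ with $Z\equiv\gamma$, and equality holds. I would therefore prove the non-strict version $\mathbb{E}[Z]\le\gamma\delta+\varepsilon(1-\delta)$, which is all that is needed downstream when converting Theorem~\ref{thm:passive-noisy-min-thresh} into an expected-error bound. If strictness is genuinely required, I would recover it under a mild nondegeneracy assumption, such as $\varepsilon<\gamma$ together with either $p<\delta$ or $\Pr[Z<\varepsilon]>0$. Under such an assumption, one of the two bounds above is strict.
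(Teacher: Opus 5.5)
The paper states this lemma without proof, treating it as standard. Your argument is the natural one and correctly proves $\mathbb{E}[Z]\le\gamma\delta+\varepsilon(1-\delta)$: split on $\{Z>\varepsilon\}$, bound $Z$ by $\gamma$ on that event and by $\varepsilon$ on its complement, then use monotonicity in $p$. The case $\varepsilon>\gamma$ implicitly needs $\delta\le 1$, which is fine since the paper always has $\delta\in(0,1]$.

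You are right that the strict inequality is false, and the failure is not confined to degenerate cases. Take any $0\le\varepsilon\le\gamma$ and $\delta\in[0,1]$, and let $\Pr[Z=\gamma]=\delta$, $\Pr[Z=\varepsilon]=1-\delta$. This $Z$ satisfies the hypothesis and attains equality. So the lemma should be read with $\le$. That weaker form is all that Theorems~\ref{thm:passive-prob-exp-thresh} and~\ref{thm:passive-prob-HS} use, since the definition of expected contrastive sample complexity only asks for expected error at most $\varepsilon$.
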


\begin{restatable}{theorem}{PassiveProbThresh}
    \label{thm:passive-prob-exp-thresh}
    Let $f: \reals^{\geq 0} \to \reals^{\geq 0}$ be non-decreasing and invertible , 
    $\delta \in (0, 1]$, and $\varepsilon < \min\!\left( \frac{f(\tfrac{1}{4})}{4}, \tfrac{1}{8}\right)$. Then 
    \small{
    \[
    \begin{aligned}
        \mathcal{M}_{\exp, \passive}\left[\thresh, \CEprob\right](\varepsilon, \delta) 
        & \in O\!\left(\frac{\log(1/\varepsilon)}{\min(f^{-1}(\varepsilon / 2), \varepsilon)}\right), \\
        \mathcal{M}_{\exp, \passive}\left[\thresh, \CEprob\right](\varepsilon, \delta) 
        &  \in \Omega \!\left(\frac{1}{\min(f^{-1}(4\varepsilon), \varepsilon)}\right).
    \end{aligned}  
    \]}
\end{restatable}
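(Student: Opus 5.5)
Both bounds come from transferring the high-probability analysis of Theorem~\ref{thm:passive-noisy-min-thresh} to expectations. The upper bound uses Lemma~\ref{lem:highprob-to-expect}, and the lower bound uses a direct Markov-type argument. The key observation is that the algorithm behind Theorem~\ref{thm:passive-noisy-min-thresh} uses only one contrastive example. That example belongs to the largest positive primary point $x^+$, or symmetrically the smallest negative point. Under $\CEprob$ this contrastive example is a random variable whose expected distance to the target $\theta$ is at most $f(\theta - x^+)$. Since the $\CEnoisy$ analysis needs a worst-case bound, I will first apply Markov's inequality. Conditionally on $x^+$, with probability at least $1/2$ the contrastive example lies within $2f(\theta-x^+)$ of $\theta$. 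This factor $2$ is where $f^{-1}(\varepsilon/2)$ enters.

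\textbf{Upper bound.} The learner outputs the threshold at the contrastive example of $x^+$. It falls back to $x^+$ itself when that is better informed, i.e.\ it takes the closer-to-$\theta$ of the two candidates, which is identifiable since both lie on known sides of $\theta$. Let $r=\theta-x^+$. With $m$ samples, $\Pr[r> \min(f^{-1}(\varepsilon/2),\varepsilon)] \le (1-\min(f^{-1}(\varepsilon/2),\varepsilon))^m$, up to boundary cases where $\theta$ is near $0$, which I handle by symmetric use of negatives. This is at most $\varepsilon$ once $m \gtrsim \log(1/\varepsilon)/\min(f^{-1}(\varepsilon/2),\varepsilon)$.

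On the good event there are two cases. If $r\le \varepsilon$, the error is at most $\varepsilon$ trivially. Otherwise $f(r)\le \varepsilon/2$, so by Markov the expected error from the contrastive point is at most $f(r)\le\varepsilon/2$; here I use the expectation directly rather than Markov. Combining with Lemma~\ref{lem:highprob-to-expect} for $\gamma=1$, $\delta=\varepsilon$ gives $\mathbb{E}[\err]\le O(\varepsilon)$. Rescaling $\varepsilon$ by a constant absorbs into the $O(\cdot)$.

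\textbf{Lower bound.} I mimic the lower-bound construction of Theorem~\ref{thm:passive-noisy-min-thresh}. The oracle distribution $\mathcal{D}_x$ is a point mass at the farthest allowed point, at distance $f(d(x,\theta))$ on the opposite side, which satisfies the $\CEprob$ constraint with equality. The target $\theta$ is taken uniformly in a middle interval, which is available since $\varepsilon<1/8$ and $\varepsilon< f(1/4)/4$. Set $\rho=\min(f^{-1}(4\varepsilon),2\varepsilon)$. If no primary sample falls within distance $\rho$ of $\theta$ on either side, then every contrastive example lies at distance at least $\min(f(\rho),\rho)\ge$ order $4\varepsilon$ from $\theta$. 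The data are then consistent with an interval of thresholds of length order $4\varepsilon$, which no learner can locate; averaging over $\theta$ forces expected error at least order $2\varepsilon>\varepsilon$.

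The event has probability about $(1-2\rho)^m$, which stays a constant when $m\le c/\rho$. Hence $\mathbb{E}[\err]>\varepsilon$ unless $m=\Omega(1/\rho)$.

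The main obstacle is the lower bound's indistinguishability step. I must check that the contrastive examples produced by the point-mass oracle do not themselves reveal $\theta$. The contrastive positions depend on $\theta$ through $f(d(x,\theta))$, and I will argue this only constrains $\theta$ to an interval of length order $\varepsilon$, via monotonicity of $f$. Making this precise will likely require choosing the adversarial perturbation direction carefully, pushing the contrastive point away from $\theta$ into the version space.
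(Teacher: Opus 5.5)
\textbf{Lower bound.} The indistinguishability step fails for the oracle you chose. Monotonicity of $f$ works against you here, not for you. Take $\mathcal D_x$ to be a point mass at $\theta+f(\theta-x)$ for positive $x$ (resp.\ $\theta-f(x-\theta)$ for negative $x$). The map $\theta\mapsto\theta+f(\theta-x_i)$ is strictly increasing and known to the learner, so a single unclipped pair $(x_i,x_i')$ determines $\theta$ exactly. A learner tailored to this oracle family therefore has essentially zero error, and averaging over a uniform $\theta$ proves nothing. You need contrastive points that do not move with $\theta$ over the candidate set.

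The paper obtains this by invoking the deterministic construction of Theorem~\ref{thm:passive-noisy-min-thresh}. There every positive point receives the same $\tilde x^+$ and every negative point the same $\tilde x^-$, consistent with a whole interval of thresholds. The paper then observes that point masses make $\CEprob$ at least as hard. In your per-point setting, a clean repair is a two-target argument. Fix $\theta_1<\theta_2=\theta_1+\Delta$, send every positive primary to a point just above $\theta_2$, and send every negative primary to $\theta_1$. This is valid for both targets, for all primaries at distance at least $\rho$ from $[\theta_1,\theta_2]$, whenever $\Delta<f(\rho)$. On the event $E$ that no primary lands in $[\theta_1-\rho,\theta_2+\rho]$, the data have the same law under both targets, so $\max_j\mathbb{E}_{\theta_j}[\err]\ge\Pr[E]\,\Delta/2$. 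This also avoids the sample dependence of the paper's $\tilde x^{\pm}$, which a per-point distribution $\mathcal D_x$ cannot literally reproduce.

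\textbf{The min/max issue, and the upper bound.} Your quantitative claim is reversed. For $\rho=\min(f^{-1}(4\varepsilon),2\varepsilon)$ one has $f(\rho)\le 4\varepsilon$ and $\rho\le 2\varepsilon$, so $\min(f(\rho),\rho)\le 2\varepsilon$; for $f(r)=r^2$ it equals $4\varepsilon^2$. Any such argument only yields $\Omega(1/\max(f^{-1}(4\varepsilon),\varepsilon))$, and this is unavoidable: with $\min$, the stated lower bound is false. For $f(r)=r^2$, outputting the contrastive point of $x^+$ (or of the smallest negative) reaches expected error $\varepsilon$ with $O(\log(1/\varepsilon)/\sqrt{\varepsilon})$ samples. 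The paper's proof also needs $\max$, through the step $r\ge\varepsilon'\Rightarrow\min(f(r),2r)\ge 2\varepsilon$ in Theorem~\ref{thm:passive-noisy-min-thresh}.

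Your upper bound otherwise follows the paper: a good event on $\theta-x^+$, then $\mathbb{E}[\tilde x^+-\theta\mid x^+]\le f(\theta-x^+)$, then Lemma~\ref{lem:highprob-to-expect}. However, the rule ``take whichever of $x^+,\tilde x^+$ is closer to $\theta$'' is not implementable. Knowing $x^+\le\theta<\tilde x^+$ says nothing about which one is closer. The choice must depend only on $f$ and $\varepsilon$, as in the paper, which uses the contrastive point iff $f(\varepsilon)\le\varepsilon$. Under the literal $\min$ this is moot: on your good event both $r\le\varepsilon$ and $f(r)\le\varepsilon/2$ hold, so your case split is vacuous. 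It is exactly what matters for the meaningful $\max$ version.
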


For homogeneous half-spaces, we first show that the error of the same algorithm introduced in Theorem~\ref{thm:passive-noisy-min-HS} for passive contrastive learning with respect to $\CEnoisy$ can be bounded by a quantity that depends only on the point $x_{i^*}$ closest to the boundary $\ell$ and its contrastive example. This allows us to bound the algorithm's expected error, and in particular implies that the algorithm attains the same expected error under the probabilistic AMDM as under the deterministic one. Finally, we use Lemma~\ref{lem:highprob-to-expect} to convert the high-probability analysis from Theorem~\ref{thm:passive-noisy-min-HS} into bounds in expectation.

\begin{restatable}{theorem}{PassiveProbHS} \label{thm:passive-prob-HS}
    Let $\HHS = \{ \mathbbm 1\{ \langle \omega, \vec x \rangle \geq 0\} \mid \omega \in \reals^k \}$ for $\cX = \mathcal{B}(\vec 0_k, \frac{1}{2})$ be the class of homogeneous half-spaces. Let $f: \reals^{\geq 0} \to \reals^{\geq 0}$ be such that $f(x) \leq x^2$ and $g(x) := \frac{f(x)}{x}$ is non-decreasing and invertible. Then 
    \[
        \mathcal{M}_{\exp, \passive}\left[\HHS, \CEprob\right](\varepsilon)
        \;\in\; O \!\left(\frac{\log(1/\varepsilon)}{g^{-1}\left(\frac{\pi \varepsilon}{2} \right)}\right).
    \]
\end{restatable}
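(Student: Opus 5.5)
We reuse the passive learner from Theorem~\ref{thm:passive-noisy-min-HS}. Given $m$ i.i.d.\ pairs $(\vec x_i,\vec x'_i)$, it selects the index $i^*$ minimizing $\|\vec x_i-\vec x'_i\|$. It then outputs the homogeneous half-space $C^0$ whose boundary is perpendicular to $\vec x'_{i^*}-\vec x_{i^*}$ and which labels $\vec x_{i^*}$ correctly. The plan has three steps. First, bound the error of this output by a quantity depending only on the point closest to $\ell$ and its contrastive example. Second, take the expectation of that quantity under $\CEprob$. Third, handle the event that no sample point lies close to the boundary, using Lemma~\ref{lem:highprob-to-expect}.

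\textbf{Step 1 (deterministic error bound).} Let $j$ be the index of the sample point closest to the boundary, and write $r_j=\dist(\vec x_j,\ell)$. Since $\ell$ is homogeneous, $\vec x^{\proj}$ is the minimizer in $\CEmin(\vec x)$, so $d(\vec x'_i,\vec x_i^{\proj})=:e_i$ is the perturbation. Lemma~\ref{lem:err-dist}(iii) gives $\err(C^0,\ell)\le e_{i^*}/(\pi r_{i^*})$. We then compare $i^*$ with $j$, using $\|\vec x_{i^*}-\vec x'_{i^*}\|\le\|\vec x_j-\vec x'_j\|\le r_j+e_j$ and $\|\vec x_{i^*}-\vec x'_{i^*}\|\ge r_{i^*}$. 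The aim is to show $\err(C^0,\ell)\le \mathrm{const}\cdot e_j/r_j$ whenever $e_j\le r_j$ (automatic in the typical regime because $f(x)\le x^2$). For this we use the cosine form of Lemma~\ref{lem:err-dist}(i): the angle at $\vec x_{i^*}$ is at most $\cos^{-1}(r_{i^*}/\|\vec x_{i^*}-\vec x'_{i^*}\|)\le\cos^{-1}(r_j/(r_j+e_j))$, which is $O(\sqrt{e_j/r_j})$. Since the error of a homogeneous half-space equals angle$/\pi$, the cleanest valid route may be to accept this angle-based bound. If a linear-in-$e_j/r_j$ bound is not attainable, I would check whether the argument of Theorem~\ref{thm:passive-noisy-min-HS} already provides one and mirror it.

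\textbf{Step 2 (expectation over contrastive noise).} Condition on the primary points. Under $\CEprob$, $\mathbb{E}[e_j]\le f(r_j)$, and the bound from Step 1 is linear (or concave, hence Jensen applies) in $e_j$. Therefore $\mathbb{E}[\err\mid \vec x_{1:m}]\lesssim f(r_j)/(\pi r_j)=g(r_j)/\pi$. This is exactly the bound the deterministic AMDM gives, which justifies saying the algorithm attains the same expected error in both models.

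\textbf{Step 3 (primary points).} Take $Z=\min\{1,g(r_j)/\pi\}$. The event $Z>\varepsilon/2$ means $r_j>g^{-1}(\pi\varepsilon/2)$, i.e.\ no sample lands in the slab of half-width $g^{-1}(\pi\varepsilon/2)$ around the hyperplane. That slab has uniform measure at least of order $g^{-1}(\pi\varepsilon/2)$, dimension-free up to constants, as in Theorem~\ref{thm:passive-noisy-min-HS}. So $\Pr[Z>\varepsilon/2]\le\delta$ once $m\gtrsim\log(1/\delta)/g^{-1}(\pi\varepsilon/2)$. Lemma~\ref{lem:highprob-to-expect} with $\gamma=1$ and $\delta=\varepsilon/2$ gives $\mathbb{E}[\err]<\varepsilon/2+\varepsilon/2$, so $m=O(\log(1/\varepsilon)/g^{-1}(\pi\varepsilon/2))$.

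The main obstacle is Step 1. The selection by smallest $\|\vec x_i-\vec x'_i\|$ couples $i^*$ to random contrastive noise, so we must show that the error depends only on the closest point's pair. That comparison argument is where the care goes.
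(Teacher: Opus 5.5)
Your three-step outline is the paper's, but Step~1 has a genuine gap, and more care will not close it. The linear bound you are aiming for is false pointwise. Your comparison argument proves only $\err(C^0,\ell)\le\frac{1}{\pi}\cos^{-1}\!\bigl(r_j/(r_j+e_j)\bigr)\approx\frac{1}{\pi}\sqrt{2e_j/r_j}$, and this order is attained. Push $\vec x'_j$ a distance $e$ straight across the hyperplane, so $\|\vec x_j-\vec x'_j\|=r_j+e$. Give a second point at distance $r_i\in(r_j,r_j+e)$ a contrastive example at (or just across) the hyperplane, with tangential offset $u$ from its projection, where $r_i^2+u^2<(r_j+e)^2$. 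If these are the only points near the hyperplane, the learner selects the second one. Its output has error $\frac{1}{\pi}\tan^{-1}(u/r_i)$, which is of order $\sqrt{e/r_j}$ when $r_i\approx r_j$ and $u\approx\sqrt{2r_je}$. Both draws have positive probability under admissible $\mathcal D_x$. Hence Step~2 does not yield $g(r_j)/\pi$. Jensen applied to your (concave) bound gives only $\mathbb{E}[\err\mid\vec x_{1:m}]\le\frac{1}{\pi}\cos^{-1}\!\bigl(1/(1+g(r_j))\bigr)\approx\frac{1}{\pi}\sqrt{2g(r_j)}$, and Step~3 then gives $O\bigl(\log(1/\varepsilon)/g^{-1}(c\,\varepsilon^{2})\bigr)$, not the stated rate. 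Mirroring Theorem~\ref{thm:passive-noisy-min-HS} does not help. There the selected point satisfies $e_{i^*}\le f(r_{i^*})$ surely; under $\CEprob$ the index $i^*$ is chosen using the contrastive draws, so $\mathbb{E}[e_i]\le f(r_i)$ cannot be applied at $i=i^*$.

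The paper's proof uses the same learner, the same $\cos^{-1}$ chain through the closest point, and the same Lemma~\ref{lem:highprob-to-expect}. It reaches, in your notation, the same quantity $\frac{1}{\pi}\cos^{-1}\!\bigl(r_j/\|\vec x_j-\vec x'_j\|\bigr)$. It then relaxes this to $\frac{1}{\pi}\tan^{-1}\!\bigl(\|\vec x_j-\vec x'_j\|/r_j\bigr)$ and bounds $\mathbb{E}\|\vec x_j-\vec x'_j\|$ by $f(r_j)$. That step confuses $\|\vec x_j-\vec x'_j\|\ge r_j$ with $\|\vec x'_j-\vec x_j^{\proj}\|$, so the paper does not supply the missing step either. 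What is needed is a probabilistic argument that uses independence of the contrastive draws across sample points, rather than a deterministic comparison with the closest point. For instance, apply Lemma~\ref{lem:err-dist}(iii) at the selected point, $\err\le e_{i^*}/(\pi r_{i^*})$, and write $\mathbb{E}[\err\mid\vec x_{1:m}]\le\frac{1}{\pi}\sum_i\mathbb{E}\bigl[e_i\mathbbm{1}\{i^*=i\}\bigr]/r_i$. The term $i=j$ is at most $g(r_j)$. For $i\neq j$, the event $\{i^*=i\}$ forces $r_i\le\|\vec x_i-\vec x'_i\|\le\|\vec x_j-\vec x'_j\|\le r_j+e_j$. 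By independence and Markov's inequality, that term is then at most $g(r_i)\min\{1,f(r_j)/(r_i-r_j)\}$. You would still have to show that this extra sum is $O(\varepsilon)$ in expectation over the uniform primary sample. That averaging, not a pointwise comparison, is the ingredient your proposal is missing.
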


\section{Conclusion}

We studied learning from contrastive examples under the minimum distance paradigm, moving beyond the idealized setting in which the learner always receives an exact closest contrastive point. We introduced and analyzed two perturbed variants of this model, namely the deterministic approximate minimum distance mechanism and its probabilistic counterpart. While learning under the exact minimum distance model can be trivial for the concept classes we study, these perturbations render the problem non-trivial and lead to a richer sample-complexity landscape. 

Despite the added difficulty, both noisy mechanisms still yield substantial improvements over membership query learning. For thresholds, homogeneous half-spaces, and general half-spaces, we established upper and lower bounds showing that contrastive feedback can significantly reduce the number of queries required to learn an accurate classifier.

Beyond these quantitative bounds, our analysis also reveals a qualitative separation between two performance notions in the probabilistic model. Specifically, we observed that the expected number of samples required to \emph{guarantee} error at most $\varepsilon$ can differ markedly from the number of samples sufficient to achieve \emph{expected} error at most $\varepsilon$. This indicates that under probabilistic perturbations, the error after a fixed number of queries can be highly skewed---typically small, but occasionally very large.

\bibliography{ref}
\bibliographystyle{icml2026}
\newpage

\appendix

\section{Active Contrastive Complexity Bounds for Specific Classes}
\label{apdx:different-f}


In this section, we apply the results obtained in Sections~\ref{sec:active-noisy}, \ref{sec:active-prob-expected}, and \ref{sec:active-prob-acc-req} for active contrastive learning with respect to $\CEnoisy$ and $\CEprob$.
We instantiate our bounds for two representative families of noise functions:
(i) $\fpol$ satisfying $\frac{r^{1+c'}}{4} \le \fpol(r) \le \frac{r^{1+c}}{4}$, where $c,c' > 0$, and
(ii) $\fexp$ defined by $\fexp(r) := \frac{e^{-\tfrac{1}{r}}}{4}$.
A summary of these results appears in Table~\ref{tab:active}.

\begin{example} \label{ex:active-noisy-summary-thresh}
    The active contrastive sample complexity of thresholds with respect to the deterministic AMDM is
\[
\begin{aligned}
    \mathcal{M}_{\Pr,\act}\!\left[\thresh,\CEnoisyf{\fpol}\right](\varepsilon,\delta)
    &= \Theta\!\left(\log \log \frac{1}{\varepsilon}\right), \\
    \mathcal{M}_{\Pr,\act}\!\left[\thresh,\CEnoisyf{\fexp}\right](\varepsilon,\delta)
    &= \Theta\!\left(\log^*\frac{1}{\varepsilon}\right).
\end{aligned}
\]
\end{example}
\begin{proof}
Let $\tilde \fpol(r) := \fpol \circ (\fpol + \mathbbm I)^{-1}\!\left(\tfrac{r}{2}\right)$ ($\fpol$ applied to \eqref{eq:active-noisy-min-thresh}).  
Since $\tilde \fpol(r) \le \fpol(r) \le r^{(1 + c)}$, we obtain
\[
    \tilde \fpol^{(n)}(1) \leq \tilde \fpol^{(n - 1)} \left(\frac{1}{4}\right) \leq \left(\tfrac{1}{4}\right)^{(1+c)^{(n - 1)}}.
\]
Hence, using Theorem~\ref{thm:active-noisy-min-thresh} we derive
\[
\begin{aligned}
    \mathcal{M}_{\Pr,\act}\!\left[\thresh,\CEnoisyf{\fpol}\right](\varepsilon,\delta)
    &= \STOP{\tilde \fpol}{1}{2\varepsilon} \\
    &\le \log_4 \log_{(1+c)}(2\varepsilon) + 1
\end{aligned}
\]
On the other hand, since $\fpol \le \mathbbm I$, we have $(\fpol+\mathbbm I)^{-1} \ge \mathbbm I/2,$
and thus
\[
    \tilde \fpol(r) \ge \fpol(r/4)
\]
Specifically, for $r \le \tfrac{1}{4}$ we have $$\tilde \fpol(r) \ge \fpol(r/4) \geq \fpol(r^2) \geq r^{3 + 2c'}.$$
Therefore,
\[
    \tilde \fpol^{(n)}(1) 
    \ge \tilde \fpol^{(n-1)}\!\left(\fpol\!\left(\tfrac{1}{4}\right)\right) 
    \ge \left(\tfrac{1}{4}\right)^{ (3 + 2c')^{n}},
\]
and consequently, by Theorem~\ref{thm:active-noisy-min-thresh},
\[
\begin{aligned}
    \mathcal{M}_{\Pr,\act}\!\left[\thresh,\CEnoisyf{\fpol}\right](\varepsilon,\delta)
    &= \STOP{\tilde \fpol}{1}{2\varepsilon}\\
    &\ge \log_4 \log_{(3+2c')}(2\varepsilon).
\end{aligned}
\]

For $\fexp$, define $\tilde \fexp(r) := \fexp \circ (\fexp + \mathbbm I)^{-1}\!\left(\tfrac{r}{2}\right)$. 
Combining the fact that $\fexp(r/2) \le \tilde \fexp(r) \le \fexp(r)$ with Theorem~\ref{thm:active-noisy-min-thresh} completes the proof. 
\end{proof}


\begin{example} \label{ex:active-noisy-summary-HS}
The active contrastive sample complexity of half-spaces under the deterministic AMDM satisfies
\[
\begin{aligned}
    \mathcal{M}_{\Pr,\act}\!\left[\HS,\CEnoisyf{\fpol}\right](\varepsilon,\delta)
    &= O\!\left(\log \log \frac{1}{\varepsilon} + \log k\right),  \\
    \mathcal{M}_{\Pr,\act}\!\left[\HS,\CEnoisyf{\fpol}\right](\varepsilon,\delta)
    &= \Omega\!\left(\log \log \frac{1}{\varepsilon} - \log k\right),  \\
    \mathcal{M}_{\Pr,\act}\!\left[\HS,\CEnoisyf{\fexp}\right](\varepsilon,\delta)
    &= O\!\left(\log^*\frac{1}{\varepsilon} + \log^*k \right), \\
    \mathcal{M}_{\Pr,\act}\!\left[\HS,\CEnoisyf{\fexp}\right](\varepsilon,\delta)
    &= \Omega\!\left(\log^*\frac{1}{\varepsilon} - \log^*k \right).
\end{aligned}
\]
\end{example}

\begin{proof}
    For the class of half-spaces, all lower bounds follow by combining the corresponding lower bounds for thresholds with Theorem~\ref{thm:active-noisy-min-lower-HS}. For the upper bound with $\fpol$, we apply Theorem~\ref{thm:active-noisy-min-upper-HS} and use  $\fpol(r) \le r^{1+c}$ to obtain

\[
\begin{aligned}
    & \mathcal{M}_{\Pr,\act}\!\left[\HS,\CEnoisyf{\fpol}\right](\varepsilon,\delta) \\
    &\quad =  \STOP{\fpol}{\frac{1}{2}}{\left(\frac{\varepsilon}{2^k}\right)^{\frac{1}{c}}} \\
    &\quad \le \log_2 \log_{(1+c)}\!\left(\left(\frac{2^k}{\varepsilon}\right)^{1/c}\right) \\
    &\quad = \log_2\!\left(  \frac{ \log_{(1 + c)}(2) \left( k + \log_2\!\left(\frac{1}{\varepsilon} \right)\right)}{c}\right) \\
    &\quad = O\!\left(\log \log \frac{1}{\varepsilon} + \log k\right).
\end{aligned} 
\]
For $\fexp$, note that for all $r \le \frac{1}{2}$ we have $\fexp(r) \le r^{1 + c}$ for $c = 1$.
Thus, we can use Theorem~\ref{thm:active-noisy-min-upper-HS} and derive
\[
\begin{aligned}
    \mathcal{M}_{\Pr,\act}\!\left[\HS,\CEnoisyf{\fexp}\right](\varepsilon,\delta)
    &= \STOP{\fexp}{\frac{1}{2}}{\left(\frac{\varepsilon}{2^k}\right)} \\
    &= O \left( \log^* \frac{2^k}{\varepsilon} \right) \\
    & = O \left( \log^* \frac{1}{\varepsilon} + \log^*k \right).
\end{aligned} 
\]

\end{proof}

\begin{example} \label{ex:active-prob-summary-exp}
    The active expected contrastive sample complexity of thresholds with respect to the probabilistic AMDM satisfies
\[
\begin{aligned}
\mathcal{M}_{\exp,\act}\!\left[\thresh,\CEprobf{\fpol}\right](\varepsilon)
& =
\Theta\!\left(\sqrt{\log \frac{1}{\varepsilon}}\right) \\
\mathcal{M}_{\exp,\act}\!\left[\thresh,\CEprobf{\fexp}\right](\varepsilon)
& =
\Theta\!\left(\log \log \frac{1}{\varepsilon}\right).
\end{aligned}
\]
\end{example}

\begin{proof} For $\fpol$, denote $g(x) := \fpol(x) / x$. Then
    $$
        \begin{aligned}
        2^{-\lfloor n/2 \rfloor - 1} \prod_{i = 1}^{\lfloor n/2 \rfloor} g(2^{-i})
        & \leq
        2^{-\lfloor n/2 \rfloor - 1} \prod_{i = 1}^{\lfloor n/2 \rfloor} 2^{-(ci + 2)} \\
        & =
        2^{-\Omega(n^2)},
        \end{aligned}
        $$
        and
        \begin{equation} \label{eq:active-exp-all-res}
            2^{-(2n + 1)} \prod_{i = 1}^{n} g(2^{-i})
            \geq
            2^{-(2n + 1)} \prod_{i = 1}^{n} 2^{-(c'i + 2)}
            =
            2^{-O(n^2)}.
        \end{equation}
        Therefore, by plugging in Theorem~\ref{thm:active-prob-thresh-upper} and Theorem~\ref{thm:active-prob-thresh-lower} we get
        $$\mathcal{M}_{\exp, \act}\left[\thresh, \CEprobf{\fpol}\right](\varepsilon, \delta)
        = \Theta \left(\sqrt{\log \frac{1}{\varepsilon}}\right)$$

        Also, for $g(x) = \fexp(x) / x$ we have
        $$
        \begin{aligned}
            2^{-\lfloor n/2 \rfloor - 1} \prod_{i = 1}^{\lfloor n/2 \rfloor} g(2^{-i})
        & = 2^{-\lfloor n/2 \rfloor - 1} \prod_{i = 1}^{\lfloor n/2 \rfloor} 2^{i + 2} \exp(-2^i)\\
        & = \exp \left(-2^{\Omega(n)} \right)
        \end{aligned}
        $$
        Similarly
        \begin{equation*} 
            2^{-(2n + 1)} \prod_{i = 1}^{n} g(2^{-i})
        =
        \exp \left(-2^{O(n)} \right)
        \end{equation*}
        Therefore, $$\mathcal{M}_{\exp, \act}\left[\thresh, \CEprobf{\fexp} \right](\varepsilon, \delta)
        = \Theta \left(\log \log \frac{1}{\varepsilon}\right).
        $$        

\end{proof}

\begin{example}
    The expected sample requirement for accuracy $\varepsilon$ of thresholds with respect to the probabilistic AMDM is bounded from above by
    $$
    \begin{aligned}
        \mathcal{N}_{\act}[\thresh, \CEprobf{\fpol}](\varepsilon, \delta) & = O \left( \log \log \frac{1}{\varepsilon} \right) \\
         \mathcal{N}_{\act}[\thresh, \CEprobf{\fexp}](\varepsilon, \delta) & = O \left( \log^* \frac{1}{\varepsilon} \right)
    \end{aligned}
    $$
    Moreover, the active contrastive sample complexity of thresholds with respect to the probabilistic AMDM satisfies
    
    $$
     \begin{aligned}
        \mathcal{N}_{\act}[\thresh, \CEprobf{\fpol}](\varepsilon, \delta) & = O \left( \log \log \frac{1}{\varepsilon} +\log \frac{1}{\delta}\right) \\
        \mathcal{M}_{\Pr, \act}[\thresh, \CEprobf{\fpol}](\varepsilon, \delta) & = \Omega \left( \log \log \frac{1}{\varepsilon} + \sqrt{\log \frac{1}{\delta}}\right) \\
         \mathcal{N}_{\act}[\thresh, \CEprobf{\fexp}](\varepsilon, \delta) & = O \left( \log^* \frac{1}{\varepsilon} + \log \frac{1}{\delta} \right) \\
          \mathcal{M}_{\Pr, \act}[\thresh, \CEprobf{\fexp}](\varepsilon, \delta) & = \Omega \left( \log^* \frac{1}{\varepsilon} + \log \log \frac{1}{\delta} \right)
    \end{aligned}
    $$
    
\end{example}
\begin{proof}
    With arguments similar to those used in Example~\ref{ex:active-noisy-summary-thresh}, it is straightforward to see that
    $$\STOP{4\fpol}{1}{2\varepsilon} = O(\log \log \tfrac{1}{\varepsilon})$$
    and  
    $$ 
    \STOP{4\fexp}{1}{2\varepsilon} = O(\log^* \tfrac{1}{\varepsilon}).
    $$ Plugging these into Theorem~\ref{thm:active-prob-thresh-upper} and Corollary~\ref{cor:active-prob-thresh-pr-upper} yields the upper bounds.

    Since learning is harder in the probabilistic AMDM  than in the deterministic AMDM, we can use Example~\ref{ex:active-noisy-summary-thresh} to derive
\begin{equation} \label{eq:active-prob-acc-req-all-1}
    \mathcal{M}_{\Pr, \act}[\thresh, \CEprobf{\fpol}](\varepsilon, \delta) = \Omega \left( \log \log \frac{1}{\varepsilon}\right).
\end{equation}
Denote $g(x) := \fpol(x) / x$. Similar to the argument in \eqref{eq:active-exp-all-res} of Example~\ref{ex:active-prob-summary-exp}, we have
$$
\min \left\{ n \ge 0 \,\middle|\, \prod_{i = 1}^{n} g(4^{-i}) \le \delta \right\} = \Omega\left(\sqrt{\log \frac{1}{\delta}}\right).
$$ 
Plugging this into Corollary~\ref{cor:active-prob-thresh-pr-lower} gives
$$
\begin{aligned}
& \mathcal{M}_{\Pr, \act}[\thresh, \CEprobf{\fpol}](\varepsilon, \delta) \\
& \quad \geq \Omega \left(\min\left(\sqrt{\log \frac{1}{\delta}}, \log \frac{1}{\varepsilon}\right)\right).
\end{aligned}
$$
Combining this with \eqref{eq:active-prob-acc-req-all-1} yields our lower bound for $\mathcal{M}_{\Pr, \act}[\thresh, \CEprobf{\fpol}](\varepsilon, \delta)$. The proof for $\fexp$ is identical.
\end{proof}


    
\begin{example}
    The expected sample requirement for accuracy $\varepsilon$ of half-spaces with respect to the probabilistic AMDM is bounded from above by
    $$
    \begin{aligned}
        \mathcal{N}_{\act}[\HS, \CEprobf{\fpol}](\varepsilon, \delta) & = O \left( \log \log \frac{1}{\varepsilon} + \log k \right) \\
         \mathcal{N}_{\act}[\HS, \CEprobf{\fexp}](\varepsilon, \delta) & = O \left( \log^* \frac{1}{\varepsilon} + \log^* k \right) 
    \end{aligned}
    $$
    Moreover, the active contrastive sample complexity of half-spaces with respect to the probabilistic AMDM satisfies
    \small{
    $$
     \begin{aligned}
        \mathcal{M}_{\Pr, \act}[\HS, \CEprobf{\fpol}](\varepsilon, \delta) & = O \left( \log \log \frac{1}{\varepsilon} + \log \frac{1}{\delta} + \log k\right) \\
        \mathcal{M}_{\Pr, \act}[\HS, \CEprobf{\fpol}](\varepsilon, \delta) & = \Omega \left( \log \log \frac{1}{\varepsilon} - \log k \right) \\
         \mathcal{M}_{\Pr, \act}[\HS, \CEprobf{\fexp}](\varepsilon, \delta) & = O \left( \log^* \frac{1}{\varepsilon} + \log \frac{1}{\delta} + \log^* k \right) \\
          \mathcal{M}_{\Pr, \act}[\HS, \CEprobf{\fexp}](\varepsilon, \delta) & = \Omega \left( \log^* \frac{1}{\varepsilon} - \log^* k \right)
    \end{aligned}
    $$}
\end{example}
\begin{proof}
    Denote $\tilde \fpol = 3 \fpol(2x)$. Again, with arguments similar to those used in Example~\ref{ex:active-noisy-summary-thresh}, it is straightforward to see that
    $$
    \STOP{\tilde \fpol}{\frac{1}{2}}{\left(\frac{\varepsilon}{2^{k - 1}}\right)^{\frac{1}{c}}} = O(\log \log \tfrac{1}{\varepsilon} + \log k)
    $$  
    Similarly denote $\tilde \fexp = 3 \fexp(2x)$, then 
    $$\STOP{\tilde \fexp }{\frac{1}{2}}{\left(\frac{\varepsilon}{2^{k - 1}}\right)^{\frac{1}{c}}} = O(\log^* \tfrac{1}{\varepsilon} + \log^* k)$$ 
    Plugging these into Theorem~\ref{thm:active-prob-upper-HS} and Corollary~\ref{cor:active-prob-HS-pr-upper} yields the upper bounds. 
The lower bounds follow immediately from Example~\ref{ex:active-noisy-summary-HS}.   
\end{proof}


\section{Missing Proofs from Section~\ref{sec:active-noisy}} \label{apdx:active-noisy}
\ActiveNoisyThresh*
\begin{proof}
Denote $m = \STOP{\tilde f}{1}{2\varepsilon}$.

\textbf{Upper bound:}  
For a time $t$, suppose the valid positions of $\theta$ span an interval $I_t = [a, b] \subseteq [0,1]$. 
We first show that there exists an $x_t$ such that for all possible valid contrastive examples $x'_t$ for $x_t$, the valid positions for $\theta$ will span an interval $I_{t + 1}$ such that  
\[
|I_{t + 1}| \leq \tilde f\!\left(|I_t|\right).
\]  
Proving this completes the argument: after $m$ queries, every valid position will span an interval $I_m$ of length at most $2\varepsilon$. Consequently, the threshold corresponding to the center of $I_m$ is guaranteed to have error less than $\varepsilon$ with probability 1.

To prove the claim, let $t \geq 1$ and choose $x_t = \tfrac{a+b}{2}$. Without loss of generality, suppose the contrastive example $x'_t$ received by the learner satisfies $x'_t > x_t$. Since the labels of $x'_t$ and $x_t$ are different, $\theta$ always lies between $x_t$ and $x'_t$. Therefore $|x'_t - \theta| + |x_t - \theta| = |x'_t - x_t|$. Moreover, $|x'_t - \theta| \leq f\big(|x_t - \theta|\big).$  
Thus,  
$f^{-1} \big(|x_t' - \theta|\big) + |x'_t - \theta| \leq |x_t - x'_t|$. It follows that  
$$|x'_t - \theta| \leq (f^{-1} + \mathbbm I)^{-1}\!\big(|x_t - x'_t|\big) = f\cdot (f + \mathbbm I)^{-1}\!\big(|x_t - x'_t|\big) $$
Hence, the refined interval is  
\[
I_{t + 1} = \Big[x'_t - f \circ (f+ \mathbbm I)^{-1}\!\big(|x_t - x'_t|\big), \; x'_t\Big] \cap I_t.
\]  
The size of $I_{t + 1}$ is maximized when $x'_t = b$, yielding  
\[
|I_{t + 1}| \leq f \circ (f+\mathbbm I)^{-1}\!\left(\tfrac{b-a}{2}\right) = \tilde f\!\left(|I_t|\right).
\]

\textbf{Lower bound:}  
At time $t$, suppose the valid positions of $\theta$ span an interval $I_t = [a, b] \subseteq [0,1]$.  
We first show that for every $x_t \in \cX$ there exists a valid contrastive example $x'_t$ for $x_t$ such that the valid positions for $\theta$ will span an interval $I_{t+1}$ satisfying  
\[
|I_{t+1}| \;\geq\; \tilde f\!\left(|I_t|\right).
\]  
Proving this completes the argument: after $m$ queries, the valid positions will span an interval $I_m$ of length greater than $2\varepsilon$. Consequently,  
\[
\Pr\!\left[ \mathbbm{1}\{x \leq a\} \neq \mathbbm{1}\{x \leq b\} \right] > 2\varepsilon,
\]  
which implies that every function will have error more than $\varepsilon$ with respect to some $ \ell \in \left\{\mathbbm{1}\{x \leq a\}, \mathbbm{1}\{x \leq b\} \right\}$. 

\medskip

To prove the claim, without loss of generality, suppose $x_t \leq (a+b)/2$.  
Assume the contrastive example is given by $x'_t = b$.  
By a reasoning similar to the upper bound argument, the refined interval is  
\[
I_{t+1} = \Big[b - f \circ (f+\mathbbm I)^{-1}\!\big(b - x_t\big), \; b\Big],
\]  
which yields  $|I_{t+1}| \;\geq\; \tilde f\!\left(|I_t|\right)$.
\end{proof}

\ErrDist*
\begin{proof} 
    Consider all $\vec x'$ with fixed distance from $\vec x^{\proj}$. For such $\vec x'$, the angle
$\angle(\vec x',\vec x,\vec x^{\proj})$ is maximized when $\vec x' - \vec x^{\proj}$ is orthogonal to $\vec x - \vec x^{\proj}$ (see Figure~\ref{fig:circ_max} for an illustration). 
Thus, $\angle(\vec x',\vec x,\vec x^{\proj})$ is always bounded from above by
\begin{equation} \label{eq:err-dist}
    \tan^{-1}\!\left(\frac{\|\vec x' - \vec x^{\proj}\|}{\|\vec x - \vec x^{\proj}\|}\right) 
    \leq \frac{\|\vec x' - \vec x^{\proj}\|}{\|\vec x - \vec x^{\proj}\|}.
\end{equation}

For $\vec x'$ with fixed distance from $\vec x$, the angle $\angle(\vec x',\vec x,\vec x^{\proj})$ is also maximized in the same situation. Therefore, it is also bounded from above by
\begin{equation*} 
    \cos^{-1}\!\left(\frac{\|\vec x - \vec x^{\proj}\|}{\|\vec x - \vec x'\|}\right).
\end{equation*}
This completes the proof of the first part.

Next, observe that the angles between $C^0$ and $C^*$ and between $C$ and $C^*$ both equal $\angle(\vec x', \vec x, \vec x^{\proj})$. Therefore, from \eqref{eq:err-dist} we derive
\[
    \err(C^0, C^*) \leq \frac{\|\vec x' - \vec x^{\proj}\|}{\pi \|\vec x - \vec x^{\proj}\|},
\]
which completes the third part of the lemma.

It remains to prove the second part of the lemma. Let $C'$ denote the half-space parallel to $C^*$ that passes through $\vec x'$ and satisfies $C'(\vec x) = C^*(\vec x)$ (see Figure~\ref{fig:circ_general} for an illustration of $C'$).
Note that the mass of the region between $C'$ and $C^*$ inside $\cX$ is at most $\|\vec x' - \vec x^{\proj}\|$. Therefore,
\begin{equation} \label{eq:err-dist-1}
\begin{aligned}
     \err(C', C^*) 
     & \leq \frac{\|\vec x' - \vec x^{\proj}\|}{\VOL(\cX)} 
     \leq 2^k \|\vec x' - \vec x^{\proj}\| \\
     & \leq \frac{2^{k-1} \|\vec x' - \vec x^{\proj}\|}{\|\vec x - \vec x^{\proj}\|}.
\end{aligned}
\end{equation}
Now let $B' = \mathcal{B}(\vec x', 1) \supseteq \cX$. Since $\err(C, C')$ is the measure of the region between $C$ and $C'$ inside $\cX$, normalized by $\VOL(\cX)$, it is at most the corresponding region inside $B'$, normalized by $\VOL(\cX)$. 
Moreover, the region between $C$ and $C'$ in $B'$ is simply the angle between $C$ and $C'$ (which is $\angle(\vec x', \vec x, \vec x^{\proj})$) multiplied by $\frac{\VOL(B')}{\pi}$. Hence, by \eqref{eq:err-dist} we have
\begin{equation} \label{eq:err-dist-2}
\begin{aligned}
     \err(C, C') 
     & \leq \frac{\|\vec x' - \vec x^{\proj}\|\,\VOL(B')}{\|\vec x - \vec x^{\proj}\|\,\VOL(\cX)\,\pi} \\
     & = \frac{2^k \|\vec x' - \vec x^{\proj}\|}{\|\vec x - \vec x^{\proj}\|\,\pi} 
     \leq \frac{2^{k-1} \|\vec x' - \vec x^{\proj}\|}{\|\vec x - \vec x^{\proj}\|}.
\end{aligned}  
\end{equation}
Combining \eqref{eq:err-dist-1} and \eqref{eq:err-dist-2} completes the proof.
\end{proof}

\ActiveProbHHS*
\begin{proof}
    Choose any point $\vec x$ with $\|\vec x\|=r$, where $r := \left(\pi\varepsilon\delta\right)^{1/c}$, and let $\vec x'$ be its contrastive example.
Then
\[
\frac{f(r)}{r} \le \frac{r^{1+c}}{r} = r^c = \pi\varepsilon\delta.
\]
Even under the probabilistic AMDM we have 
$\mathbb{E}\big[\|\vec x' - \vec x^{\proj}\|\big] \le f(r)$,
and hence by Markov's inequality,
\[
\Pr\!\left[\|\vec x' - \vec x^{\proj}\| \ge \frac{f(r)}{\delta}\right] \le \delta.
\]
Therefore, with probability at least $1-\delta$,
\[
\frac{\|\vec x' - \vec x^{\proj}\|}{\|\vec x - \vec x^{\proj}\|}
\;\le\;
\frac{f(r)/\delta}{r}
\;\le\;
\pi\varepsilon.
\]
Plugging this into Lemma~\ref{lem:err-dist} (iii) completes the proof. 
\end{proof}

\ActiveNoisyLowerHS*
\begin{proof}
    Set $\varepsilon' = 2^{k}\varepsilon$. For any $p \in \left[-\tfrac{1}{4}, \tfrac{1}{4}\right]$ and $\vec x \in \reals^k$, define  
$C_p(\vec x) := \mathbbm{1}\{\vec x[0] \leq p\}$.  
Define the subclass
\[
\cC' := \left\{ C_p \;\middle|\; p \in \left[-\tfrac{1}{4}, \tfrac{1}{4}\right] \right\},
\]
and let $\ell$ be a concept in $\cC'$.

Since the version space depends only on the $0$-th coordinate of the samples, the class $\cC'$ effectively reduces to threshold functions on the interval $\left[-\tfrac{1}{4}, \tfrac{1}{4}\right]$. By Theorem~\ref{thm:active-noisy-min-thresh}, there exists a deterministic AMDM oracle such that, regardless of the learner’s queries, after
\[
\mathcal{M}_{\Pr, \act}[\thresh, \CE^{d, f}_{\mathrm{noisy\text{-}min}}](\varepsilon', \delta)
\]
queries there exist $a, b \in \left[-\tfrac{1}{4}, \tfrac{1}{4}\right]$ with $b - a \geq \varepsilon'$ (since the length of $\left[-\tfrac{1}{4}, \tfrac{1}{4}\right]$ is $\tfrac{1}{2}$, the bounds scale accordingly by a factor of $\tfrac{1}{2}$) such that the subset $\{ C_p \mid p \in [a, b]\}$ remains a valid version space for $\ell$. 

Observe that the rectangle
\(
[a,b] \times \left[-\tfrac{1}{4}, \tfrac{1}{4}\right]^{k-1}
\)
is contained in the disagreement region between $C_a$ and $C_b$ over $\cX$. Therefore,
\[
\Pr\!\left[ C_a(\vec x) \neq C_b(\vec x) \right] 
> \frac{\varepsilon'}{2^{k - 1} \VOL(\cX)} 
\geq 2\varepsilon.
\]

Consequently, 
by an argument analogous to that of Theorem~\ref{thm:active-noisy-min-thresh}, 
every learning algorithm incurs error greater than $\varepsilon$ 
for at least one of the concepts in $\{C_a, C_b\}$.
\end{proof}

\section{Missing Proofs from Section~\ref{sec:passive-noisy}}

\PassiveNoisyThresh*
\begin{proof}
    \textbf{Upper bound:}  
    Without loss of generality, assume there exists a $x_i$ with $\ell(x_i) = 1$. Let $x^*$ denote the maximum such $x_i$. If $f(\varepsilon) \leq \varepsilon$, the learner outputs the threshold corresponding to the contrastive example for $x^*$. Otherwise, it outputs the threshold corresponding to $x^*$.  

    Define $\varepsilon' = \min(f^{-1}(\varepsilon), \varepsilon)$. For 
    \(
        m = \frac{\log(1/\delta)}{\varepsilon'},
    \)
     the probability that after $m$ random samples we have $x^* < \theta - \varepsilon'$ is at most
    \[
        (1 - \varepsilon')^m \;\leq\; e^{-\varepsilon' m} \;=\; \delta.
    \]
    By the definition of contrastive examples, this completes the proof of the upper bound.

    \textbf{Lower bound:}  
   Denote
    $$
    x^- = \min_{i: \ell(x_i) = 0} x_i, \quad x^+ = \max_{i: \ell(x_i) = 1} x_i\,.
    $$
Define 
\(
    r := \tfrac{x^- - x^+}{2}.
\)
Suppose the contrastive example for every negatively labeled point is 
\(
    \tilde x^- := \tfrac{x^- + x^+ - f(r)}{2},
\)
and for every positively labeled point is 
\(
    \tilde x^+ := \tfrac{x^- + x^+ + f(r)}{2}.
\)

We first show that with this choice of contrastive examples all points in the interval 
\(
    I \;=\; \Big[\max(x^+, \tilde x^-), \; \min(x^-, \tilde x^+)\Big]
\)
are valid locations for $\theta$. Denote $r' = |I|$. Notice that if $\frac{f(r)}{2} \leq r$, then $r' = 2r$; otherwise, $r' = f(r)$. Therefore, \(
    r' = \min\big(f(r), 2r\big).
\)

Note that, by definition, if $\tilde x^+$ is a valid contrastive example for $x^+$, it must also be a valid contrastive example for \emph{every}\/ positively labeled point. Therefore, the contrastive examples for negative points restrict the location of valid $\theta$ to
\[
\begin{aligned}
    I^+ := \left[\max \!\left( x^+, \tilde x^+ - f\left(\tilde x^+ - x^+ \right)\right), \; \tilde x^+\right] 
\end{aligned}
\]
Observe that
$$
\tilde x^+ - f\left(\tilde x^+ - x^+ \right) =  \tilde x^+ - f\left(r + \frac{f(r)}{2}\right) \leq \tilde x^+ - f(r) = \tilde x^-
$$
This implies $\Big[\max(x^+, \tilde x^-), \; \tilde x^+\Big] \subseteq I^+$.

Similarly, the contrastive examples for negative points restrict the location of valid $\theta$ to a superset of
\[
    \Big[\tilde x^-, \; \min(x^-, \tilde x^+)\Big].
\]
This completes the proof of the claim.

Consequently, 
by an argument analogous to that of Theorem~\ref{thm:active-noisy-min-thresh}, 
every learning algorithm incurs error at least $\frac{r'}{2}$.


    Note if $r \geq \varepsilon'$ for $\varepsilon' := \min\!\left(f^{-1}(2\varepsilon), \varepsilon \right)$ we would have $r' \geq 2\varepsilon$. Thus it remains to show that after
    \(
        m \leq \frac{\log(1 / \delta)}{4 \varepsilon'}
    \)
    queries, the probability that $r > \varepsilon'$ would be at least $\delta$. Without loss of generality, suppose $\theta > 1/2$. Note that the probability of $r > \varepsilon'$ is at least the probability of $|x^+ - \theta| > 2\varepsilon'$, which for $\varepsilon' < 1/4$ is bounded from below by
    \[
        (1 - 2\varepsilon')^m \;\geq\; e^{-4 \varepsilon' m} \;\geq\; \delta.
    \]
    This completes the proof.
\end{proof}

\PassiveNoisyHS*
\begin{proof}
   Let 
   $$
   \vec x^* = \argmin_{x_i} \|x_i - x'_i\|
   $$
   be the primary example with minimal distance to its contrastive example, and $\tilde{\vec x}^*$ be the contrastive example for $\vec x^*$. Our algorithm simply outputs the homogeneous half space $C$ perpendicular to $\tilde{\vec x}^* - \vec x^*$.  
    
Set $\varepsilon' := \tfrac{g^{-1}(\pi \varepsilon)}{2}$. Note that for any $a \in (0, 0.5)$, the points at distance $a$ from $\ell$ form a $(k-1)$-dimensional sphere whose radius decreases as $a$ increases. Thus, the probability of a point sampled from $\mathcal{U}[\cX]$ being at most $\varepsilon'$ away from $\ell$ is at most $2 \varepsilon'$.  

Therefore, for $m = \tfrac{\ln(1 / \delta)}{2 \varepsilon'}$, the probability of $\min_{x \in S} \dist(x, \ell) > \varepsilon'$ is at most  
$$(1 - 2\varepsilon')^m \leq e^{-2\varepsilon' m} \leq \delta.$$  

Thus, with probability at least $1 - \delta$, we have $\|\tilde{\vec x}^* - \vec x^*\| \leq \varepsilon' + f(\varepsilon') \leq 2 \varepsilon'$.
Due to Lemma~\ref{lem:err-dist}~(iii), with probability at least $1 - \delta$,
$$
\err(C, \ell) \leq \frac{f(2 \varepsilon')}{ 2 \pi \varepsilon'} = \varepsilon.
$$
\end{proof}

\section{Missing Proofs from Section~\ref{sec:active-prob-expected}}

\begin{restatable}{lemma}{multconvincreas}\label{lem:mult-conv-increas}
Let $f, g:\reals \rightarrow \reals$ be convex and non-decreasing functions such that $f, g \geq 0$. Then $f \cdot g$ is also convex.
\end{restatable}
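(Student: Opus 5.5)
The natural route is the midpoint-slope characterization of convexity. We want to show that for all $x<z<y$, the product $h := f\cdot g$ satisfies
\[
h(z) \le \tfrac{y-z}{y-x}\,h(x) + \tfrac{z-x}{y-x}\,h(y).
\]
Write $\lambda := \tfrac{z-x}{y-x}\in(0,1)$. Convexity of $f$ and $g$ gives
\[
0 \le f(z) \le (1-\lambda) f(x) + \lambda f(y),
\]
and the same bound for $g$. Nonnegativity of $f$ and $g$ then lets us multiply the two inequalities, since all quantities involved are nonnegative. This yields
\[
h(z) \le \bigl((1-\lambda) f(x) + \lambda f(y)\bigr)\bigl((1-\lambda) g(x) + \lambda g(y)\bigr).
\]

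Next, expand the right-hand side and compare it with the target $(1-\lambda) f(x)g(x) + \lambda f(y) g(y)$. The difference, target minus product, is
\[
\lambda(1-\lambda)\bigl(f(x)g(x) + f(y)g(y) - f(x)g(y) - f(y)g(x)\bigr) = \lambda(1-\lambda)\bigl(f(y)-f(x)\bigr)\bigl(g(y)-g(x)\bigr).
\]
This is exactly where monotonicity enters. Since $f$ and $g$ are both non-decreasing and $x<y$, each factor $f(y)-f(x)$ and $g(y)-g(x)$ is nonnegative, so the difference is $\ge 0$. This is a Chebyshev-type rearrangement inequality, and it gives $h(z)\le (1-\lambda)h(x)+\lambda h(y)$.

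The only delicate point is the multiplication step. It needs $f(z),g(z)\ge 0$ and the upper bounds to be nonnegative, which follows from $f,g\ge 0$. There is no real obstacle beyond checking signs.

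It is also worth noting that "similarly ordered" (both non-decreasing) is what makes the cross term nonnegative. The same argument would go through if both functions were non-increasing, but it fails for mixed monotonicity.
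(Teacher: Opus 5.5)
Your proof is correct and matches the paper's argument. Both multiply the two convexity bounds (using $f,g\ge 0$), expand, and identify the gap to the target as $\lambda(1-\lambda)\bigl(f(y)-f(x)\bigr)\bigl(g(y)-g(x)\bigr)$, which is nonnegative because $f$ and $g$ are both non-decreasing.
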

\begin{proof}
    Note that for all $x, y \in \reals$ and $a \in [0, 1]$ we have
    $$
    \begin{aligned}
    & f.g\left(ax + (1-a)y \right) \\
    &\stackrel{(i)}{\leq} \left(a f(x) + (1-a) f(y)\right)\cdot \left(a g(x) + (1-a) g(y)\right) \\
    & = a^2 f(x) g(x) + (1 - a)^2 f(y) g(y) \\
    & \quad \quad + a (1 - a)  (f(x) g(y) + f(y).g(x)) \\
    & = a f(x)g(x) + (1 - a) f(y)g(y) + a (1 - a) \big(f(x) g(y) \\
    & \quad \quad  + f(y)g(x) - f(x)g(x) - f(y)g(y)\big) \\
    & = a f(x) g(x) + (1 - a) f(y) g(y) \\
    & \quad \quad - a (1 - a) \big (g(y) - g(x)\big) \big(f(y) - f(x) \big) \\
    & \stackrel{(ii)}{\leq} a f.g(x) + (1 - a) f.g(y)
    \end{aligned}
    $$
    Here (i) holds due to the convexity of $f$ and $g$ and the fact that $f, g \geq 0$, and (ii) is due to the non-decreasing property of $f$ and $g$.
\end{proof}

\convrv*
\begin{proof}
    Note that for all $x \in [a, b]$, we can re write $x$ as
    $$
    x = \frac{x - a}{b - a} b + \frac{b - x}{b - a} a.
    $$
    Due to the convexity of $f$, this implies
    $$
    f(x) = f\left(\frac{x - a}{b - a} b + \frac{b - x}{b - a} a\right)  \leq \frac{x - a}{b - a} f(b) + \frac{b - x}{b - a} f(a)
    $$
    Therefore,
    $$
    \begin{aligned}
        \mathbb{E} [f(Z)] & \leq \mathbb{E} \left[ \frac{Z - a}{b - a} f(b) + \frac{b - Z}{b - a} f(a) \right]\\
        & = \frac{\mathbb{E}[Z] - a}{b - a} f(b) + \frac{b - \mathbb{E}[Z] }{b - a} f(a) \\
    \end{aligned}
    $$
    This completes the proof.
\end{proof}
\ActiveProbThreshUpper*
\begin{proof}
Before proceeding to the proof, we first show that for any $x, y \geq 0$. $f(x + y) \geq f(x) + f(y)$. Note that since $g$ is non-decreasing we have $\frac{f(x + y)}{x + y} \geq \frac{f(x)}{x}$. Thus, 
\begin{equation} \label{eq:sup-add-1}
    \frac{x \cdot f(x + y)}{x + y} \geq f(x)
\end{equation}
Similarly,
\begin{equation} \label{eq:sup-add-2}
    \frac{y \cdot f(x + y)}{x + y} \geq f(y)
\end{equation}
By adding \eqref{eq:sup-add-1} to \eqref{eq:sup-add-2} we derive 
\begin{equation} \label{eq:sup-add-3}
    f(x + y) \geq f(x) + f(y)
\end{equation}

Let $m$ be an arbitrary number. Starting with $I_0 = [0,1]$, in each subroutine $t \in [m']$ (where $m' = \lfloor m/2 \rfloor$), suppose $\theta \in I_t = [a_t, b_t]$. The learner queries $x_{2t-1} = \frac{a_t + b_t}{2}$. Without loss of generality, suppose the label of $x_{2t-1}$ is $1$, and the learner receives a contrastive example $x'_{2t-1}$. Then the learner queries $x_{2t} = b_t$ and receives $x'_{2t}$.

Therefore, given a fixed $x'_{2t - 1}$ and $x'_{2t}$ we have
\[
\begin{aligned}
    \mathbb{E}[x'_{2t-1} - x'_{2t}] & \leq f(\theta - x_{2t-1}) + f(x_{2t} - \theta) \\
    & \stackrel{\eqref{eq:sup-add-3}}{\le} f(x_{2t} - x_{2t - 1}) = f\!\left(\frac{r_t}{2}\right),
\end{aligned}
\]
where $r_t = b_t - a_t$. Hence, after this round the next interval is 
\[
I_{t+1} = [\min(x'_{2t}, x_{2t-1}), \max(x'_{2t-1}, x_{2t})],
\]
so that $r_{t+1} \in [0, r_t/2]$ and
\begin{equation}\label{eq:t-to-t+1}
\mathbb{E}[r_{t+1} \mid r_t] \le f \left(\frac{r_t}{2}\right)
\end{equation}
If the learner outputs the midpoint of $I_{m'}$, its expected error is $\frac{1}{2}\mathbb{E}[r_{m'}]$.

We now prove by induction that for all $t \leq m'$ we have
\[
\mathbb{E}[r_{m'}] \le \mathbb{E}\!\left[ \frac{r_{m'-t}}{2^t} \cdot \prod_{i=1}^{t} g\!\left(\frac{r_{m'-t}}{2^i}\right) \right].
\]
\textit{Base case.} For $t=1$:
\[
\begin{aligned}
    \mathbb{E}[r_{m'}] 
& = \mathbb{E}\!\left[\mathbb{E}[r_{m'} \mid r_{m'-1}]\right] 
\stackrel{\eqref{eq:t-to-t+1}}{\le} \mathbb{E}\!\left[f \!\left(\frac{r_{m'-1}}{2}\right)\right] \\
 & = 
\mathbb{E}\!\left[\frac{r_{m'-1}}{2} \cdot g \!\left(\frac{r_{m'-1}}{2}\right)\right].
\end{aligned}
\]
\textit{Inductive step.} Assume the claim holds for $t$ and prove it for $t+1$:
\begin{equation}\label{eq:ind-step}
\begin{aligned}
& \mathbb{E}\!\left[\frac{r_{m'-t}}{2^t}\cdot
\prod_{i=1}^{t} g\!\left(\frac{r_{m'-t}}{2^i}\right)\right] \\
& \quad = \mathbb{E}\!\left[\mathbb{E}\!\left[
 \frac{r_{m'-t}}{2^t} \cdot
\prod_{i=1}^{t} g\!\left(\frac{r_{m'-t}}{2^i}\right)
\mid r_{m'-t-1}\right]\right].
\end{aligned}
\end{equation}

Using Lemma~\ref{lem:mult-conv-increas} we have $h(x) :=  \frac{x}{2^t} \cdot \prod_{i=1}^{t} g\!\left(\frac{x}{2^i}\right)$ is a convex function. 
Using Lemma~\ref{lem:conv-rv} this indicates that 
\[
\begin{aligned}
& \mathbb{E}\!\left[
h(r_{m' - t}) 
\mid r_{m'-t-1}\right] \\
& \quad \le
\frac{2\mathbb{E}\!\left[r_{m' - t} \mid r_{m'-t-1}\right] }{r_{m' - t - 1}} h \left(\frac{r_{m' - t - 1}}{2}\right) \\
& \quad \le
\frac{2 f \left(\frac{r_{m' - t - 1}}{2}\right)}{r_{m' - t - 1}} h \left(\frac{r_{m' - t - 1}}{2}\right) \\
& \quad = g\!\left(\frac{r_{m'-t-1}}{2}\right) 
\frac{r_{m'-t-1}}{2^{t+1}}
\prod_{i=1}^{t} g\!\left(\frac{r_{m'-t-1}}{2^{i+1}}\right) \\
& \quad = \frac{r_{m'-t-1}}{2^{t+1}}
\prod_{i=1}^{t + 1} g\!\left(\frac{r_{m'-t-1}}{2^{i}}\right)
\end{aligned}
\]
This completes the induction step. Setting $t = m'$ yields
\[
\frac{\mathbb{E}[r_{m'}]}{2}
\le 2^{-m' - 1} \prod_{i=1}^{m'} g(2^{-i}),
\]
which completes the proof.
\end{proof}

\ActiveProbThreshLower*
\begin{proof}
Let $I_0 = [0,1]$. At each step $t$, suppose $I_t = [a_t,b_t]$ contains $\theta$, with length $r_t=b_t-a_t$. Without loss of generality, let the learner's query  be $x_t \le (a_t+b_t)/2$, and define the oracle’s response distribution as
\[
x'_t = 
\begin{cases}
b_t, & \text{with prob. } g(r_t/4),\\
\theta, & \text{otherwise.}
\end{cases}
\]
where $\theta \in \left[\frac{a_t + 3b_t}{4}, b_t\right]$. Then for all such $\theta$,
\[
\begin{aligned}
\mathbb{E}[|x'_t - \theta|]
&= g\left( \frac{r_t}{4} \right)\,|b_t-\theta|  \le g \left( \frac{r_t}{4} \right) \frac{r_t}{4} \\
& \le f\left( \frac{r_t}{4} \right)
\le f(|x_t - \theta|),
\end{aligned}
\]
so this oracle distribution is valid. Define $I_{t+1} = [\theta]$ if $x'_t=\theta$, and 
$I_{t+1}=\left[\frac{a_t + 3b_t}{4}, b_t\right]$ otherwise. Let $I^*_m$ denote the interval after $m$ rounds where $\theta$ was not yet returned. If $h$ is the learner’s output, let
\[
\theta = \argmax_{z \in \{a_m, b_m\}} \Pr[h(x) \ne \mathbbm{1}\{x \le z\}].
\]
In those scenarios, the learner’s error is at least $\frac{|I^*_m|}{2} = \frac{1}{2\cdot 4^m}$, and the probability that $|I_m|>0$ is $\prod_{i=1}^m g(4^{-i})$. Therefore, the expected error of any learner is at least
\[
2^{-(2m+1)} \prod_{i=1}^m g(4^{-i}),
\]
which completes the proof.
\end{proof}

\ActiveProbThreshPrLower*
\begin{proof}
    Observe that Theorem~\ref{thm:active-prob-thresh-lower} also implies that no learner can achieve error of $\frac{1}{2^{-(2m + 1)}}$ with probability more than $\prod_{i=1}^m g(4^{-i})$, which completes the proof.
\end{proof}

\section{Missing proofs from Section~\ref{sec:active-prob-acc-req}}
\ActiveProbThreshAccReq*
\begin{proof}
    Starting from $I_0 = [0, 1]$, let $I_t = [a_t, b_t]$ denote the interval of valid positions for $\theta$ after $t$ queries. Suppose $x'_{a_t}$ is the contrastive example for $a_t$ and $x'_{b_t}$ is the contrastive example for $b_t$. Then $\mathbb{E}[x'_{a_t} - x'_{b_t}] \leq f(\theta - a_t) + f(b_t - \theta) \leq 2f(b_t - a_t)$. By Markov’s inequality, $\Pr\!\left(x'_{a_t} - x'_{b_t} > 4f(b_t - a_t)\right) \leq \tfrac{1}{2}$.

    Suppose the learner queries $a_t$ and $b_t$ until $x'_{a_t} - x'_{b_t} \leq 4f(b_t - a_t)$. Let $M_t$ be the random variable representing the number of times the learner needs to do this. Then for $t' = t + 2M_t$, we have $|I_{t'}| \leq 4f(|I_t|)$. Continue this process until $|I_t| \leq 2 \varepsilon$, at which point the learner outputs the midpoint of the interval, guaranteeing an error less than $\varepsilon$. 

    Note that $M_t$ is dominated by a geometric random variable with parameter $1/2$. This implies $\mathbb{E}[M_t] = 2$, which immediately yields $\mathcal{N}_{\act} \left(\thresh, \CEprob, \varepsilon\right) \leq 4 \STOP{4f}{1}{2\varepsilon}.$
\end{proof}

\begin{lemma}[Multiplicative Chernoff bounds \cite{motwani1996randomized}] \label{lem:mult-chern-bound} Let $X_1, \ldots, X_m$ be independent random variables drawn according to some distribution $\mathcal{D}$ with mean $p$ and support included in $[0,1]$. Then, for any $\gamma \in\left[0, \frac{1}{p}-1\right]$, the following inequality holds for $\widehat{p}=\frac{1}{m} \sum_{i=1}^m X_i$:

$$
\begin{aligned}
& \mathbb{P}[\widehat{p} \geq(1+\gamma) p] \leq e^{-\frac{m p \gamma^2}{3}} \\
& \mathbb{P}[\widehat{p} \leq(1-\gamma) p] \leq e^{-\frac{m p \gamma^2}{2}}
\end{aligned}
$$
\end{lemma}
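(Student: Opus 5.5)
This is the classical multiplicative Chernoff bound, and the paper simply cites \cite{motwani1996randomized}. If I had to reprove it, I would use the exponential-moment method. Write $S=\sum_{i=1}^m X_i$, so that $\widehat p=S/m$ and $\mathbb{E}[S]=mp$.

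\textbf{Step 1: bound the moment generating function of a single variable.} For any $\lambda\in\reals$ and $x\in[0,1]$, convexity of $x\mapsto e^{\lambda x}$ gives $e^{\lambda x}\le 1+x(e^{\lambda}-1)$; this is the same chord argument as in Lemma~\ref{lem:conv-rv} with $a=0$, $b=1$. Taking expectations and using $1+u\le e^u$,
\[
\mathbb{E}[e^{\lambda X_i}]\le 1+p(e^\lambda-1)\le \exp\bigl(p(e^\lambda-1)\bigr).
\]
By independence, $\mathbb{E}[e^{\lambda S}]\le \exp\bigl(mp(e^\lambda-1)\bigr)$.

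\textbf{Step 2: the upper tail.} For $\lambda>0$, Markov's inequality gives
\[
\mathbb{P}[S\ge(1+\gamma)mp]\le \exp\bigl(mp(e^\lambda-1)-\lambda(1+\gamma)mp\bigr).
\]
Choosing $\lambda=\ln(1+\gamma)$ yields the bound $\bigl(e^{\gamma}/(1+\gamma)^{1+\gamma}\bigr)^{mp}$. It then remains to show
\[
\phi(\gamma):=(1+\gamma)\ln(1+\gamma)-\gamma\ \ge\ \gamma^2/3 .
\]
For $\gamma\in[0,1]$ this is a calculus check. Both sides vanish at $0$, and $\phi'(\gamma)=\ln(1+\gamma)$. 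The comparison reduces to $\ln(1+\gamma)\ge 2\gamma/3$ on $[0,1]$, which holds because $\ln(1+\gamma)$ is concave, so it lies above its chord from $0$ to $1$, and that chord is $\gamma\ln 2\ge 2\gamma/3$.

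\textbf{Step 3: the lower tail.} For $\lambda=\ln(1-\gamma)<0$ with $\gamma\in(0,1)$, the same computation applied to $-S$ gives the bound $\exp\bigl(-mp\,\psi(\gamma)\bigr)$, where
\[
\psi(\gamma):=\gamma+(1-\gamma)\ln(1-\gamma).
\]
We have $\psi(0)=0$ and $\psi'(\gamma)=-\ln(1-\gamma)\ge\gamma$, so $\psi(\gamma)\ge\gamma^2/2$ on all of $[0,1)$. The case $\gamma=1$ follows by continuity. For $\gamma>1$ the event $\widehat p\le(1-\gamma)p<0$ is empty, so the bound is trivial.

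\textbf{Main obstacle.} The delicate point is the upper tail when $\gamma>1$. The lemma allows $\gamma$ up to $1/p-1$, but the inequality $\phi(\gamma)\ge\gamma^2/3$ fails for large $\gamma$, since $\phi$ grows only like $\gamma\ln\gamma$. My first attempt would be to restrict the statement (or its later use) to $\gamma\le1$. Failing that, I would replace the exponent by the Bernstein-type bound $\phi(\gamma)\ge \gamma^2/(2+2\gamma/3)$, which is valid for all $\gamma\ge0$. I would then check that every later application of Lemma~\ref{lem:mult-chern-bound} in the paper uses $\gamma\le1$, where the constant $1/3$ is correct.
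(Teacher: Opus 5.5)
The paper gives no proof of this lemma; it only cites it. Your exponential-moment argument is the standard proof, and every step you carry out is correct. This covers the chord bound $\mathbb{E}[e^{\lambda X_i}]\le 1+p(e^{\lambda}-1)$, the choices $\lambda=\ln(1\pm\gamma)$, and the comparisons $\phi'(\gamma)=\ln(1+\gamma)\ge 2\gamma/3$ on $[0,1]$ and $\psi'(\gamma)=-\ln(1-\gamma)\ge\gamma$ on $[0,1)$. Your handling of $\gamma\ge 1$ in the lower tail is also fine. This proves the lower tail on the whole stated range and the upper tail for $\gamma\in[0,1]$.

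The obstacle you flag is not a limitation of your method. The upper-tail inequality is false on the full range $\gamma\in[0,\frac1p-1]$, so no argument can close it. Take $m=1$, let $X_1$ be a Bernoulli$(p)$ variable, and set $\gamma=\frac1p-1$. Then $\mathbb{P}[\widehat p\ge(1+\gamma)p]=\mathbb{P}[X_1=1]=p$, while the claimed bound is $e^{-(1-p)^2/(3p)}$. For $p=1/10$ this bound is $e^{-2.7}\approx 0.067<0.1$, and the discrepancy blows up as $p\to 0$. So the statement itself must be corrected, and this is necessary rather than a fallback. Valid options are to restrict the upper tail to $\gamma\le 1$, or to $\gamma\le 3/2$, which is exactly how far your Bernstein-type bound $\phi(\gamma)\ge \gamma^2/(2+2\gamma/3)$ delivers the constant $1/3$. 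Alternatively, keep the exponent $mp\,\phi(\gamma)$.

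Your check of the applications is right, and the correction costs the paper nothing. Corollaries~\ref{cor:active-prob-thresh-pr-upper} and~\ref{cor:active-prob-HS-pr-upper} use only the lower tail with $\gamma=1/2$. There, however, the indicators $A_t$ are neither independent nor identically distributed; they only satisfy $\mathbb{E}[A_t\mid\text{past}]\ge 1/2$. Your Step 1, applied conditionally with $\lambda<0$, gives $\mathbb{E}[e^{\lambda A_t}\mid\text{past}]\le 1+\tfrac12(e^{\lambda}-1)$. By iterated conditioning, your lower-tail proof therefore covers this adapted setting, which the lemma as literally stated does not.
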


\activeProbThreshPrUpper* 
\begin{proof}
We follow the notation of Theorem~\ref{thm:active-prob-thresh-acc-req}.
At each time step $t$, let $I_t = [a_t,b_t]$ denote the interval of valid locations
for $\theta$.
Define the indicator random variable $A_t$ by
\[
A_t := \mathbbm{1}\!\left\{
x'_{a_t} - x'_{b_t} \leq 4f(b_t-a_t)
\right\}.
\]
By the argument in the proof of Theorem~\ref{thm:active-prob-thresh-acc-req},
we have $\mathbb{E}[A_t] \geq \tfrac{1}{2}$.

Whenever $A_t = 1$, querying $a_t$ and $b_t$ results in an update satisfying
\[
|I_{t+2}| \leq 4f(|I_t|).
\]
Moreover, the sequence $(|I_t|)_{t\ge 0}$ is non-increasing.
Therefore, if among $A_{1}, A_{3}, ..., A_{2 \lfloor \frac{m + 1}{2} \rfloor - 1} $ there are at least
\(\STOP{4f}{1}{2\varepsilon}
\)
 ones, then after $m$ queries we
necessarily have $|I_m| \leq 2\varepsilon$, and hence the learner’s output has error
at most $\varepsilon$.

Now consider the sum
\[
S_m := \sum_{t=1}^{\lfloor (m+1)/2 \rfloor} A_{2t-1}.
\]
Applying the multiplicative Chernoff bound (Lemma~\ref{lem:mult-chern-bound})
with $\gamma = 1/2$, for $m \geq 8 \STOP{4f}{1}{2\varepsilon} + 16 \ln \left(\frac{1}{\delta} \right)$ we obtain
$$
Pr \left [S_m < \STOP{4f}{1}{2\varepsilon}\right] \leq e^{-\frac{m}{16}  } \leq \delta
$$
With probability at least $1-\delta$, the learner thus achieves error at most
$\varepsilon$ after $m$ queries, completing the proof.
\end{proof}

\HSProbHelper*
\begin{proof}
    Let $x'$ be the contrastive example for $x$. Without loss of generality, assume $\ell(x) = 1$. Choose a point $z$ in the line fragment between $x$ and $x'$ such that 
    $\|z - x\| = (\mathbbm I + 2f)^{-1}(\|x - x'\|)$. This implies
    $$\|z - x'\| = 2f(\|z - x\|).$$
    If the label of $z$ is $0$, discard $x'$ and query $x$ for another contrastive example. Define $z^*$ as the first $z$ with label $1$, and let $M$ denote the random variable representing the number of repetitions of this process.

    Since $\ell(z^*) = \ell(x)$ we have $\dist(z^*, \ell) \leq \dist(x, \ell)$ which completes property (i); it remains to show property (ii) and to bound $\mathbb{E}[M]$. Using Markov’s inequality, with probability at least $1/2$ we have $\dist(x', \ell) \leq 2f(r)$. In that case, the label of $z$ is guaranteed to be $1$. Thus, in each iteration, with probability $1/2$ the label of $z$ is $1$, implying $\mathbb{E}[M] = 2$.

    Finally, we establish property (ii) for $z^*$. By Markov’s inequality, the probability that $\|x - x'\| > 2r$ is at most $\tfrac{f(r)}{r} \leq \tfrac{1}{4}$. Hence, given $\ell(z^*) = 1$, the probability that $\|x - x'\| > 2r$ is at most $1/2$. Observe that $x'$ and $x$ lie on opposite sides of $\ell$. Therefore, as long as $\|x - x'\| \leq 2r$, we have
    \begin{equation} \label{eq:HS-prob-helper}
    \begin{aligned}
        \dist(z^*, \ell) & \leq \frac{\|z^* - x'\|}{\|z^* - x\|} r  \\
        & = \frac{2f(\|z^* - x\|)}{\|z^* - x\|} r \\
        & \leq \frac{2f(\|x' - x\|)}{\|z^* - x\|} r \\
        & \leq 2f(2r) \frac{r}{\|z^* - x\|}
    \end{aligned}
    \end{equation}
    Next note that since $f \leq \frac{\mathbb I}{4}$ we have
    $$
    \|z^* - x\| = (\mathbb I + 2f)^{-1} ( \|x - x'\|) \geq \frac{2 \|x - x'\|}{3} \geq \frac{2r}{3}.
    $$
    Plugging this into \eqref{eq:HS-prob-helper} completes the proof.
    
\end{proof}

\ActiveProbUpperHS*
\begin{proof}
   The learner divides the training process into multiple sub-phases. By initializing $z_0 = \mathbf{0}$, in each subroutine $t = 1, 2, \dots$, the learner follows the procedure introduced in Lemma~\ref{lem:HS-prob-helper} with $x$ set to $z_{t - 1}$, and defines $z_{t}$ as the returned $z^*$ of the procedure. 
   
   Denote $z'_{t}$ as the contrastive example corresponding to $z_t$. For each $t$, define $C_t$ to be the halfspace whose boundary hyperplane is perpendicular to $z'_t - z_t$ and passes through $z'_t$. Suppose the learner’s output is always $C_t$ for the most recent subroutine $t$.
   
   For every $z_t$ denote $z^{\proj}_t$ as the projection of $z_t$ into $\ell$. Let $T$ be the random variable representing the first subroutine such that 
   $$
   \frac{2^k \|z'_T - z^{\proj}_T\|}{ \dist(z_T, \ell)} \leq \varepsilon.
   $$
   By Lemma~\ref{lem:err-dist} (ii), this implies $\err(C_T, \ell) \leq \varepsilon$. We now prove that $\mathbb{E}[T] \leq 2\STOP{\tilde f}{\frac{1}{2}}{\varepsilon'} + 2$. Combining this bound with Lemma~\ref{lem:HS-prob-helper} completes the proof.
   
    For every $t$, let $M_t$ denote the random variable $t' - t$, where $t'$ is the first time such that $\dist(z_{t'}, \ell) \leq \tilde f(\dist(z_{t' - 1}, \ell))$. Since $M_t$ is stochastically dominated by a geometric random variable with parameter $1/2$, we have $\mathbb{E}[M_t] \leq 2$. 
    Let $T'$ represent the first subroutine satisfying $\dist(z_{T'}, \ell) \leq \varepsilon'$. Then, $\mathbb{E}[T'] \leq 2\STOP{\tilde f}{\frac{1}{2}}{\varepsilon'}$.
    
     Finally, for each $t \geq T'$, due to Markov's inequality we have 
     \begin{equation} \label{eq:active-prob-HS-upper}
         \Pr\left[\|z_t^{\proj} - z'_t\| > 2f(\dist(z_t, \ell))\right] \leq \tfrac{1}{2}.
     \end{equation}
     Consequently, in expectation, with two additional subroutines after $T'$ the learner reaches a $z_T$ such that
     $$
     \begin{aligned}
         \frac{2^{k} \|z^{\proj}_T - z'_T\|}{\dist(z_T, \ell)} & \leq   \frac{2^{k+1} f(\dist(z_T, \ell))}{\dist(z_T, \ell)}  \leq 2^{k - 1} \dist(z_T, \ell)^{c} \\
         &
         \leq 2^{k - 1} {\varepsilon'}^c = \varepsilon.
     \end{aligned}
     $$
\end{proof}

\activeProbHSPrUpper* 
\begin{proof}
    We first show that with at most $T = 4 \STOP{\tilde f}{\frac{1}{2}}{\varepsilon'} + 17 \log_2 \frac{3}{\delta}$ subroutines as defined in Theorem~\ref{thm:active-prob-upper-HS} the learner will have error less than $\varepsilon$ with probability at least $\frac{2 \delta}{3}$.

    Define the indicator random variable $A_t$ as
    $$A_t := \mathbbm{1} \{ \dist(z_{t}, \ell) \leq \tilde f(\dist(z_{t - 1}, \ell)) \}$$
    for $z_t$ defined in Theorem~\ref{thm:active-prob-upper-HS}. Denote $T' = 4 \STOP{\tilde f}{\frac{1}{2}}{\varepsilon'} + 16 \ln \frac{3}{\delta}$. Note that whenever among $A_1, ..., A_{T'}$ at least $\STOP{\tilde f}{\frac{1}{2}}{\varepsilon'}$ number of ones, then we necessarily must have
    $$
    \dist(z_{T'}, \ell) \leq \varepsilon'\,.
    $$

    Consider the sum $S^1_{T'} := \sum_{t = 1}^{T'} A_t$. In Theorem~\ref{thm:active-prob-upper-HS} we showed $\mathbb{E}[A_t] = \frac{1}{2}$. Therefore, using  multiplicative Chernoff bounds (Lemma~\ref{lem:mult-chern-bound}) with $\gamma = \frac{1}{2}$ we derive
    $$
    \Pr\left[S^1_{T'} < \STOP{\tilde f}{\frac{1}{2}}{\varepsilon'}\right] \leq e^{-\frac{T'}{16}} \leq \frac{\delta}{3}\,.
    $$

    Moreover, since for all $t$ with probability $\frac{1}{2}$ we have
    $$
    \|z^{\proj}_t - z'_t\| \leq 2f(\dist(z_t, \ell))\,.
    $$
    Then
    $$
    \begin{aligned}
        \Pr\left[ \forall t \in [T' + 1, T]: [\|z^{\proj}_t - z'_t\| > 2f(\dist(z_t, \ell))\right] \\
        < \frac{1}{2^{T - T'}} < \frac{\delta}{3}\,.
    \end{aligned}
    $$
    Thus, using Theorem~\ref{thm:active-prob-upper-HS} with probability at least $\frac{2\delta}{3}$, the learner will have error less than $\varepsilon$.

    It remains to show that with probability $1 - \frac{\delta}{3}$ with $8T$ queries the number of subroutines will be at most $T$.  Note that the number of subroutines is the number of times $z$ defined in Lemma~\ref{lem:HS-prob-helper} has label 1 in $8T$ queries. 

    Denoting $K$ as the number of subroutines, and using multiplicative Chernoff bounds (Lemma~\ref{lem:mult-chern-bound}) with $\gamma = \frac{1}{2}$ we derive
    $$
    \Pr[K \leq T] \leq e^{\frac{-4T}{16}} \leq \frac{\delta}{3}\,,
    $$
    which completes the proof.
\end{proof}

\section{Missing proofs from Section~\ref{sec:passive-prob}}

\PassiveProbThresh*
\begin{proof}
    \textbf{Lower bound:} 
    Theorem~\ref{thm:passive-noisy-min-thresh} implies that for any $\varepsilon < \min\!\left( \frac{f(\tfrac{1}{4})}{4}, \tfrac{1}{8}\right)$, if 
    $m \leq \frac{1}{4\min(f^{-1}(4\varepsilon), 2\varepsilon)}$,
    then any learner interacting with the deterministic AMDM incurs error at least $2\varepsilon$ with probability $1/2$. Thus, its expected error is at least $\varepsilon$. Since learning with the probabilistic AMDM is harder, the same lower bound also applies to $\CEprob$.

    \textbf{Upper bound:}
    Let $\varepsilon' = \min(f^{-1}(\varepsilon / 2), \varepsilon / 2)$. As in Theorem~\ref{thm:passive-noisy-min-thresh}, let $x^*$ be the largest positively labeled $x_i$ (assuming without loss of generality that it exists), and let $\tilde x^*$ be its contrastive example. If $f(\varepsilon) \leq \varepsilon$, the learner outputs the threshold corresponding to the contrastive example of $x^*$; otherwise, it outputs the threshold corresponding to $x^*$.

    We showed in Theorem~\ref{thm:passive-noisy-min-thresh} that with probability at least $1 - \varepsilon/2$, for 
    $m \geq \frac{\log_2 (2 / \varepsilon)}{\varepsilon'}$ we have $x^* \geq \theta-\varepsilon'$. Therefore,
    $$
        \mathbb{E}[\tilde x^* - \theta \mid x^*] \leq f(\varepsilon')
    $$
    Thus, given a fixed set of primary examples, the learner makes an expected error of at most $\varepsilon/2$ with probability at least $1 - \varepsilon/2$. Finally, Lemma~\ref{lem:highprob-to-expect} implies that the learner's expected error is at most $\varepsilon$.
\end{proof}

\PassiveProbHS*
\begin{proof}
For any $x_i$ and $x'_i$, define $C_i$ as the homogeneous half-space perpendicular to $x'_i - x_i$. Let
$i^* := \argmin_i \dist(x_i, \ell)$
and
$\hat i := \argmin_i \|x_i - x'_i\|$.
Our output hypothesis is $\hat C = C_{\hat i}$ (the same algorithm as in Theorem~\ref{thm:passive-noisy-min-HS}).

Lemma~\ref{lem:err-dist} (i) implies that
\begin{equation}\label{eq:passive-prob-HS-1}
    \begin{aligned}
    \err(\hat C, \ell)  
    & \leq \frac{1}{\pi} \cos^{-1} \left( \frac{ \dist(x_{\hat i}, \ell)}{\|x_{\hat i} - x'_{\hat i}\|} \right)  \\
    & \stackrel{(i)}{\leq} \frac{1}{\pi} 
    \cos^{-1} \left( \frac{ \dist(x_{i^*}, \ell)}{\|x_{\hat i} - x'_{\hat i}\|} \right) \\
    & \stackrel{(ii)}{\leq} \frac{1}{\pi}\cos^{-1} \left( \frac{ \dist(x_{i^*}, \ell)}{\|x_{i^*} - x'_{i^*}\|} \right),
    \end{aligned}
\end{equation}
where (i) follows from the definition of $i^*$, and (ii) follows from the definition of $\hat i$.
Let $x^{\proj}_{i^*}$ be the projection of $x_{i^*}$ onto $\ell$. Since the angle $\angle(x'_{i^*}, x^{\proj}_{i^*}, x_{i^*})$ is at least $\pi/2$, we have
\begin{equation}\label{eq:passive-prob-HS-2}
\begin{aligned}
    & \cos^{-1} \left( \frac{ \dist(x_{i^*}, \ell)}{\|x_{i^*} - x'_{i^*}\|} \right) \\
    &\quad \leq \cos^{-1} \left( \frac{ \dist(x_{i^*}, \ell)}{\sqrt{\dist(x_{i^*}, \ell)^2 + \|x_{i^*} - x'_{i^*}\|^2}} \right)  \\
    &\quad = \tan^{-1} \left( \frac{\|x_{i^*} - x'_{i^*}\|}{\dist(x_{i^*}, \ell)} \right) \leq  \frac{\|x_{i^*} - x'_{i^*}\|}{\dist(x_{i^*}, \ell)} .
\end{aligned}
\end{equation}
Combining \eqref{eq:passive-prob-HS-1} and \eqref{eq:passive-prob-HS-2} yields
$$
\err(\hat C, \ell)\leq \frac{\|x_{i^*} - x'_{i^*}\|}{\pi \dist(x_{i^*}, \ell)} .
$$

We now bound the expectation of $\err(\hat C, \ell)$ conditioned on a fixed set of primary examples. In particular,
\begin{equation} \label{eq:passive-prob-HS-3}
\begin{aligned}
    \mathbb{E}[\err(\hat C, \ell) \mid x_1, \ldots, x_m]
    & \leq  \frac{\mathbb{E}[\|x_{i^*} - x'_{i^*}\| \mid x_{i^*}]}{\pi\,\dist(x_{i^*}, \ell)} \\
    & \leq \frac{f(\dist(x_{i^*}, \ell))}{\pi\, \dist(x_{i^*}, \ell)} \\
    &= \frac{g(\dist(x_{i^*}, \ell))}{\pi}.
\end{aligned}
\end{equation}

Similar to the argument in Theorem~\ref{thm:passive-noisy-min-HS}, for
$m \geq \frac{\ln(2 / \varepsilon)}{2 g^{-1}\left(\frac{\pi \varepsilon}{2}\right)}$
we have, with probability at least $1 - \varepsilon / 2$,
$$
\dist(x_{i^*}, \ell) \leq g^{-1}\left(\frac{\pi \varepsilon}{2}\right).
$$
Combining this with \eqref{eq:passive-prob-HS-3} and Lemma~\ref{lem:highprob-to-expect} gives
$$
\mathbb{E}[\err(\hat C, \ell)] \leq \varepsilon,
$$
which completes the proof.
\end{proof}
    
\end{document}